\documentclass{article}
    
    \PassOptionsToPackage{numbers, compress}{natbib}
    
    \usepackage[preprint]{neurips_2026}

    \usepackage[utf8]{inputenc} 
    \usepackage[T1]{fontenc}    
    \usepackage{hyperref}       
    \usepackage{url}            
    \usepackage{booktabs}       
    \usepackage{amsfonts}       
    \usepackage{nicefrac}       
    \usepackage{microtype}      
    \usepackage{xcolor}         
    
    \usepackage[most]{tcolorbox}
    
    \usepackage{amsmath}
    \usepackage{amsthm}
    
    \usepackage{tikz}
    \usepackage{subcaption}
    \usetikzlibrary{decorations.pathreplacing, arrows.meta}
    
    \theoremstyle{plain}
    \newtheorem{theorem}{Theorem}[section]
    \newtheorem{lemma}[theorem]{Lemma}
    \newtheorem{corollary}[theorem]{Corollary}
    
    \theoremstyle{definition}
    \newtheorem{definition}{Definition}[section]
    
    \theoremstyle{remark}
    \newtheorem{remark}{Remark}[section]
    
    \newtheorem{assumption}{Assumption}

    \newtcolorbox{mainthmbox}{
      enhanced,
      breakable,
      colback=blue!8,
      colframe=blue!35!black,
      boxrule=0.4pt,
      arc=2pt,
      left=5pt,
      right=5pt,
      top=5pt,
      bottom=5pt
    }
    
    \newtcolorbox{definitionbox}{
      enhanced,
      breakable,
      colback=pink!25,
      colframe=pink!60!black,
      boxrule=0.4pt,
      arc=2pt,
      left=5pt,
      right=5pt,
      top=5pt,
      bottom=5pt
    }

    \setcitestyle{authoryear,round}
    
    \title{Effective Context in Transformers: An Analysis of Fragmentation and Tokenization}

    \author{%
      Amirmehdi J. Fesharaki \\
      Department of Communications and Electronics\\
      Institut Polytechnique de Paris\\
      \texttt{amirmehdi.jafari@ip-paris.fr}
      \And 
      Mohammadamin Rami \\
      Department of Communications and Electronics\\
      Institut Polytechnique de Paris\\
      \texttt{mohammadamin.rami@ip-paris.fr}
      \And
      Aslan Tchamkerten \\
      Department of Communications and Electronics\\
      Institut Polytechnique de Paris\\
      \texttt{aslan.tchamkerten@telecom-paris.fr}}
    
\begin{document}
    
    \maketitle

\begin{abstract}
Transformers predict over a representation of a sequence. The same data can
be written as bytes, characters, or subword tokens, and these representations
may be lossless. Yet, under a fixed context window, they need not expose the
same information to the model. This raises a basic question: how does the
choice of representation change what a finite-context predictor can achieve?

We study this question on Markov sources and uncover two complementary
phenomena. First, we observe that moving to smaller representation units can
hurt prediction even when the context window is enlarged to cover the relevant
source history. To explain this, we introduce fragmentation: a lossless
recoding that replaces each source symbol by several smaller units. We prove
that fragmentation can strictly increase the optimal finite-context log-loss,
showing that the gap is not merely an optimization or capacity issue, but can
be intrinsic to the representation. This gives a theoretical account of the
finite-context gap observed in byte- and character-level models such as ByT5
and CANINE relative to subword-tokenized models. Second, we study the
opposite direction: greedy tokenization---BPE, WordPiece, and related
methods---which groups source symbols into larger units. We show that
tokenization can make a short token window behave like a longer source-context
window, and we give a loss guarantee describing when this is achievable. The
guarantee depends on how reliably token windows span the needed source
history, together with the compression rate of the tokenizer. This also yields
a simple diagnostic for real tokenizers: measuring how much source context a
fixed token window reliably contains. Together, the two directions establish a finite-context information-theoretic framework for reasoning about representation choices in Transformers.
\end{abstract}

    \section{Introduction}
    Sequence prediction is at the core of modern language modeling, and the
    Transformer architecture has proven remarkably effective for this task
    \citep{vaswaniAttentionAllYou,devlinBERTPretrainingDeep2019,
    brownLanguageModelsAre,touvronLLaMAOpenEfficient2023}. 
    Most analyses of these models focus on architecture, optimization, scaling, or
    data. Yet there is a more basic choice made before training begins: how the
    sequence is represented. The same text can be represented as bytes, characters,
    words, or subword tokens. These encodings may be lossless, but a fixed model
    window over them can cover very different amounts of the original sequence. Thus,
    even lossless representations can lead to different finite-context prediction
    problems.

    This tension appears directly in modern language modeling. Token-free models
    such as ByT5 and CANINE avoid a fixed subword tokenizer by operating on bytes or
    characters, but they must process longer sequences and face a more expensive
    context problem~\citep{xueByT5TokenFreeFuture2022,
    clarkCaninePretrainingEfficient2022}. Other approaches reduce this cost using
    latent subword blocks or multiscale byte-level modeling~\citep{
    tayCharformerFastCharacter2021,yuMEGABYTEPredictingMillionbyte2023}. At the
    same time, most mainstream language models still use a fixed discrete tokenizer
    learned before model training, often based on subword or byte-level variants of
    BPE, WordPiece, or unigram language-model tokenization
    \citep{sennrichNeuralMachineTranslation2016,
    schusterJapaneseKoreanVoice2012,devlinBERTPretrainingDeep2019, kudoSentencePieceSimpleLanguage2018a}. This creates a basic representation tradeoff: token-free models avoid committing
    to a fixed tokenizer, but they operate on longer sequences; tokenized models
    shorten the sequence, but they introduce a preprocessing choice which changes the prediction problem.
    
    To study this tradeoff cleanly, we focus on finite-context prediction on
    stationary finite-order Markov sources. In this setting, the relevant history is
    explicit: once the context reaches the Markov order, the optimal finite-context
    loss reaches the entropy rate. This lets us isolate how the representation
    changes what is available inside a fixed window. We ask:
    
    
    \begin{center}
    \textbf{How does representation shape what a finite-context predictor can achieve?}
    \end{center}
    
    We show that this question has two sides. Moving to smaller representation units
    can make prediction harder, even when the recoding is lossless. Moving to larger
    token units can make the same finite window carry more information about the
    original source. The paper formalizes both effects and identifies the quantities
    that determine when they arise. In short, our contributions are:
    
    \begin{itemize}
        \item We uncover a \emph{fragmentation effect} motivated by
    Figure~\ref{fig:fragmentation_transformer}: when a source is recoded into
    sub-symbol units, a transformer can converge to a strictly worse
    loss even after its window is enlarged to cover the relevant source history.
    This mirrors the prediction problem faced by byte- or character-level models.
    We formalize the effect as a lossless recoding into smaller units and prove that
    the resulting excess loss can be intrinsic to the representation.
    
    \item We develop a finite-context analysis of greedy tokenization. We show
        that if a token window spans enough source symbols, then a token-level
        predictor can match the performance of a longer source-context predictor
        with no asymptotic loss. We then extend this to typical token windows,
        giving an achievability bound controlled by the lower tail of the
        source-span distribution and the tokenizer compression rate.
    
    \item We identify the effective source context of a tokenizer as a
    measurable quantity for finite-context prediction. The theory lets us compare
    tokenizers by the guarantees they provide: for a fixed token window, which
    tokenizer can reliably match a source-level predictor with the largest possible
    context, up to small excess loss? This criterion depends on the lower tail of
    the token-window source-span distribution, not on average token length alone.
    \end{itemize}

    \subsection{Related Work}
    
    Beyond the standard tokenizer algorithms cited above, recent work studies
    tokenization as a modeling and efficiency choice.~\cite{goldmanUnpackingTokenizationEvaluating2024} connect tokenizer
    compression to downstream model performance, while Tao et
    al.~\citep{taoScalingLawsVocabulary} study how vocabulary size should scale
    with model size and compute. Other work highlights limitations of fixed subword
    tokenization, including robustness failures and sensitivity to token
    structure~\citep{chaiTokenizationFallingShort2024}, as well as broader effects
    on meaning, reasoning, and multilingual efficiency
    ~\citep{haslettTokenizationChangesMeaning2025,
    zhangTokenizationConstraintsLLMs2025}. Our results are complementary:
    rather than evaluating tokenizer choices empirically, we characterize how a
    lossless representation changes the best achievable finite-context loss.
    
    The closest theoretical line studies Transformers and related sequence models on
    Markov sources~\citep{makkuvaAttentionMarkovCurious2024,
    rajaramanAnalysisTokenizationTransformers2024,
    rajaramanTransformersMarkovData2024a,
    edelmanEvolutionStatisticalInduction2024,
    bondaschiMarkovLaplaceHow2025}. These works use Markov data as a controlled
    setting to reveal what sequence models learn in context. In particular, ~\citep{makkuvaAttentionMarkovCurious2024} show that single-layer
    Transformers can behave like unigram predictors on higher-order Markov chains,
    while ~\citep{rajaramanAnalysisTokenizationTransformers2024}
    show that tokenization can make a unigram-style predictor near-optimal under
    suitable learned dictionaries. Our work stays in the Transformer finite-context
    view, but shifts the question from model behavior to representation: how does a
    lossless recoding change the best loss achievable by any fixed-window predictor?
    
    Finally, our greedy tokenizer model is related to dictionary parsing and
    compression, including Lempel--Ziv and LZW methods
    ~\citep{zivUniversalAlgorithmSequential1977,
    zivCompressionIndividualSequences1978,welchTechniqueHighPerformanceData1984},
    and recent compression/coding views of tokenization
    ~\citep{zouharTokenizationNoiselessChannel2023,
    schmidtTokenizationMoreCompression2024}. Both settings trade vocabulary size for
    longer emitted units; we translate this tradeoff into an achievability guarantee
    based on the source-span distribution of token windows.
    
    \section{Background}
    \label{sec:background}
    Throughout the paper, $\{Y_i\}_{i\in\mathbb{Z}}$ is an ergodic stationary finite order-$k$ Markov process defined over a
    finite alphabet $\mathcal{Y}$.  Recall that a process $\{Y_i\}_{i\in\mathbb{Z}}$ is $k$th-order Markov if,
    for every $i$,
    $
        \Pr(Y_i \mid Y_{-\infty}^{i-1})
        =
        \Pr(Y_i \mid Y_{i-k}^{i-1})
        \quad \text{almost surely}
         \label{eq:kth-order-markov}
    $. We shall also encounter processes that are not stationary but retain a
    periodic form of stationarity.  A process $\{X_i\}_{i\in\mathbb{Z}}$ is
    \emph{cyclostationary with period $M$} if its finite-dimensional
    distributions are invariant under shifts by multiples of $M$:
    $
        (X_{i_1},\ldots,X_{i_m})
        \overset{d}{=}
        (X_{i_1+M},\ldots,X_{i_m+M})
        \label{eq:cyclostationarity}
    $
    for all $m$ and all indices $i_1,\ldots,i_m$.  Such processes arise when
    each symbol of a stationary source is represented by a block of $M$
    subsymbols.
    
    \subsection{Finite-Context Predictors}
    We study predictors that, at time $i$,
    observe only a finite suffix of the past and assign a probability to the
    next symbol. For an integer $w\geq 0$, a \emph{$w$-context predictor} is a collection
    of probability distributions on $\mathcal{Y}$, one for each context
    $c\in\mathcal{Y}^w$.  Equivalently, it is a map
    $
        q:\mathcal{Y}\times\mathcal{Y}^w \to [0,1], \; \text{satisfying} \;\sum_{y\in\mathcal{Y}} q(y\mid c)=1,
       \; c\in\mathcal{Y}^w .
    $
    We denote the class of all such predictors by
    $\mathcal{Q}_w^{\mathcal{Y}}$. Given $q\in\mathcal{Q}_w^{\mathcal{Y}}$, its population log-loss on the
    source $\{Y_i\}$ is
    $
        L_Y(q)
        :=
        \mathbb{E}
        \left[
            -\log_2 q\!\left(Y_0 \mid Y_{-w}^{-1}\right)
        \right].
        \label{eq:population-log-loss}
    $ The minimum loss with context length $w$ is then
    \begin{equation}
        \mathcal{L}_Y(w)
        :=
        \min_{q\in\mathcal{Q}_w^{\mathcal{Y}}} L_Y(q).
        \label{eq:optimal-population-loss}
    \end{equation}
    For completeness, recall that if $q$ has finite expected log-loss, then Birkhoff’s
    ergodic theorem (see, {\it{e.g.}}, \cite{coverElementsInformationTheory2005}) yields
    $
        \frac{1}{n}
        \ell(q;Y^n)
        \overset{n\to \infty}{\longrightarrow}
        L_Y(q)
    \; \text{a.s.,}
        \label{eq:empirical-to-population-loss}
        \; \text{where} 
        \;\ell(q;Y^n) :=  
        \sum_{i=w+1}^{n}
            -\log_2 q\!\left(Y_i \mid Y_{i-w}^{i-1}\right),
    $
    {\it i.e.}, the population loss is the almost-sure asymptotic normalized
    log-loss along ``typical''  realizations.
    
    Specializing to our Markovian source, the optimal $w$-context loss is
    $
        \mathcal{L}_Y(w)
        =
        H(Y_0\mid Y_{-w}^{-1}),
        \label{eq:optimal-loss-entropy}
    $
    which is a non-increasing function of $w$---since conditioning reduces entropy.
    In particular, once the context length reaches the Markov order, {\it{i.e.}}, $w \ge k$, the minimum loss becomes 
    $
        \mathcal{L}_Y(w)
        =
        H(Y_0\mid Y_{-k}^{-1}).
        \label{eq:markov-loss-saturation}
    $
    
    \paragraph{Notation.} All logarithms are base $2$, unless specified otherwise. For any alphabet $\mathcal{Y}$, the set $\mathcal{Y}^*$ is defined as $\mathcal{Y}^* = \bigcup_{n \ge 0} \mathcal{Y}^n$.

    \section{Can Lossless Recoding Hurt Finite-Context Prediction?}\label{sec:fragmentation}
    We first examine the effect of refining the representation.  We call this
    operation \emph{fragmentation}: each source symbol is replaced, losslessly,
    by a short string over a smaller alphabet.  This idealizes the passage from
    words or characters to bytes or bits.
    
    Fragmentation preserves the source sequence, but not necessarily the
    information available at finite context.  A finite-context predictor sees
    only a fixed window in the chosen representation.  After fragmentation, this
    window may cut through source-symbol boundaries, obscure the phase of the
    next fragment within its source symbol, and fail to contain the source
    history available before fragmentation.  Thus a globally lossless refinement
    can nevertheless increase finite-context prediction loss.
    
    \paragraph{Fragmentation can hurt even with sufficient context.}
    Figure~\ref{fig:fragmentation_transformer} gives a first indication of the
    effect.  We generate a stationary Markov source of order $k$ and compare two
    predictors with the same architecture.  One predicts the source symbols
    directly from a context of length $w$.  The other predicts a fragmented
    representation: each source symbol is replaced by a block of $M$ smaller
    symbols, and the context length is enlarged to $Mw$.
    
    Note that we choose $w>k$.  Hence the source-level predictor sees the entire Markov
    state.  The fragmented predictor, in turn, sees $Mw$ fragments, the nominal
    span of $w$ source symbols.  On this accounting, the two predictors should
    have access to the same amount of relevant history, and one might expect the
    same limiting loss, measured in bits per source symbol.
    
    They do not.  The fragmented predictor converges to a strictly larger loss.
    The gap persists after training has stabilized, and therefore is not merely
    an optimization transient or an artifact of insufficient capacity.  It is a
    property of the representation.  Fragmentation changes the finite-context
    prediction problem: even when the window is long enough in source-symbol
    units, it may not reveal the phase of the next fragment nor align with the
    source history used by the unfragmented predictor.  The remainder of this
    section explains this gap.
    \begin{figure}[tbp]
    \centering
    \begin{minipage}[t]{0.49\linewidth}
        \centering
        \includegraphics[width=\linewidth]{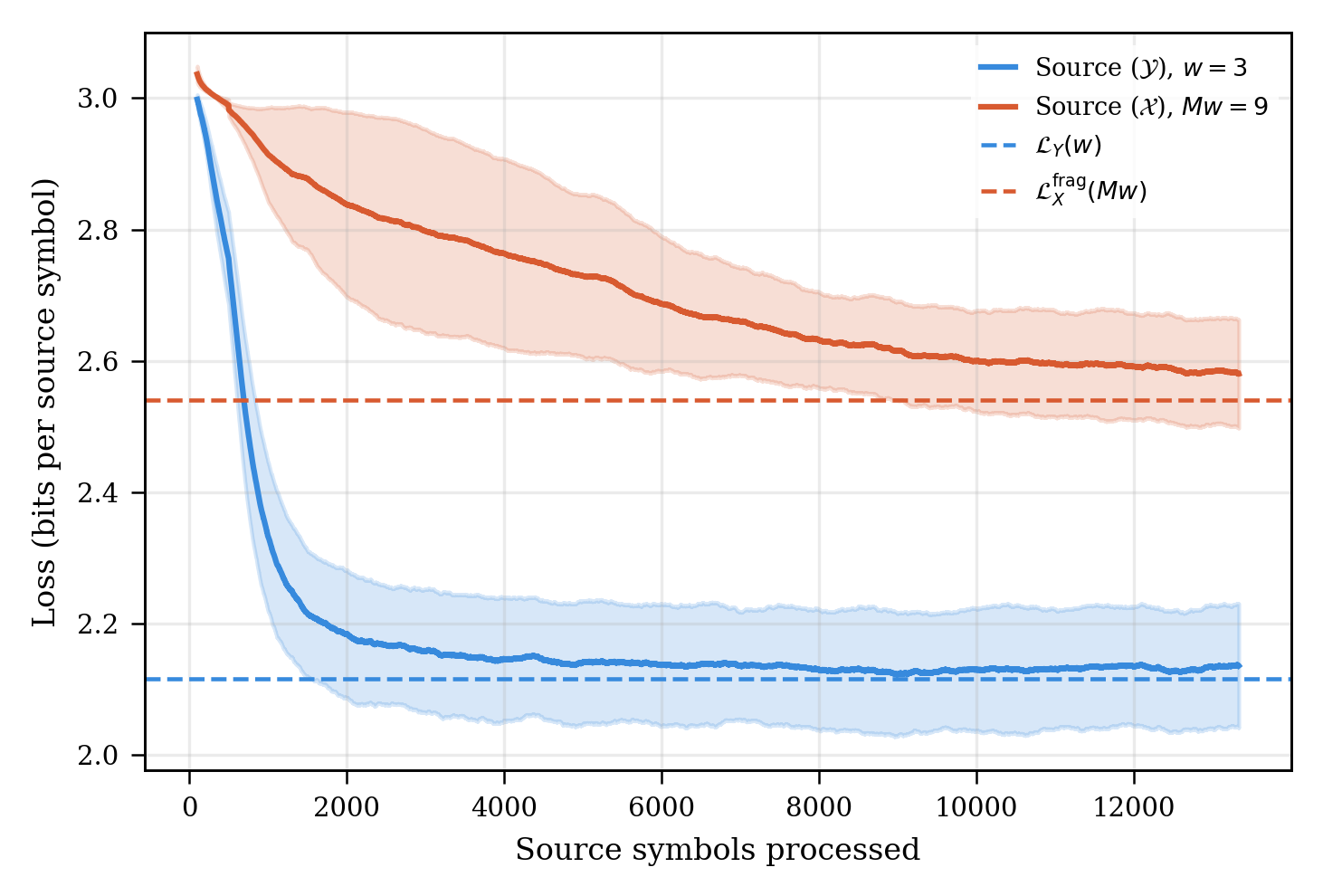}
        \captionof{figure}{
    Fragmentation creates a persistent finite-context loss gap. 
    A source-level Transformer with $w>k$ reaches the Markov optimum, while the
    fragmented model with context $Mw$ converges to a higher loss, despite having
    the same nominal source span.
    }
        \label{fig:fragmentation_transformer}
    \end{minipage}
    \hfill
    \begin{minipage}[t]{0.49\linewidth}
    \centering
    \raisebox{0.5cm}{%
    \begin{tikzpicture}[
        xscale=0.95,
        ysymbol/.style={font=\small, minimum width=1.1cm, minimum height=0.45cm},
        xbit/.style={font=\scriptsize\ttfamily, minimum width=0.5cm, minimum height=0.4cm, draw, thick},
        bracket/.style={thick, decorate, decoration={brace, amplitude=3pt, mirror}},
        label/.style={font=\scriptsize}
    ]
        \node[label, anchor=west] at (-0.5, 0.9) {\textit{Case 1:}};
        \node[ysymbol] at (0.5, 0.55) {$d$};
        \node[ysymbol] at (1.5, 0.55) {$b$};
        \node[ysymbol] at (2.7, 0.55) {$X_j = ?$};
    
        \node[xbit] at (0.25, 0.0) {1};
        \node[xbit, fill=blue!15] at (0.75, 0.0) {1};
        \node[xbit, fill=blue!15] at (1.25, 0.0) {0};
        \node[xbit, fill=blue!15] at (1.75, 0.0) {1};
        \node[xbit] at (2.25, 0.0) {?};
        \node[xbit] at (2.75, 0.0) {?};
    
        \draw[dashed, gray, thick] (0.99, 0.72) -- (0.99, 0.22);
        \draw[dashed, gray, thick] (1.99, 0.72) -- (1.99, 0.22);
        \draw[bracket] (0.5, -0.22) -- (2.0, -0.22)
            node[midway, below=2pt, label] {ctx $= 101$};
        \draw[->, thick, red] (2.25, 0.35) -- (2.25, 0.2);
    
        \node[label, anchor=west] at (-0.5, -1.1) {\textit{Case 2:}};
        \node[ysymbol] at (0.5, -1.45) {$a$};
        \node[ysymbol] at (1.5, -1.45) {$c$};
        \node[ysymbol] at (2.7, -1.45) {$X_j = ?$};
    
        \node[xbit] at (0.25, -2.0) {0};
        \node[xbit] at (0.75, -2.0) {0};
        \node[xbit, fill=blue!15] at (1.25, -2.0) {1};
        \node[xbit, fill=blue!15] at (1.75, -2.0) {0};
        \node[xbit, fill=blue!15] at (2.25, -2.0) {1};
        \node[xbit] at (2.75, -2.0) {?};
    
        \draw[dashed, gray, thick] (0.99, -1.28) -- (0.99, -1.78);
        \draw[dashed, gray, thick] (1.99, -1.28) -- (1.99, -1.78);
        \draw[bracket] (1.0, -2.22) -- (2.5, -2.22)
            node[midway, below=2pt, label] {ctx $= 101$};
        \draw[->, thick, red] (2.75, -1.65) -- (2.75, -1.8);
    \end{tikzpicture}%
    }
    \captionof{figure}{A stationary first-order Markov source on $\mathcal{Y} = \{a,b,c,d\}$ with the bijection $\varphi(a) = 00,\ \varphi(b) = 01,\ \varphi(c) = 10,\ \varphi(d) = 11$. A bit-level predictor with context $w = 3$ observing $101$ cannot distinguish whether $X_j$ can be any symbol in $\mathcal{Y}$ (Case~$1$) or $X_j\in\{c,d\}$ (Case~$2$).}
    \label{fig:motivating_example}
    \end{minipage}
    \end{figure}
    We first formalize fragmentation:
    
    \begin{definitionbox}
    \begin{definition}[Fragmentation]
    Let $M>1$ be an integer and let $\mathcal{X}$ be a finite alphabet such
    that
    $
        |\mathcal{Y}| \leq |\mathcal{X}|^M .
    $
    A \emph{fragmentation map} is an injective map
    \[
        \varphi:\mathcal{Y}\to\mathcal{X}^M .
    \]
    Given a process $\{Y_i\}_{i\in\mathbb{Z}}$ over $\mathcal{Y}$, its
    fragmented process $\{X_j\}_{j\in\mathbb{Z}}$ over $\mathcal{X}$ is
    obtained by replacing each source symbol $Y_i$ by the length-$M$ block $\varphi(Y_i)$.  That is, for every
    $i\in\mathbb{Z}$ and $\theta\in\{1,\ldots,M\}$,
    $$
        X_{M(i-1)+\theta}
        =
        \varphi(Y_i)_\theta .
    $$
    
    We call $\theta$ the \emph{phase} of the fragmented symbol
    $X_{M(i-1)+\theta}$.
    \end{definition}
    \end{definitionbox}
    In what follows, context lengths for the fragmented process are measured in
    $\mathcal{X}$-symbols, {\it{i.e.}}, in fragments.  Thus a window of length $Mw$
    in the fragmented process has the same nominal span as $w$ source symbols.
    
    \paragraph{Example.}
    Let $\mathcal{Y}=\{a,b,c,d\}$, and fragment each source symbol into two
    bits by
    $
        \varphi(a)=00,\qquad
        \varphi(b)=01,\qquad
        \varphi(c)=10,\qquad
        \varphi(d)=11 .
    $
    Each bit of the fragmented process has one of two phases: it is either
    the first or the second bit of the encoded source symbol; see
    Figure~\ref{fig:motivating_example}. Suppose that $\{Y_i\}$ is first-order Markov, and consider a bit predictor
    with context length $3$.  This context always contains a complete
    preceding two-bit block.  Yet it need not determine the phase of the bit
    to be predicted.  Figure~\ref{fig:motivating_example} shows two
    occurrences of the same bit context, $101$.
    
    
    In Case~$1$, the context ends at a block boundary, so the next bit is the
first bit of a new source symbol, distributed according to the Markov transition
from the previous symbol. In Case~$2$, the context ends inside a block, so the
next bit completes the current source symbol; after observing first bit $1$, the
symbol is restricted to $c$ or $d$. The predictor, however, only observes the context string $101$ and not the
phase at which it occurs. It must therefore assign the same next-bit
distribution in both cases, even though the correct conditional distributions
may differ. This phase mismatch creates the excess loss quantified below.

    To state the gap, we put both losses on the same scale: bits per original
    source symbol.  Let $\mathcal{Q}_{w}^{\mathcal{X}}$ be the class of
    $w$-context predictors on the fragmented alphabet $\mathcal{X}$, where
    $w$ is measured in fragments.  For a source block $Y_1^n$, write
    \[
        \varphi(Y_1^n)
        :=
        \varphi(Y_1)\varphi(Y_2)\cdots\varphi(Y_n)
        \in \mathcal{X}^{Mn}
    \]
    for the corresponding fragmented string. For $q\in\mathcal{Q}_{w}^{\mathcal{X}}$, define its fragmented population
    loss by
   $
        L_X^{\mathrm{frag}}(q)
        :=
        \sum_{\theta=1}^{M}
        \mathbb{E}\!\left[
            -\log_2
            q\!\left(
                X_{\theta}
                \mid
                X_{\theta-w}^{\theta-1}
            \right)
        \right].
        \label{eq:fragmented-population-loss}$
    The expectation in the $\theta$th term is taken at phase $\theta$.  By
    stationarity of $\{Y_i\}$, the fragmented process is cyclostationary with
    period $M$, so the distribution of each phase-$\theta$ term is independent
    of the source-symbol index.  Thus $L_X^{\mathrm{frag}}(q)$ is the expected
    cumulative log-loss incurred in predicting the $M$ fragments of one source
    symbol.\footnote{If
    $L_X^{\mathrm{frag}}(q)<\infty$, then Birkhoff's theorem applied to the
    stationary block process yields
    $
        n^{-1}\ell(q;\varphi(Y_1^n))
        \to
        L_X^{\mathrm{frag}}(q)
    $
    almost surely.} The optimal fragmented loss with context length $w$ is
    \begin{equation}
        \mathcal{L}_X^{\mathrm{frag}}(w)
        :=
        \min_{q\in\mathcal{Q}_{w}^{\mathcal{X}}}
        L_X^{\mathrm{frag}}(q).
        \label{eq:optimal-fragmented-loss}
    \end{equation}
    We compare $\mathcal{L}_X^{\mathrm{frag}}(Mw)$ with $\mathcal{L}_Y(w)$ because
$Mw$ fragments have the same nominal span as $w$ source symbols, and both losses
are measured in bits per source symbol. A source predictor assigns probability
directly to $Y_i$, while a fragmented predictor assigns probability to the same
symbol through its $M$ fragments. The comparison is nontrivial because the
fragmented window need not align with source-symbol boundaries. The next result
quantifies the resulting excess loss; the proof is given in
Appendix~\ref{proof:fragmentation}.

    \begin{mainthmbox}
    \begin{theorem}[Fragmentation]\label{thm:fragmentation}
    Let $\{Y_i\}$ be a stationary $k^{\text{th}}$-order Markov process on $\mathcal{Y}$, and let $\{X_j\}$ be its fragmented process under $\varphi$. For any context window size $w$,
    \begin{align}\label{eq:fragmentation-theorem}
    \mathcal{L}_X^{\mathrm{frag}}(Mw) - \mathcal{L}_Y(w)
    \;=\; \underbrace{\sum_{\theta = 2}^{M} I\!\left(X_{\theta}; X_{1-Mw}^{\theta-Mw - 1} \,\big|\, X_{\theta-Mw}^{\theta-1}\right)}_{\text{context deficit}} \nonumber  + \underbrace{M \cdot I\!\left(\Theta \,;\, X_{\Theta} \,\big|\, X_{\Theta-Mw}^{\Theta-1}\right)}_{\text{phase ambiguity}},
    \end{align}
   where $\Theta$ is an auxiliary random variable, independent of $\{X_j\}$ and
uniform on $\{1,\ldots,M\}$, representing the phase of the prediction target.
The context-deficit term vanishes when $w>k$.
    \end{theorem}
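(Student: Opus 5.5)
The plan is to reduce the fragmented optimization to a single conditional entropy by randomizing the phase, and then split that entropy into a phase-aware part and a phase-ambiguity part. \textbf{Step 1.} Because one predictor $q\in\mathcal{Q}_{Mw}^{\mathcal{X}}$ is shared across all phases, introduce $\Theta$ uniform on $\{1,\ldots,M\}$, independent of $\{X_j\}$, and set $C:=X_{\Theta-Mw}^{\Theta-1}$. Then
\[
L_X^{\mathrm{frag}}(q)=M\,\mathbb{E}\bigl[-\log_2 q(X_\Theta\mid C)\bigr],
\]
where cyclostationarity ensures each phase-$\theta$ term does not depend on the block index. By Gibbs' inequality this is minimized by $q^\star(x\mid c)=\Pr(X_\Theta=x\mid C=c)$, which gives
\[
\mathcal{L}_X^{\mathrm{frag}}(Mw)=M\,H(X_\Theta\mid C).
\]
On contexts $c$ of probability zero, $q^\star$ can be chosen arbitrarily, so the minimum is attained and the $\min$ in \eqref{eq:optimal-fragmented-loss} is justified.

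\textbf{Step 2.} Write
\[
H(X_\Theta\mid C)=H(X_\Theta\mid C,\Theta)+I(\Theta;X_\Theta\mid C).
\]
Multiplying by $M$, the second term is exactly the phase-ambiguity term. The first term satisfies
\[
M\,H(X_\Theta\mid C,\Theta)=\sum_{\theta=1}^M H\bigl(X_\theta\mid X_{\theta-Mw}^{\theta-1}\bigr),
\]
with the indexing anchored so that $X_1^M=\varphi(Y_1)$.

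\textbf{Step 3.} Rewrite $\mathcal{L}_Y(w)=H(Y_1\mid Y_{1-w}^{0})$ using injectivity of $\varphi$. Since $Y_{1-w}^1$ and $X_{1-Mw}^{M}$ determine each other,
\[
H(Y_1\mid Y_{1-w}^0)=H(X_1^M\mid X_{1-Mw}^0)=\sum_{\theta=1}^M H\bigl(X_\theta\mid X_{1-Mw}^{\theta-1}\bigr)
\]
by the chain rule. Subtracting this from Step 2 term by term, each difference is
\[
H\bigl(X_\theta\mid X_{\theta-Mw}^{\theta-1}\bigr)-H\bigl(X_\theta\mid X_{1-Mw}^{\theta-1}\bigr)=I\bigl(X_\theta;X_{1-Mw}^{\theta-Mw-1}\mid X_{\theta-Mw}^{\theta-1}\bigr).
\]
For $\theta=1$ the extra block $X_{1-Mw}^{-Mw}$ is empty, so that term vanishes and the sum starts at $\theta=2$. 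This yields \eqref{eq:fragmentation-theorem}.

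\textbf{Step 4 (vanishing of the deficit).} The one place needing care is checking what the window contains. For $\theta\ge 2$ and $w>k$, the window $X_{\theta-Mw}^{\theta-1}$ contains the complete blocks $\varphi(Y_{2-w}^{0})$, that is, $w-1\ge k$ whole source symbols, together with $X_1^{\theta-1}$. By the $k$th-order Markov property, $Y_1$ is conditionally independent of $Y_{-\infty}^{-k}$ given $Y_{1-k}^{0}$. Since $X_\theta$ is a function of $Y_1$, and $X_1^{\theta-1}$ is also a function of $Y_1$, the extra past $X_{1-Mw}^{\theta-Mw-1}$ (a function of $Y_{1-w}$) carries no further information. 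Each mutual-information term is therefore zero. The main obstacle is this index bookkeeping, and I would fix $X_1^M=\varphi(Y_1)$ at the outset so that every window is described in whole blocks.
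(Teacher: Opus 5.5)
Your proposal is correct and follows the paper's proof step for step. The steps are: phase randomization giving $\mathcal{L}_X^{\mathrm{frag}}(Mw)=M\,H(X_\Theta\mid X_{\Theta-Mw}^{\Theta-1})$; splitting off $M\,I(\Theta;X_\Theta\mid X_{\Theta-Mw}^{\Theta-1})$; the chain-rule expansion of $H(X_1^M\mid X_{1-Mw}^{0})$ with the $\theta=1$ term cancelling; and the Markov argument for $w>k$. The differences are minor: you apply Gibbs' inequality directly to the joint law of $(X_\Theta,C)$ rather than first showing $X'_j=X_{j+\Theta}$ is stationary, which the paper does but does not need. Your Step 4 also leaves implicit that the tail fragments of $\varphi(Y_{1-w})$ sit in the conditioning; this is harmless by weak union, since $Y_1$ is independent of $Y_{1-w}$ given $Y_{2-w}^{0}$.
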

    \end{mainthmbox}
    Since both terms are nonnegative,
    $
        \mathcal{L}_X^{\mathrm{frag}}(Mw)
        \ge
        \mathcal{L}_Y(w).
    $
    Thus, a lossless finer representation can strictly increase the optimal
    finite-context prediction loss.
    
    \paragraph{Proof idea.}
A fragmented predictor conditions only on the last $Mw$ fragments.  Hence
two occurrences with the same fragment context are treated identically, even
if they occur at different phases.  Equivalently, its optimal loss is the
loss of a prediction problem in which the phase of the prediction time is
first randomized and then hidden from the predictor.  Comparing this
phase-hidden loss with the chain-rule expansion of the source-level loss
separates the gap into two terms: the information needed to identify the
phase, and the source history lost because the fragmented window is not
aligned with source-symbol boundaries.
    
    \paragraph{Implication for byte- and character-level models.}
Theorem~\ref{thm:fragmentation} gives a possible explanation for the
finite-context gap between byte- or character-level models, such as
ByT5~\citep{xueByT5TokenFreeFuture2022} and
CANINE~\citep{clarkCaninePretrainingEfficient2022}, and tokenized models.
Such models represent a source unit as a sequence of symbols in a finer
alphabet.  Even when the model is well trained, and even when its window has
the nominal span needed to cover the relevant source history, the local
prediction problem may remain harder.  The same fragment context can arise
at different positions inside the original source unit, and these positions
can induce different conditional distributions for the next fragment.  The
resulting loss is therefore not merely an optimization or capacity effect;
it can be intrinsic to the representation.

There is a second cost.  Finer representations lengthen the sequence.  To
cover the same amount of source history, a byte- or character-level model
needs a longer context window than a token-level model.  For Transformers,
this increase is expensive.  Token-free models must therefore either pay the
longer-sequence cost directly~\citep{xueByT5TokenFreeFuture2022}, or reduce
it through architectural mechanisms such as
downsampling~\citep{clarkCaninePretrainingEfficient2022}.  This points to
the complementary question studied next: when source symbols are grouped
into tokens, what property of the tokenizer determines how much source
history a fixed token window contains?
    
  \section{When Does Tokenization Extend Effective Context?}
\label{sec:tokenization}

We now consider the reverse transformation.  Tokenization groups consecutive
source symbols into units from a larger alphabet.  Its advantage is that a
fixed number of tokens may cover more source history than the same number of
source symbols, thereby reducing finite-context loss.

However, this gain is not automatic.  A token predictor conditions on the output of
an encoder, and under greedy tokenization the relevant token boundaries may
depend on symbols not yet predicted.  Thus, token history is not merely a
reblocking of source history.  We need conditions under which a short token
context contains enough determined history to emulate a longer source-context
predictor.
    
  \paragraph{Empirical observation.}
Figure~\ref{fig:transformer_results} shows the phenomenon studied in this
section.  We use a binary stationary Markov source of order $k=12$.  On the
original sequence, a Transformer needs a window of length $12$ to reach the
entropy rate.  After tokenization, the same architecture can reach the entropy
rate with a shorter token window, despite predicting over a larger alphabet.
As the token vocabulary grows, the limiting loss decreases and approaches the
entropy rate.

This section explains when such a gain is possible.  The question is when a
short token context carries enough determined source history to match a
longer source-context predictor, and how much loss is incurred when it does
not.
    
    \begin{figure}[t]
    \centering
    \begin{subfigure}[t]{0.49\linewidth}
    \centering
    \includegraphics[width=\linewidth]{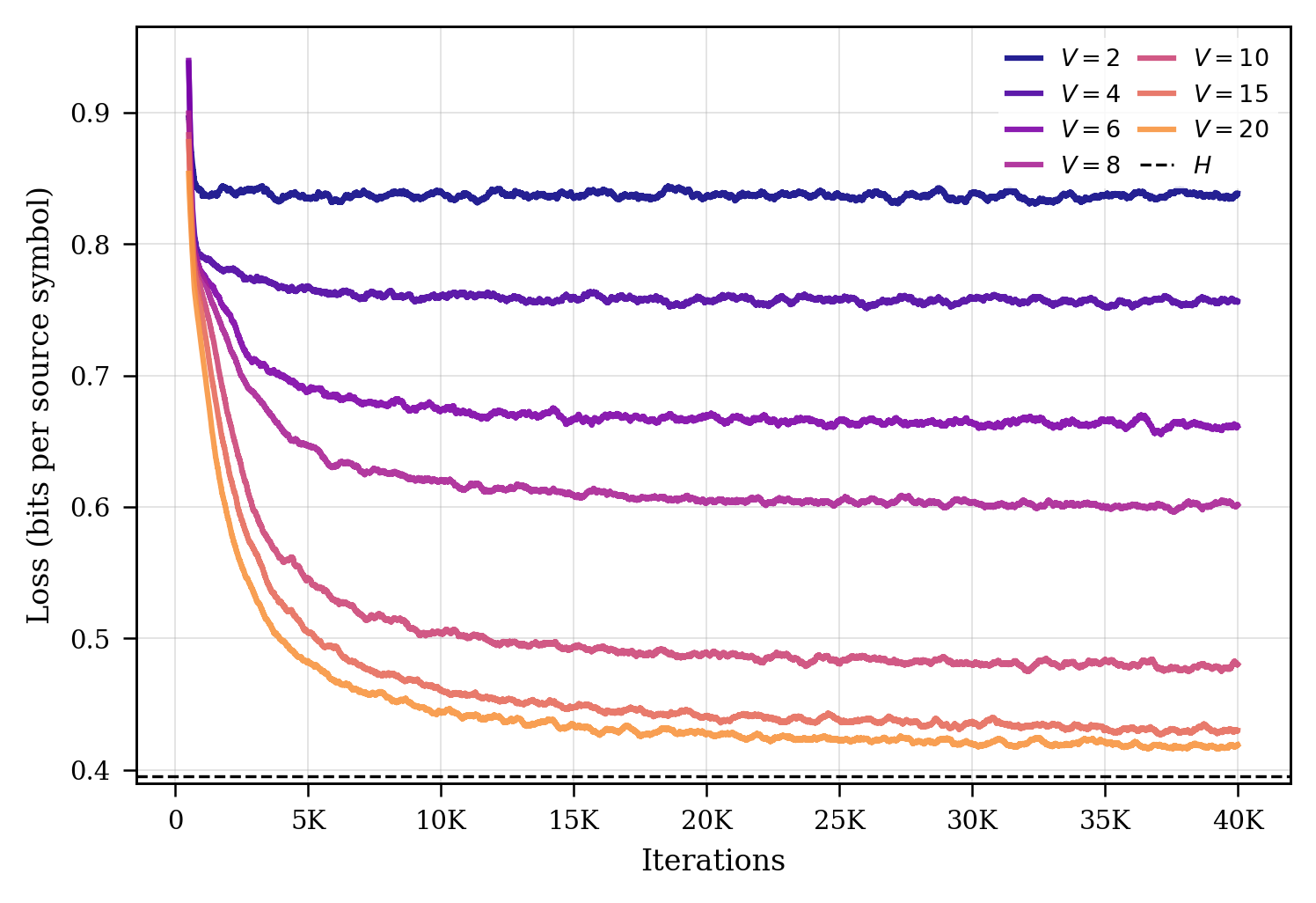}
    \caption{Training curves for transformers with $w=4$ and increasing vocabulary
    size on a binary order-$12$ Markov source.}
    \label{fig:curves_tokenization}
    \end{subfigure}
    \hfill
    \begin{subfigure}[t]{0.49\linewidth}
    \centering
    \includegraphics[width=\linewidth]{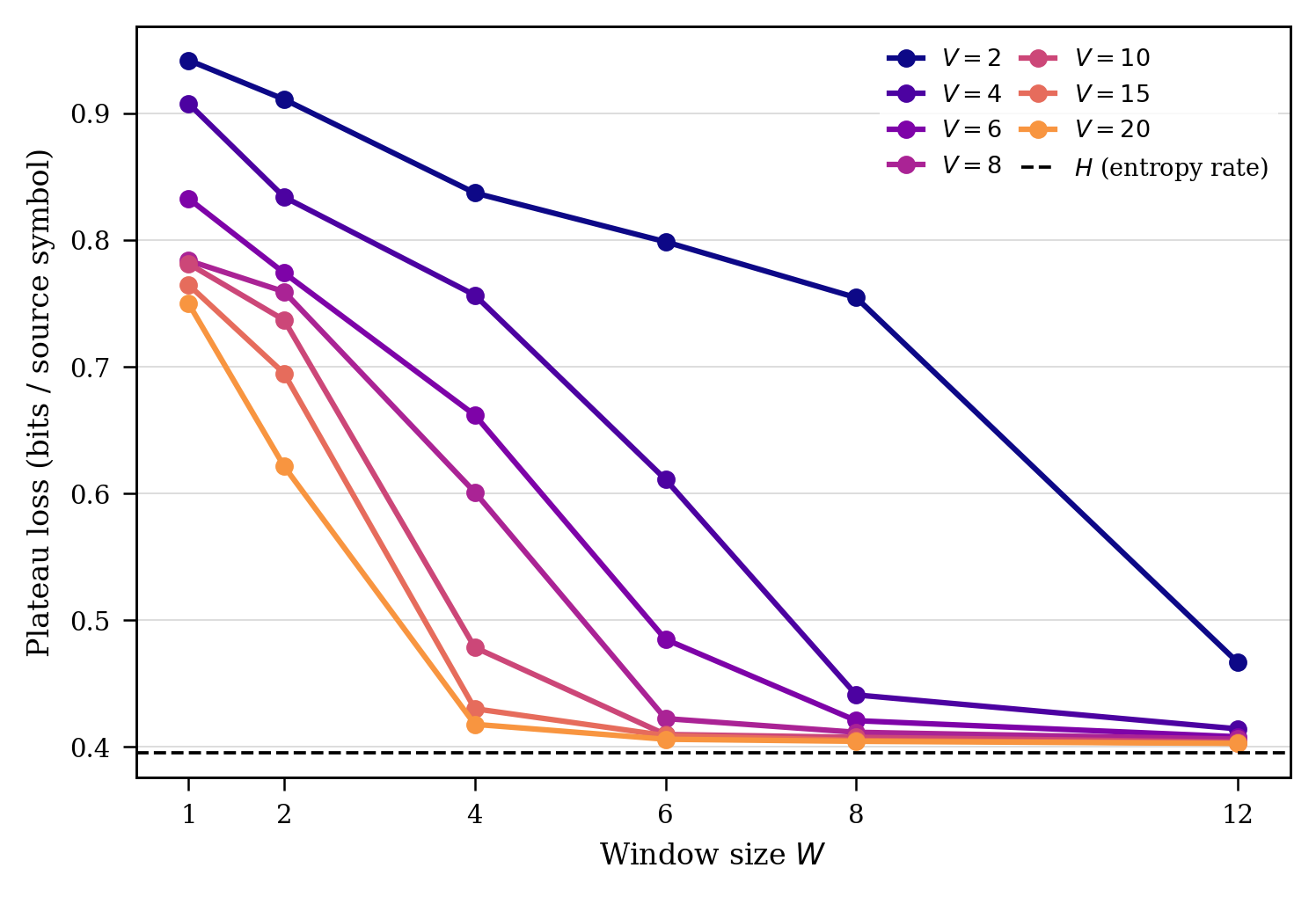}
    \caption{Plateau loss versus context window. Larger vocabularies reach the
    entropy rate with smaller token windows.}
    \label{fig:loss_w_tokenization}
    \end{subfigure}
    \caption{
    Tokenization extends effective context on a binary order-$12$ Markov source.
    The same architecture reaches the entropy rate with a much smaller token window
    after tokenization, consistent with the source-span mechanism formalized below.
    }
    \label{fig:transformer_results}
    \end{figure}
    
    \paragraph{Greedy tokenization.}
    We model a tokenizer as a prefix-closed finite vocabulary of source strings, organized as a
    prefix tree, together with a greedy parser. The vocabulary
    $\mathcal{Z}\subset \mathcal{Y}^*$ contains all single-symbol strings in
    $\mathcal{Y}$, namely $\mathcal{Y}\subset\mathcal{Z}$. Given a source sequence
    $y_1^n$, greedy tokenization scans from left to right and emits the longest
    vocabulary string matching the remaining source sequence. We write
    \[
        z_1^m = \Pi_{\mathcal{Z}}(y_1^n)
    \]
    for the resulting token sequence. We also define the expansion (decoding) map
    $\Gamma_{\mathcal{Z}}:\mathcal{Z}^*\to\mathcal{Y}^*$, which maps a token
    sequence back to the underlying source sequence by concatenating the source
    strings represented by its tokens. Thus,
    \[
        \Gamma_{\mathcal{Z}}(z_1^m)
        =
        \Gamma_{\mathcal{Z}}(z_1)\Gamma_{\mathcal{Z}}(z_2)\cdots
        \Gamma_{\mathcal{Z}}(z_m),
    \]
    and, for every source sequence,
    $
        \Gamma_{\mathcal{Z}}\!\left(\Pi_{\mathcal{Z}}(y_1^n)\right)
        =
        y_1^n .
    $
    
    This abstraction captures the core parsing mechanism behind many practical
    dictionary and subword tokenizers: a learned vocabulary constrains which strings
    can be emitted, and the parser segments the input into vocabulary elements. It
    covers classical dictionary methods such as LZW~\citep{zouharTokenizationNoiselessChannel2023}, and is a useful
    model for greedy subword tokenizers such as WordPiece~\citep{devlinBERTPretrainingDeep2019}
    and variants of Byte Pair Encoding~\citep{sennrichNeuralMachineTranslation2016}.
    
    
    
    
    \begin{figure}[t]
    \centering
    \begin{minipage}[c]{0.30\textwidth}
    \centering
    \begin{tikzpicture}[
        node distance=0pt,
        treenode/.style={circle, draw, thick, minimum size=0.75cm, inner sep=1pt, font=\small},
        elabel/.style={font=\scriptsize, inner sep=1pt}
    ]
    \node[treenode, fill=gray!15] (root) at (0, 2.2) {$\varepsilon$};
    \node[treenode]               (n0)   at (-1.1, 1.1) {$0$};
    \node[treenode]               (n1)   at ( 1.1, 1.1) {$1$};
    \node[treenode]               (n01)  at (-1.1, 0.0) {$01$};
    \node[treenode]               (n010)  at (-1.1, -1.1) {$010$};
    
    \draw[thick] (root) -- (n0)
        node[elabel, midway, above left=1pt] {$0$};
    \draw[thick] (root) -- (n1)
        node[elabel, midway, above right=1pt] {$1$};
    \draw[thick] (n0) -- (n01)
        node[elabel, midway, right=2pt] {$1$};
    \draw[thick] (n01) -- (n010)
        node[elabel, midway, right=2pt] {$0$};
    
    \end{tikzpicture}
    \\[0.6em]
    {\small (a) Prefix tree $\mathcal{T}$ with vocabulary $\mathcal{Z} = \{0,\, 1,\, 01,\, 010\}$.}
    \end{minipage}
    \hfill
    \begin{minipage}[c]{0.66\textwidth}
    \centering
    \begin{tikzpicture}[
        yscale=1.0, xscale=1.0,
        ysym/.style={font=\small\ttfamily, minimum width=0.5cm, minimum height=0.45cm, draw, thick},
        tok/.style={font=\small, minimum height=0.45cm, draw, thick, fill=blue!12, inner sep=3pt},
        bracket/.style={thick, decorate, decoration={brace, amplitude=3pt, mirror}},
        label/.style={font=\scriptsize}
    ]
    \node[label, anchor=east] at (-0.3, 1.0) {$y_1^{10}:$};
    \foreach \i/\v in {1/0, 2/1, 3/0, 4/1, 5/1, 6/1, 7/0, 8/1, 9/0, 10/0} {
        \node[ysym] at (\i*0.55, 1.0) {\v};
    }
    
    \node[label, anchor=east] at (-0.3, -0.5) {$z_1^{6}:$};
    \node[tok, minimum width=1.65cm] at (1.075, -0.5) {$010$};
    \node[tok, minimum width=0.5cm] at (2.2, -0.5) {$1$};
    \node[tok, minimum width=0.5cm] at (2.75, -0.5) {$1$};
    \node[tok, minimum width=0.5cm]  at (3.30,  -0.5) {$1$};
    \node[tok, minimum width=1.60cm]  at (4.395,  -0.5) {$010$};
    \node[tok, minimum width=0.5cm]  at (5.50,  -0.5) {$0$};
    
    \draw[bracket] (0.30, 0.72) -- (1.90, 0.72);
    \draw[bracket] (1.95, 0.72) -- (2.465, 0.72);
    \draw[bracket] (2.475, 0.72) -- (3.00, 0.72);
    \draw[bracket] (3.05, 0.72) -- (3.55, 0.72);
    \draw[bracket] (3.60, 0.72) -- (5.20, 0.72);
    \draw[bracket] (5.25, 0.72) -- (5.75, 0.72);
    
    \end{tikzpicture}
    \\[0.6em]
    {\small (b) Greedy parsing $\Pi_{\mathcal{Z}}(y_1^{10})$ for $y_1^{10} = 0101110100$ produces $z_1^{6} = (010,\, 1,\, 1,\, 1,\, 010,\, 0)$. Each bracketed source segment maps to a single token below.}
    \end{minipage}
    
    \caption{Greedy tokenization on a small example. Left: a prefix tree with vocabulary $\mathcal{Z}=\{0,\,1, 01, \,010\}$. Right: the parser scans left to right, at each position emitting the deepest matching node.}
    \label{fig:greedy_tokenization}
    \end{figure}
    
    \paragraph{Source span of a token window.}
    We can now make the context-extension intuition precise. A token predictor does
    not directly condition on source symbols; it conditions on the last $w$ tokens.
    The relevant question is how much source history those tokens actually contain.
    For a token string $z_{i-w}^{i-1}$, define its source span
    $S:\mathcal{Z}^*\to\mathbb{N}$ by
    \[
        S(z_{i-w}^{i-1})
        :=
        \left|
            \Gamma_{\mathcal{Z}}(z_{i-w}^{i-1})
        \right|.
    \]
    In particular, $S(z_{i-w}^{i-1})$ is the number of original source symbols
    contained in the token context seen by the predictor.
    
    To compare token-level and source-level prediction, we again measure loss in
    bits per original source symbol. Let $\mathcal{Q}_w^{\mathcal{Z}}$ denote the
    class of $w$-context predictors over the token alphabet $\mathcal{Z}$. Note that stationarity and ergodicity of $\{Y_i\}$ implies that $\{Z_j\}$ is also stationary and ergodic.\footnote{A one-sided greedy parse has an initial-boundary artifact, since the first token is forced to start at the origin. Throughout, $\{Z_j\}$ denotes the stationary version obtained by looking at the parse from a typical emitted-token boundary. Equivalently, the expectations below can be read as long-run empirical averages along a long greedy parse, with boundary effects ignored.} For
    $\tilde q\in\mathcal{Q}_w^{\mathcal{Z}}$, define
    \[
        L_Z^{\mathrm{tok}}(\tilde q)
        :=
        \frac{
            \mathbb{E}\!\left[
                -\log_2
                \tilde q\!\left(
                    Z_0 \mid Z_{-w}^{-1}
                \right)
            \right]
        }{
            \mathbb{E}\!\left[
                |\Gamma_{\mathcal{Z}}(Z_0)|
            \right]
        }.
    \]
    The denominator converts token-level loss into bits per source symbol. The
    optimal tokenized loss with context length $w$ is
    \[
        \mathcal{L}_Z^{\mathrm{tok}}(w)
        :=
        \min_{\tilde q\in\mathcal{Q}_w^{\mathcal{Z}}}
        L_Z^{\mathrm{tok}}(\tilde q).
    \]
    
    
    We now ask when the source span of a token window can be turned into an actual
    prediction guarantee. For a fixed token-window length $w$, define the worst-case
    source span
    \[
        w_s(w)
        :=
        \min\left\{
            S(z_1^w)
            :
            z_1^w \text{ is a valid length-$w$ token window}
        \right\}.
    \]
    Thus every valid $w$-token context contains at least $w_s(w)$ source symbols.
    This means that the token history contains the source history used by a
    $w_s(w)$-context source predictor.
    
    However, this containment alone does not give a token-level predictor. A source
    predictor assigns probabilities one source symbol at a time, while a token
    predictor must assign probabilities to entire tokens. Moreover, under greedy
    parsing, not every source string that has positive probability can be emitted as
    the next token: shorter strings may be suppressed when they are prefixes of
    longer matching vocabulary elements. Thus, one must construct a valid
    distribution over next tokens whose cumulative probability along the greedy
    parse matches the source-level predictor.
    
    The next theorem shows that this can be done exactly. Any source predictor with
    context length $w_s(w)$, with some mild assumptions, can be converted into a $w$-token predictor whose
    cumulative log-loss matches the source-level cumulative log-loss up to an
    additive $O(1)$ term independent of the sequence length.
    
    \begin{mainthmbox}
    \begin{theorem}[Predictor transfer under greedy tokenization]
    \label{thm:predictor_transfer}
    Fix a tokenizer $\Pi_{\mathcal{Z}}$ and a token-window length $w\ge 1$, and let
    \[
        w_s := w_s(w)
        =
        \min\left\{
            S(z_1^w)
            :
            z_1^w \text{ is a valid length-$w$ token window}
        \right\}.
    \]
    Let $q\in\mathcal{Q}_{w_s}^{\mathcal{Y}}$ be strictly positive, meaning that
    there exists $\lambda_q>0$ such that
    \[
        q(y\mid c)\ge \lambda_q,
        \qquad
        \forall y\in\mathcal Y,\; c\in\mathcal Y^{w_s}.
    \]
    Then there exists a token-level predictor
    $\tilde q\in\mathcal{Q}_w^{\mathcal{Z}}$ such that, for every source sequence
    $y_1^n$,
    \[
        \ell\!\left(\tilde q;\Pi_{\mathcal{Z}}(y_1^n)\right)
        =
        \ell(q;y_1^n)
        +
        O(1).
    \]
    \end{theorem}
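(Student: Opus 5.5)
The plan is to build $\tilde q$ explicitly from $q$. Given a token context $z_{i-w}^{i-1}$, we decode it to the source string $\Gamma_{\mathcal Z}(z_{i-w}^{i-1})$. This string has length at least $w_s$ by definition of $w_s(w)$, so its last $w_s$ symbols form a legitimate context $c\in\mathcal Y^{w_s}$ for $q$. For a candidate next token $t$ with $\Gamma_{\mathcal Z}(t)=a_1\cdots a_d$, we run $q$ autoregressively: the probability of $a_j$ uses as context the last $w_s$ symbols of $\Gamma_{\mathcal Z}(z_{i-w}^{i-1})a_1\cdots a_{j-1}$, which is always available. The key structural fact about greedy parsing with a prefix-closed vocabulary is the following. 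The parser emits $t$ next if and only if the remaining source begins with $\Gamma_{\mathcal Z}(t)$ and the following symbol $b$ satisfies $\Gamma_{\mathcal Z}(t)b\notin\mathcal Z$. Call the set of such forbidden $b$ the children $\mathrm{ch}(t)$. Hence we set
\[
\tilde q(t\mid z_{i-w}^{i-1}) \;=\; \frac{q(a_1\mid c)\,\mathbf 1[a_1\notin \mathrm{ch}(z_{i-1})]}{1-\sum_{b\in\mathrm{ch}(z_{i-1})} q(b\mid c)}\;\prod_{j=2}^{d} q(a_j\mid c_j)\;\Bigl(1-\sum_{b\in\mathrm{ch}(t)} q(b\mid c_{d+1})\Bigr),
\]
where the $c_j$ are the sliding contexts. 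The first factor is a renormalization that uses the information, visible in the token context, that the previous token $z_{i-1}$ was emitted greedily; this rules out its children as the next source symbol. The last factor is the probability that the parse stops after $t$.

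The first step is to check that $\tilde q(\cdot\mid z_{i-w}^{i-1})$ is a probability distribution. The events ``greedy emits $t$'' partition the continuations of the source into finitely many cylinder sets, one per node of the finite prefix tree. Under the (renormalized) $q$-measure on continuations, the mass of the node for $t$ is exactly the product above. Summing over the tree telescopes level by level to $1$, because at each node the mass splits into ``stop here'' plus the masses of the children. Strict positivity $q\ge\lambda_q$ guarantees that the normalizer $1-\sum_{b\in\mathrm{ch}(z_{i-1})}q(b\mid c)$ is positive whenever some symbol is not a child. If a node has all of $\mathcal Y$ as children, it can only be emitted at the very end of the sequence, so it never appears as $z_{i-1}$ in the interior.

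The second step is the telescoping of the cumulative loss along an actual parse $\Pi_{\mathcal Z}(y_1^n)=z_1^m$. Taking $-\log_2\tilde q(z_i\mid\cdot)$, the inner factors contribute exactly the source-level terms $-\log_2 q(y_\tau\mid y_{\tau-w_s}^{\tau-1})$ for the symbols of $z_i$. The stopping factor of $z_i$ is $1-\sum_{b\in\mathrm{ch}(z_i)}q(b\mid c_{d+1})$, evaluated at the source context ending with $\Gamma_{\mathcal Z}(z_i)$. This is precisely the normalizer of the first-symbol factor of $z_{i+1}$, evaluated at the same source context, so the two cancel exactly. Summing over $i$, every such pair cancels. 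What survives is $\ell(q;y_1^n)$ plus boundary terms.

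The third step is to bound those boundary terms. They come from the first $w$ tokens, whose token context is incomplete (and the source prefix $y_1^{w_s}$, not charged by $\ell(q;\cdot)$), and from the final stopping factor of $z_m$, which has no successor to cancel against. Each such term involves at most $w\cdot\max_{z\in\mathcal Z}|\Gamma_{\mathcal Z}(z)|$ symbols. By the bound $q\ge\lambda_q$, each term is at most a constant times $\log(1/\lambda_q)$, independent of $n$.

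I expect the main obstacle to be the correctness of the greedy-emission characterization together with the exactness of the cancellation. Specifically, I must verify two things. First, the previous token is always recoverable from the token window, which holds trivially since $z_{i-1}$ is in the window. Second, the context used in the stopping factor coincides with the one used in the next token's normalizer. This is why the construction uses the decoded string's last $w_s$ symbols rather than a token-aligned context. I also need to handle the one-sided parse convention so that the initial boundary only contributes $O(1)$.
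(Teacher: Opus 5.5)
Your construction and cancellation argument are the paper's own proof. The $\tilde q$ is the same: the sequential $q$-probability of $\Gamma_{\mathcal Z}(t)$, times the probability of stopping after $t$, divided by the probability that $z_{i-1}$ is not extended, and zero when the first symbol of $t$ would extend $z_{i-1}$. The telescoping is also the same: each stop factor cancels the next token's normalizer because $\Gamma_{\mathcal Z}(z_{i-w}^{i})$ and $\Gamma_{\mathcal Z}(z_{i-w+1}^{i})$ share their length-$w_s$ suffix, which is the paper's $R_i/R_{i-1}$ step. Your level-by-level tree argument for normalization spells out what the paper only asserts.

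The one step that fails is your bound on the final stopping factor in the third step. You note yourself that a token $t$ with $\mathrm{ch}(t)=\mathcal Y$ can be emitted only as the last token. But if $z_m$ is such a token, its stop factor is $1-\sum_{b\in\mathcal Y}q(b\mid c_{d+1})=0$. Then $\tilde q(z_m\mid z_{m-w}^{m-1})=0$, and the token loss is infinite rather than within $O(\log(1/\lambda_q))$ of $\ell(q;y_1^n)$. The paper has the same hole, covered only by its phrase ``up to the final finite-sequence boundary.''

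No better choice of $\tilde q$ repairs this. Take $\mathcal Y=\{0,1\}$, $\mathcal Z=\{0,1,00,01\}$, $w=w_s=1$ and uniform $q$. The sequences $10$, $000$, $010$ parse as $(1,0)$, $(00,0)$, $(01,0)$, so finite loss on them forces $p_c:=\tilde q(0\mid c)>0$ for every $c\in\{1,00,01\}$. At each step, since $\sum_{t\in\{1,00,01\}}2^{-|t|}=1$, there is a token $t\in\{1,00,01\}$ with $\tilde q(t\mid c)\le(1-p_c)2^{-|t|}$. Choosing such a token every time builds valid greedy parses whose excess over $\ell(q;y_1^n)=n-1$ grows linearly in the number of tokens.

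So the pathwise statement needs an extra hypothesis, e.g.\ $\operatorname{Ext}_{\mathcal Z}(z)\neq\mathcal Y$ for all $z\in\mathcal Z$, or a convention that drops the final token. With either, your proof goes through unchanged. The population-level uses of the theorem are unaffected, since such tokens never occur in the stationary token process.
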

    \end{mainthmbox}
    The proof is constructive, deferred to Appendix~\ref{proof:predictor-transfer}.
    
    The theorem shows that source span is not only an intuitive measure of context length; it is enough to transfer prediction rules. Whenever every $w$-token window spans at least $w_s$ source symbols, a $w$-token predictor can reproduce the behavior of any $w_s$-context source predictor, with no asymptotic loss.

    This immediately gives a guarantee on the optimal token-level loss.
    
    \begin{mainthmbox}
    \begin{corollary}
    \label{cor:tok_no_loss}
    Let $\{Y_i\}$ be an ergodic stationary $k^{\text{th}}$-order Markov process on
    $\mathcal{Y}$, and let $\{Z_j\}=\Pi_{\mathcal{Z}}(\{Y_i\})$ be its tokenized
    process. For any token-window length $w\ge 1$, let
    \[
        w_s(w)
        :=
        \min\left\{
            S(z_1^w)
            :
            z_1^w \text{ is a valid length-$w$ token window}
        \right\}.
    \]
    Then
    \[
        \mathcal{L}_Z^{\mathrm{tok}}(w)
        \le
        \mathcal{L}_Y(w_s(w)).
    \]
    \end{corollary}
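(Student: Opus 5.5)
We derive the corollary from Theorem~\ref{thm:predictor_transfer} by an ergodic averaging argument, followed by a limiting step that removes the strict-positivity assumption. Write $w_s=w_s(w)$. We first treat a strictly positive source predictor $q\in\mathcal{Q}_{w_s}^{\mathcal{Y}}$ with $q\ge\lambda_q$. Theorem~\ref{thm:predictor_transfer} gives a token predictor $\tilde q\in\mathcal{Q}_w^{\mathcal{Z}}$ such that, for every source string $y_1^n$,
\[
\ell(\tilde q;\Pi_{\mathcal{Z}}(y_1^n))=\ell(q;y_1^n)+O(1).
\]
We divide by $n$ and let $n\to\infty$ along a typical realization of $\{Y_i\}$. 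By Birkhoff's theorem, the right-hand side tends to $L_Y(q)$ almost surely. The loss $L_Y(q)$ is finite, since $-\log q\le\log(1/\lambda_q)$.

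For the left-hand side, let $m_n$ be the number of tokens in $\Pi_{\mathcal{Z}}(y_1^n)$. We write
\[
\frac{1}{n}\ell(\tilde q;\cdot)=\frac{m_n}{n}\cdot\frac{1}{m_n}\ell(\tilde q;\cdot).
\]
Stationarity and ergodicity of $\{Z_j\}$, in the stationary-parse sense of the paper's footnote, together with Birkhoff's theorem applied to the token process, give $\frac{1}{m_n}\ell(\tilde q;\cdot)\to\mathbb{E}[-\log\tilde q(Z_0\mid Z_{-w}^{-1})]$ almost surely. Since $n=\sum_{j\le m_n}|\Gamma_{\mathcal{Z}}(Z_j)|$ up to a bounded tail, the ergodic theorem also gives $n/m_n\to\mathbb{E}|\Gamma_{\mathcal{Z}}(Z_0)|$. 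Multiplying the two limits, the left-hand side converges to $L_Z^{\mathrm{tok}}(\tilde q)$. Hence
\[
\mathcal{L}_Z^{\mathrm{tok}}(w)\le L_Z^{\mathrm{tok}}(\tilde q)=L_Y(q).
\]

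It remains to approximate the optimal source predictor. The minimizer $q^*(y\mid c)=\Pr(Y_0=y\mid Y_{-w_s}^{-1}=c)$ attains $\mathcal{L}_Y(w_s)=H(Y_0\mid Y_{-w_s}^{-1})$, but it may vanish on some entries. We therefore mix it with the uniform distribution:
\[
q_\epsilon=(1-\epsilon)q^*+\epsilon/|\mathcal{Y}|,
\]
which is strictly positive. Since $-\log q_\epsilon\le-\log q^*-\log(1-\epsilon)$ on the support, we get $L_Y(q_\epsilon)\le\mathcal{L}_Y(w_s)-\log(1-\epsilon)$. Letting $\epsilon\downarrow 0$ yields $\mathcal{L}_Z^{\mathrm{tok}}(w)\le\mathcal{L}_Y(w_s(w))$. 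If $\mathcal{L}_Z^{\mathrm{tok}}$ is defined as a minimum, this infimum bound is what we need; one can also argue existence of the minimizer by compactness of the finite-dimensional simplex product, with lower semicontinuity of the loss.

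The main obstacle is the almost-sure identification of the normalized token loss with $L_Z^{\mathrm{tok}}(\tilde q)$. This requires two things. First, the stationary token process must be tied to the one-sided greedy parse of a typical source sequence, whose initial artifact only affects boundedly many, or asymptotically negligible, tokens. Second, $\mathbb{E}[-\log\tilde q(Z_0\mid Z_{-w}^{-1})]$ must be finite, so that Birkhoff's theorem applies. I would obtain the finiteness from the transfer identity itself: the construction of $\tilde q$ makes each token's log-loss equal to a sum of source-level losses over a bounded-length token, each bounded by $\log(1/\lambda_q)$, apart from boundary corrections. I would also check that the $O(1)$ term in Theorem~\ref{thm:predictor_transfer} is uniform in $n$, which the theorem states.
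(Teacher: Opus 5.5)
Your proposal is correct and follows essentially the same route as the paper: smooth the optimal $w_s$-context source predictor with the uniform distribution so it is strictly positive, apply Theorem~\ref{thm:predictor_transfer} to obtain $\mathcal{L}_Z^{\mathrm{tok}}(w)\le L_Y(q_\epsilon)$, and then let the smoothing vanish. Your two additions only spell out steps the paper asserts in one line. The first is the Birkhoff/renewal argument that turns the pathwise $O(1)$ identity into $L_Z^{\mathrm{tok}}(\tilde q)=L_Y(q)$, which is consistent with the paper's footnote reading of the stationary token process. The second is the explicit bound $L_Y(q_\epsilon)\le\mathcal{L}_Y(w_s)+\log_2\frac{1}{1-\epsilon}$, which you use in place of the paper's appeal to dominated convergence.
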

    \end{mainthmbox} --- the proof is given in Appendix~\ref{proof:tok_no_loss}

    For a $k$th-order Markov source, $\mathcal{L}_Y(w_s)$ reaches the entropy rate once $w_s\ge k$. Thus a token predictor with window $w<k$ can still achieve the Markov optimum, provided that its $w$ tokens always span at least $k$ source symbols. This explains the empirical observation: Even though greedy parsing introduces boundary-dependent constraints on which tokens can appear next, these constraints do not prevent transfer --- a token-level predictor
can still match the loss of a source-level predictor with the corresponding
extended source context.
    
\paragraph{From worst-case to typical span.}
The deterministic condition above captures the mechanism, but it is too
stringent for practical tokenizers.  Real vocabularies contain short tokens,
often including single-symbol tokens.  Hence the worst-case span of a
$w$-token window may be small even when most windows cover much more source
history.  To capture this behavior, we relax the span requirement: instead of
requiring every $w$-token window to cover $w_s$ source symbols, we require this with high probability.

\begin{definitionbox}
\begin{definition}[Typical source span]
\label{def:context_extending}
Let $\{Y_i\}$ be a stationary source on $\mathcal{Y}$, and let
$\Pi_{\mathcal{Z}}$ be a tokenizer over $\mathcal{Y}^*$.  Fix a token-window
length $w$ and a target source span $w_s$.  We say that
$\Pi_{\mathcal{Z}}$ is $(\epsilon,w,w_s)$-typical with respect to $\{Y_i\}$ if
\[
    \Pr\!\left[
        S(Z_1^w) < w_s
    \right]
    \le
    \epsilon,
\]
where $Z_1^w$ is a window of $w$ consecutive tokens drawn from the tokenized
process.
\end{definition}
\end{definitionbox}
    
    Here $\epsilon$ is the probability that a $w$-token context fails to contain
    $w_s$ source symbols. When $\epsilon=0$, The deterministic setting of
    Corollary~\ref{cor:tok_no_loss} is recovered. When $\epsilon>0$, the transfer may fail on a
    small fraction of windows, and the loss guarantee must pay for those failures.
    To express this penalty per source symbol, we define the expected token length
    $
        \alpha_{\mathcal{Z}}
        :=
        \mathbb{E}\!\left[
            |\Gamma_{\mathcal{Z}}(Z_0)|
        \right]
    $
    and the
    tokenizer compression rate
    \[
        R_{\mathcal{Z}}
        :=
        \frac{\log_2 |\mathcal{Z}|}
        {\alpha_{\mathcal{Z}}\log_2|\mathcal{Y}|}.
    \]
    Thus $R_{\mathcal{Z}}$ is the uniform-code rate of the tokenizer, measured in units of source-symbol bits.
    
    \begin{mainthmbox}
    \begin{theorem}[Tokenization gain]\label{thm:tokenization_gain}
    Let $\{Y_i\}$ be a stationary $k^{\text{th}}$-order Markov process on
    $\mathcal{Y}$, and let $\Pi_{\mathcal{Z}}$ be a tokenizer that is
    $(\epsilon,w,w_s)$-typical with respect to $\{Y_i\}$. Then
    \[
        \mathcal{L}_Z^{\mathrm{tok}}(w)
        \le
        \mathcal{L}_Y(w_s)
        +
        \epsilon\,R_{\mathcal{Z}}\log_2|\mathcal{Y}|.
    \]
    \end{theorem}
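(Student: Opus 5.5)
We will build a single mixed token predictor. On windows whose source span is large enough it uses the construction of Theorem~\ref{thm:predictor_transfer}; on the remaining windows it falls back to a uniform distribution over $\mathcal{Z}$. Let $q^\star\in\mathcal{Q}_{w_s}^{\mathcal{Y}}$ be the optimal source predictor, $q^\star(y\mid c)=\Pr(Y_0=y\mid Y_{-w_s}^{-1}=c)$. We will first replace it by the smoothed predictor $q_\delta=(1-\delta)q^\star+\delta/|\mathcal{Y}|$. This smoothing is needed because Theorem~\ref{thm:predictor_transfer} requires strict positivity, and its cost $L_Y(q_\delta)-\mathcal{L}_Y(w_s)$ tends to $0$ as $\delta\to 0$. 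Call a token context $z_{-w}^{-1}$ \emph{good} if $S(z_{-w}^{-1})\ge w_s$. On good contexts, the last $w_s$ decoded source symbols are available. We define $\tilde q(\cdot\mid z_{-w}^{-1})$ by the transfer construction: $\tilde q(z\mid\cdot)$ is the $q_\delta$-probability of the source continuation $\Gamma_{\mathcal{Z}}(z)$, renormalized according to the greedy-parse constraints exactly as in the proof of Theorem~\ref{thm:predictor_transfer}. On bad contexts we set $\tilde q(z\mid\cdot)=1/|\mathcal{Z}|$.

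Next we bound the expected token loss by splitting on the event $B=\{S(Z_{-w}^{-1})<w_s\}$, which by stationarity of $\{Z_j\}$ and typicality has $\Pr(B)\le\epsilon$. On $B$ the loss is exactly $\log_2|\mathcal{Z}|$, so the bad part contributes at most $\epsilon\log_2|\mathcal{Z}|$ per token. Dividing by $\alpha_{\mathcal{Z}}$ turns this into $\epsilon\log_2|\mathcal{Z}|/\alpha_{\mathcal{Z}}=\epsilon R_{\mathcal{Z}}\log_2|\mathcal{Y}|$ per source symbol, which is exactly the stated penalty. On $B^c$ we will show that the per-token loss is at most the cumulative $q_\delta$-loss of the source symbols inside $Z_0$, up to boundary terms. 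Then $\mathbb{E}[\text{loss}\,\mathbf 1_{B^c}]\le \mathbb{E}\bigl[\sum_{y\in\Gamma(Z_0)}-\log q_\delta(y\mid\cdot)\bigr]$. Dropping the indicator (the terms are nonnegative) and using the ergodic identity that the tokens' source symbols tile the source gives $\alpha_{\mathcal{Z}}L_Y(q_\delta)$. Dividing by $\alpha_{\mathcal{Z}}$ and letting $\delta\to0$ will finish the proof.

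The main obstacle is the per-token comparison on good contexts. Theorem~\ref{thm:predictor_transfer} gives only a cumulative equality up to $O(1)$, and its $O(1)$ may come from a telescoping ratio of normalizers, namely the probabilities of the suppressed continuations implied by the greedy parse. Such a telescoping sum breaks when it is interrupted by bad windows. My first attempt would be to show pointwise that the transfer predictor satisfies $-\log\tilde q(z_0\mid\cdot)\le\sum_{y\in\Gamma(z_0)}-\log q_\delta(y\mid\cdot)+\Delta_0-\Delta_{1}$, where $\Delta$ is a bounded potential. Boundedness should follow from $\lambda_q=\delta/|\mathcal{Y}|$ and the finite depth of the vocabulary. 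Given that bound, each interruption by a bad window costs at most a constant. If that constant is not already absorbed, I would handle it in one of two ways: either bound the potential difference by $0$ via the inequality renormalized probability $\ge$ raw continuation probability (the normalizer is at most $1$), or redefine $\tilde q$ on good contexts to assign each token its unnormalized path probability and put the leftover mass on tokens arbitrarily. With the unnormalized assignment, $-\log\tilde q(z_0)\le\sum-\log q_\delta$ holds directly, with no potential at all. I expect this simpler subprobability argument to work, so it is the route I would try first.
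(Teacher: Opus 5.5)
Your overall plan matches the paper's proof. You smooth $q^\star$, use the transfer rule on windows with $S\ge w_s$ and the uniform law on $B$, read off exactly $\epsilon\log_2|\mathcal Z|/\alpha_{\mathcal Z}$ from $B$, and let $\delta\downarrow 0$. The gap is the good-window step: you leave it open, and the repairs you propose do not work.

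\emph{The unnormalized assignment.} It is not a sub-probability. Because $\mathcal Z$ is prefix-closed and contains $\mathcal Y$, nested tokens double count. Take $\mathcal Z=\{0,1,01,010\}$ and a preceding token $1$, which has no extension. Every token is then legal, and $\sum_z q_\delta^{\mathrm{seq}}(\Gamma_{\mathcal Z}(z)\mid h)=1+q_\delta(0)q_\delta(1\mid 0)\bigl(1+q_\delta(0\mid 01)\bigr)>1$, so there is no leftover mass to place. To get a genuine (sub)distribution you must include the stop probability $R_0$. Writing $s_0$ for the $q_\delta$-loss of the symbols of $z_0$, you then get $-\log_2\tilde q(z_0)=s_0-\log_2 R_0$. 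Here $\mathbb E[-\log_2 R_0]>0$ is a per-token cost that does not go away.

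\emph{Renormalized $\ge$ raw.} The same factor defeats this. The numerator of the transfer rule contains $R_0\le 1$, so dividing by $R_{-1}\le 1$ only gives $\tilde q(z_0)\ge q_\delta^{\mathrm{seq}}(\Gamma_{\mathcal Z}(z_0)\mid H_0)\,R_0$.

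\emph{A constant per interruption.} The potential is bounded only by $\log_2(1/\lambda_q)=\log_2(|\mathcal Y|/\delta)$. Charging this per bad window costs up to $\epsilon\log_2(|\mathcal Y|/\delta)/\alpha_{\mathcal Z}$ per source symbol. That diverges as $\delta\downarrow 0$, so you can no longer remove the smoothing.

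What you are missing is that no per-token comparison with the source loss is needed. Compare instead with the globally defined transferred predictor $\tilde q_{\mathrm{tr}}$. The paper takes it as a $w_s$-token predictor, so its windows always span at least $w_s$ symbols.
\begin{itemize}
\item Its loss is $\ell_{\mathrm{tr}}=s_0+\Delta_0-\Delta_{-1}$, with $\Delta_j=-\log_2 R_j$ computed from the true source history.
\item It is the log-loss of a genuine probability, so $\ell_{\mathrm{tr}}\ge 0$ everywhere.
\item $\mathbb E[\ell_{\mathrm{tr}}]=\alpha_{\mathcal Z}L_Y(q_\delta)$. This follows from Theorem~\ref{thm:predictor_transfer} plus the ergodic theorem, or directly: $\Delta$ is bounded, so stationarity gives $\mathbb E[\Delta_0]=\mathbb E[\Delta_{-1}]$.
\item The transfer rule depends only on $z_{-1}$ and the $w_s$-suffix, so your predictor's loss equals $\ell_{\mathrm{tr}}$ on $B^c$.
\end{itemize}
Hence $\mathbb E[\ell\,\mathbf 1_{B^c}]=\mathbb E[\ell_{\mathrm{tr}}\mathbf 1_{B^c}]\le\mathbb E[\ell_{\mathrm{tr}}]$. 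In other words, drop the indicator on the whole nonnegative transferred loss, not on $s_0$ alone. The telescoping is then never interrupted, and the rest of your argument goes through unchanged.
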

    \end{mainthmbox}
    \paragraph{Interpretation.}
Theorem~\ref{thm:tokenization_gain} converts a typical-span statement into a
prediction guarantee.  For a fixed token window $w$, one may choose the
largest source span $w_s$ for which the slack $\epsilon(w,w_s)R_{\mathcal Z}\log_2|\mathcal Y|$
remains small.  The theorem then shows that the best $w$-context token
predictor matches a $w_s$-context source predictor up to this slack.  In this
sense, $w_s$ is an effective source context supplied by the tokenizer at
window length $w$.  The proof is deferred to
Appendix~\ref{proof:tokenization_gain}.
    
    
    
    
    \begin{figure}[h]
    \centering

    \begin{minipage}[c]{0.44\linewidth}
    \raggedright
   \textbf{Measuring effective context in real tokenizers.}
The theorem suggests a direct diagnostic.  Fix a token-window length $w$,
vary the target source span $w_s$, and estimate
  $  \epsilon(w,w_s)
    =
    \Pr\!\left[S(Z_1^w)<w_s\right].$
Equivalently, plot the slack $
    \epsilon(w,w_s)\,R_{\mathcal{Z}}\log_2|\mathcal{Y}|.$
Figure~\ref{fig:cdf_scaled} shows this quantity on WikiText-103 for trained
and pretrained tokenizers.  Across tokenizers, the slack is close to zero for source spans up to roughly $3$--$5$ times the token-window length.
    
    \end{minipage}
    \hfill
    \begin{minipage}[c]{0.54\linewidth}
        \centering
        \includegraphics[width=0.9\linewidth]{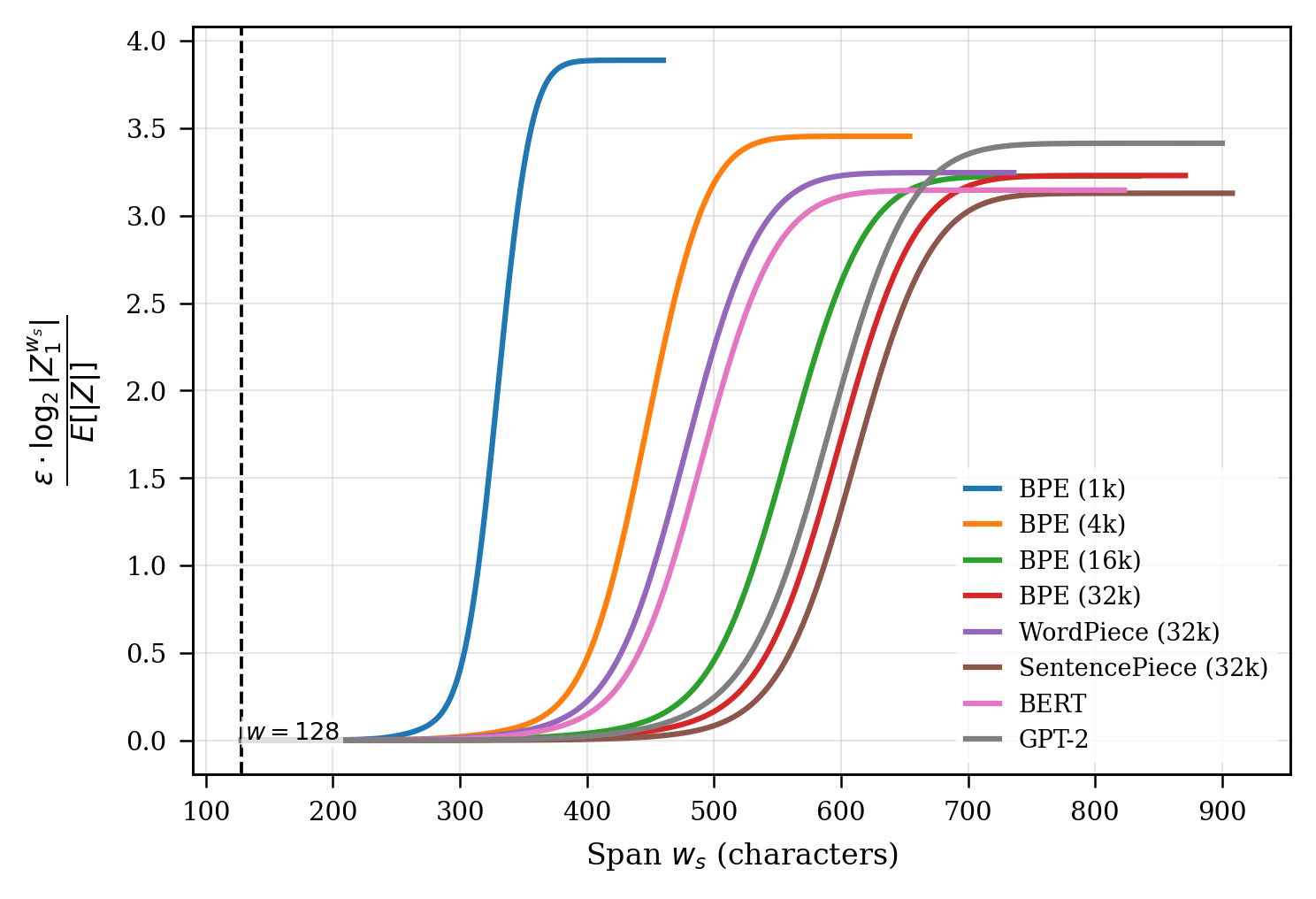}
        \captionof{figure}{
    Slack term $\epsilon(w,w_s)R_{\mathcal Z}\log_2|\mathcal Y|$ versus target
    source span $w_s$ for trained and pretrained tokenizers on WikiText-103, with
    token-window length $w=128$. Slack is near zero up to
    $w_s \approx 3$--$5\times w$.
    }
    \label{fig:cdf_scaled}
    \end{minipage}
    
    \end{figure}
    \paragraph{Existence under a vocabulary budget.}
    The typical-span condition is not only a diagnostic. In
    Appendix~\ref{app:typical-heavy-hitting-tokenizers}, we show under heavy-hitting assumptions for learned greedy tokenizers
    as in \citep{rajaramanAnalysisTokenizationTransformers2024}, a budget-$d$ greedy
    tokenizer can satisfy the typical-span condition with a target source span that
    grows like $w\log d$. Thus, increasing the vocabulary budget can provably make a
    fixed $w$-token window simulate a much longer source-context window, up to
    the slack term in Thm~\ref{thm:tokenization_gain}.
    
    \section{Open Questions and Limitations}
    \label{sec:discussion}
    
    Our results isolate an achievable-loss effect, but not the full optimization or
    expressivity picture for trained Transformers. The predictor-transfer arguments
    construct finite-context predictors, but they do not show that a 
    Transformer architecture can realize them, or that training will find them. In
    Appendix~\ref{app:matched_plateau}, tokenization appears to
    improve convergence speed; explaining this remains open.
    
    Our theory also gives a criterion for tokenizer quality, not a complete
    tokenizer-design method. It suggests that average token length is insufficient:
    the lower tail of the source-span distribution matters. This motivates training
    tokenizers to optimize effective context while balancing vocabulary size,
    output-layer cost, and rare-token effects.

    \section{Conclusion}
    We studied how lossless representations affect finite-context prediction.
    Fragmentation can hurt by introducing phase ambiguity and missing source
    history, even though the original sequence is recoverable. This gives a
    theoretical account of the finite-context gap observed in byte- and
    character-level models such as ByT5 and CANINE relative to subword-tokenized
    models. Tokenization---BPE, WordPiece, and related methods---can have the
    opposite effect: a short token window can behave like a longer source window
    when it reliably spans enough source symbols. Together, the results show that
    the relevant quantity for Transformers is not only raw context length, but the
    effective source context exposed by the representation.

    \bibliographystyle{plainnat}
    \bibliography{references}
    \newpage
    \appendix
    
    \section{Proof of Theorems}
    \subsection{Proof of Theorem~\ref{thm:fragmentation}}
    \label{proof:fragmentation}
    
    \begin{proof}
    Recall that the process \(\{X_j\}_{j\in\mathbb Z}\) is cyclostationary with
    period \(M\). Indeed, shifting the fragmented process by \(M\) positions
    corresponds exactly to shifting the original source process by one symbol while
    preserving the phase within each fragmented block. Since \(\{Y_i\}\) is
    stationary, for any indices \(j_1,\ldots,j_r\),
    \[
        (X_{j_1},\ldots,X_{j_r})
        \overset{d}{=}
        (X_{j_1+M},\ldots,X_{j_r+M}).
    \]
    
    We now randomize the phase. Let \(\Theta\) be uniform on
    \(\{1,\ldots,M\}\), independent of \(\{X_j\}\), and define
    \[
        X'_j := X_{j+\Theta},
        \qquad j\in\mathbb Z.
    \]
    By the law of total probability over \(\Theta\),
    \begin{align*}
    &\Pr\!\left(
        X'_{j_1}=x_1,\ldots,X'_{j_r}=x_r
    \right) \\
    &\qquad =
    \sum_{\theta=1}^{M}
    \Pr\!\left(
        X_{j_1+\theta}=x_1,\ldots,X_{j_r+\theta}=x_r
        \,\middle|\, \Theta=\theta
    \right)
    \Pr(\Theta=\theta).
    \end{align*}
    Since \(\Theta\) is independent of \(\{X_j\}\) and uniform on
    \(\{1,\ldots,M\}\), this gives
    \begin{align*}
    &\Pr\!\left(
        X'_{j_1}=x_1,\ldots,X'_{j_r}=x_r
    \right) \\
    &\qquad =
    \frac{1}{M}
    \sum_{\theta=1}^{M}
    \Pr\!\left(
        X_{j_1+\theta}=x_1,\ldots,X_{j_r+\theta}=x_r
    \right).
    \end{align*}
    Thus the joint distribution of \(\{X'_j\}\) is obtained by
    averaging over the \(M\) phases of the cyclostationary process
    \(\{X_j\}\). Since shifting all indices of \(X'\) only permutes these \(M\)
    phase classes, \(\{X'_j\}\) is stationary.
    
    For any \(q\in\mathcal{Q}_{Mw}^{\mathcal X}\), by definition,
    \[
        L_X^{\mathrm{frag}}(q)
        =
        \sum_{\theta=1}^{M}
        \mathbb{E}\!\left[
            -\log_2
            q\!\left(
                X_{\theta}
                \mid
                X_{\theta-Mw}^{\theta-1}
            \right)
        \right].
    \]
    Using the phase-randomized process, by the law of total expectation,
    \begin{align*}
    &\mathbb{E}\!\left[
        -\log_2
        q\!\left(
            X'_0
            \mid
            (X')_{-Mw}^{-1}
        \right)
    \right] \\
    &\qquad =
    \mathbb{E}\!\left[
        -\log_2
        q\!\left(
            X_{\Theta}
            \mid
            X_{\Theta-Mw}^{\Theta-1}
        \right)
    \right] \\
    &\qquad =
    \frac{1}{M}
    \sum_{\theta=1}^{M}
    \mathbb{E}\!\left[
        -\log_2
        q\!\left(
            X_{\theta}
            \mid
            X_{\theta-Mw}^{\theta-1}
        \right)
    \right].
    \end{align*}
    Therefore,
    \[
        L_X^{\mathrm{frag}}(q)
        =
        M \cdot
        \mathbb{E}\!\left[
            -\log_2
            q\!\left(
                X'_0
                \mid
                (X')_{-Mw}^{-1}
            \right)
        \right].
    \]
    Since \(\{X'_j\}\) is stationary, the optimal \(Mw\)-context predictor for
    \(X'\) achieves the conditional entropy. Hence
    \begin{equation}
        \mathcal{L}_X^{\mathrm{frag}}(Mw)
        =
        M H\!\left(
            X'_0
            \mid
            (X')_{-Mw}^{-1}
        \right)
        =
        M H\!\left(
            X_{\Theta}
            \mid
            X_{\Theta-Mw}^{\Theta-1}
        \right).
        \label{eq:frag-loss-phase-randomized}
    \end{equation}
    
    We now express the source-level loss in terms of the fragmented variables. Since
    \(\varphi\) is injective, the block \(X_1^M\) contains exactly the same
    information as \(Y_1\), and the block \(X_{1-Mw}^{0}\) contains exactly the
    same information as \(Y_{1-w}^{0}\). Therefore,
    \[
        \mathcal{L}_Y(w)
        =
        H(Y_1\mid Y_{1-w}^{0})
        =
        H(X_1^M\mid X_{1-Mw}^{0}).
    \]
    Applying the chain rule,
    \begin{equation}
        \mathcal{L}_Y(w)
        =
        \sum_{\theta=1}^{M}
        H\!\left(
            X_{\theta}
            \mid
            X_{1-Mw}^{\theta-1}
        \right).
        \label{eq:Y-loss-chain-frag}
    \end{equation}
    
    Next, decompose the phase-randomized loss. Since \(\Theta\) is uniform,
    \[
        H\!\left(
            X_{\Theta}
            \mid
            X_{\Theta-Mw}^{\Theta-1},
            \Theta
        \right)
        =
        \frac{1}{M}
        \sum_{\theta=1}^{M}
        H\!\left(
            X_{\theta}
            \mid
            X_{\theta-Mw}^{\theta-1}
        \right).
    \]
    Therefore,
    \begin{align}
    & M H\!\left(
            X_{\Theta}
            \mid
            X_{\Theta-Mw}^{\Theta-1}
        \right)
        -
        \sum_{\theta=1}^{M}
        H\!\left(
            X_{\theta}
            \mid
            X_{\theta-Mw}^{\theta-1}
        \right)
        \nonumber \\
    &\qquad =
        M I\!\left(
            \Theta;
            X_{\Theta}
            \mid
            X_{\Theta-Mw}^{\Theta-1}
        \right).
        \label{eq:phase-ambiguity-step}
    \end{align}
    
    Using \eqref{eq:frag-loss-phase-randomized} and
    \eqref{eq:Y-loss-chain-frag}, we add and subtract
    \(\sum_{\theta=1}^{M}
    H(X_{\theta}\mid X_{\theta-Mw}^{\theta-1})\) to obtain
    \begin{align*}
    &\mathcal{L}_X^{\mathrm{frag}}(Mw) - \mathcal{L}_Y(w) \\
    &=
    \Bigg[
        M H\!\left(
            X_{\Theta}
            \mid
            X_{\Theta-Mw}^{\Theta-1}
        \right)
        -
        \sum_{\theta=1}^{M}
        H\!\left(
            X_{\theta}
            \mid
            X_{\theta-Mw}^{\theta-1}
        \right)
    \Bigg] \\
    &\quad+
    \sum_{\theta=1}^{M}
    \Bigg[
        H\!\left(
            X_{\theta}
            \mid
            X_{\theta-Mw}^{\theta-1}
        \right)
        -
        H\!\left(
            X_{\theta}
            \mid
            X_{1-Mw}^{\theta-1}
        \right)
    \Bigg].
    \end{align*}
    The first bracket is the phase-ambiguity term by
    \eqref{eq:phase-ambiguity-step}. For the second term, observe that when
    \(\theta=1\),
    \[
        X_{\theta-Mw}^{\theta-1}
        =
        X_{1-Mw}^{0}
        =
        X_{1-Mw}^{\theta-1},
    \]
    so the corresponding difference is zero. For \(\theta\ge 2\),
    \[
        X_{1-Mw}^{\theta-1}
        =
        \left(
            X_{1-Mw}^{\theta-Mw-1},
            X_{\theta-Mw}^{\theta-1}
        \right).
    \]
    Hence
    \begin{align*}
    & H\!\left(
            X_{\theta}
            \mid
            X_{\theta-Mw}^{\theta-1}
        \right)
        -
        H\!\left(
            X_{\theta}
            \mid
            X_{1-Mw}^{\theta-1}
        \right) \\
    &\qquad =
        I\!\left(
            X_{\theta};
            X_{1-Mw}^{\theta-Mw-1}
            \,\middle|\,
            X_{\theta-Mw}^{\theta-1}
        \right).
    \end{align*}
    Combining the two parts gives
    \[
    \mathcal{L}_X^{\mathrm{frag}}(Mw) - \mathcal{L}_Y(w)
    =
    \sum_{\theta = 2}^{M}
    I\!\left(
        X_{\theta};
        X_{1-Mw}^{\theta-Mw - 1}
        \,\middle|\,
        X_{\theta-Mw}^{\theta-1}
    \right)
    +
    M I\!\left(
        \Theta;
        X_{\Theta}
        \,\middle|\,
        X_{\Theta-Mw}^{\Theta-1}
    \right),
    \]
    which is the desired decomposition.
    
    It remains to justify the final claim. The context-deficit term measures the
    information about \(X_\theta\) contained in the missing prefix
    \(X_{1-Mw}^{\theta-Mw-1}\) beyond the fragmented context
    \(X_{\theta-Mw}^{\theta-1}\). When \(w>k\), the fragmented context
    \(X_{\theta-Mw}^{\theta-1}\) already contains the last \(k\) complete source
    symbols before \(Y_1\), namely \(Y_{1-k}^{0}\), as well as the previous
    fragments \(X_1^{\theta-1}\) of the current symbol. The missing prefix belongs
    only to source symbols older than this Markov context. Since \(\{Y_i\}\) is
    \(k\)-th order Markov and \(X_\theta\) is a deterministic function of \(Y_1\),
    the missing prefix carries no additional information about \(X_\theta\) once
    \(X_{\theta-Mw}^{\theta-1}\) is given. Therefore,
    \[
        I\!\left(
            X_{\theta};
            X_{1-Mw}^{\theta-Mw - 1}
            \,\middle|\,
            X_{\theta-Mw}^{\theta-1}
        \right)
        =
        0,
        \qquad \theta=2,\ldots,M,
    \]
    whenever \(w>k\). Thus the context-deficit term vanishes for \(w>k\).
    \end{proof}

    \newpage
    \subsection{Proof of Theorem~\ref{thm:predictor_transfer}}
    \label{proof:predictor-transfer}
    
    \begin{proof}
    Throughout this proof, we use the prefix-closedness of the vocabulary:
    whenever \(z\in\mathcal Z\), every nonempty prefix of \(z\) also belongs to
    \(\mathcal Z\). Therefore, if a token \(z\) can be extended inside the
    vocabulary, it can be extended by one source symbol at a time.
    
    For \(z\in\mathcal Z\), define the set of one-symbol vocabulary extensions of
    \(z\) by
    \[
        \operatorname{Ext}_{\mathcal Z}(z)
        :=
        \left\{
            a\in\mathcal Y :
            za\in\mathcal Z
        \right\}.
    \]
    Thus \(\operatorname{Ext}_{\mathcal Z}(z)\) is the set of source symbols by
    which the current token \(z\) can be extended while remaining inside the
    vocabulary.

    For a string \(s=s_1\cdots s_m\in\mathcal Y^*\) with \(m\ge w_s\), define
    \[
        \operatorname{suf}_{w_s}(s)
        :=
        s_{m-w_s+1}^m .
    \]
    That is, \(\operatorname{suf}_{w_s}(s)\) is the suffix of \(s\) of length
    \(w_s\).
    
    Given a source-level predictor \(q\in\mathcal Q_{w_s}^{\mathcal Y}\), define
    its sequential extension \(q^{\mathrm{seq}}\) by
    \[
        q^{\mathrm{seq}}(u_1^r\mid h)
        :=
        \prod_{j=1}^{r}
        q\!\left(
            u_j
            \,\middle|\,
            \operatorname{suf}_{w_s}(h u_1^{j-1})
        \right),
    \]
    for every \(u_1^r\in\mathcal Y^*\) and every history \(h\in\mathcal Y^*\) with
    \(|h|\ge w_s\).
    
    We now construct the token-level predictor induced by \(q\). For readability,
    write
    \[
        \gamma(z) := \Gamma_{\mathcal Z}(z),
        \qquad z\in\mathcal Z .
    \]
    For a token context \(z_{i-w}^{i-1}\), define its expanded source history by
    \[
        H_i
        :=
        \Gamma_{\mathcal Z}(z_{i-w}^{i-1}).
    \]
    By the definition of \(w_s=w_s(w)\), every valid \(w\)-token context spans at
    least \(w_s\) source symbols. Hence \(q^{\mathrm{seq}}(\cdot\mid H_i)\) is
    well-defined.
    
    The token predictor must assign probabilities to tokens, not merely to source
    symbols. Under greedy parsing, a candidate token \(z_i\) is emitted exactly when
    the source continuation first matches \(\gamma(z_i)\), and then stops there
    rather than extending \(z_i\) by another source symbol in
    \(\operatorname{Ext}_{\mathcal Z}(z_i)\). In addition, since the previous token
    \(z_{i-1}\) has already been emitted, we must condition on the fact that it was
    not extended; equivalently, the next source symbol is not in
    \(\operatorname{Ext}_{\mathcal Z}(z_{i-1})\).
    
    Define \(\tilde q\in\mathcal Q_w^{\mathcal Z}\) by
    \[
    \tilde q(z_i\mid z_{i-w}^{i-1})
    :=
    \begin{cases}
    \displaystyle
    \frac{
    q^{\mathrm{seq}}\!\left(\gamma(z_i)\mid H_i\right)
    \left(
    1
    -
    \sum_{a\in\operatorname{Ext}_{\mathcal Z}(z_i)}
    q\!\left(
        a
        \,\middle|\,
        \operatorname{suf}_{w_s}(H_i\gamma(z_i))
    \right)
    \right)
    }{
    1
    -
    \sum_{a\in\operatorname{Ext}_{\mathcal Z}(z_{i-1})}
    q\!\left(
        a
        \,\middle|\,
        \operatorname{suf}_{w_s}(H_i)
    \right)
    },
    &
    \text{if } \gamma(z_i)_1\notin\operatorname{Ext}_{\mathcal Z}(z_{i-1}),
    \\[3ex]
    0,
    &
    \text{otherwise.}
    \end{cases}
    \]
    Here \(\gamma(z_i)_1\) denotes the first source symbol of the token \(z_i\).
    
    The numerator is the probability, under the source predictor \(q\), that the
    next emitted token is exactly \(z_i\): the continuation matches
    \(\gamma(z_i)\), and the following source symbol does not extend \(z_i\) to a
    longer vocabulary token. The denominator conditions on the event that the
    previous token \(z_{i-1}\) really ended, rather than being extended. The
    condition
    \[
        \gamma(z_i)_1\notin\operatorname{Ext}_{\mathcal Z}(z_{i-1})
    \]
    rules out candidates that could not legally follow \(z_{i-1}\) under greedy
    parsing.
    
    Because the sets \(\operatorname{Ext}_{\mathcal Z}(z)\) contain one-symbol
    extensions, the extension events in the sums above are disjoint. Moreover, the
    legal next-token events partition the event that \(z_{i-1}\) is not extended.
    Therefore the displayed rule defines a valid probability distribution over
    next tokens.
    
    We also record an important property of the construction. Apart from the
    identity of the immediately preceding token \(z_{i-1}\), which is needed to
    enforce greedy parsing, the predictor \(\tilde q\) depends on the expanded
    history \(H_i\) only through
    \[
        \operatorname{suf}_{w_s}(H_i).
    \]
    Indeed, every occurrence of \(q\) in the definition above is evaluated using
    only the suffix of length \(w_s\) of the relevant source history. This is the
    sense in which the transferred predictor uses only the \(w_s\)-symbol source
    suffix of the token context.
    
    We now verify that the cumulative log-loss matches that of \(q\) up to boundary
    terms. Fix a source sequence \(y_1^n\), and let
    \[
        z_1^m := \Pi_{\mathcal Z}(y_1^n)
    \]
    be its greedy tokenization. Thus
    \[
        \gamma(z_1)\gamma(z_2)\cdots\gamma(z_m)
        =
        y_1^n .
    \]
    
    Note that the normalizing denominators in the construction below are
    strictly positive on every valid greedy-parsing context. Indeed, if a token
    $z\in\mathcal Z$ is emitted at an interior token boundary, then it cannot be
    extendable by every source symbol. Otherwise, if
    $\operatorname{Ext}_{\mathcal Z}(z)=\mathcal Y$, the next source symbol would
    always extend $z$ to a longer vocabulary element, and greedy parsing would not
    emit $z$ at that boundary. Hence, for every emitted interior token $z$, there
    exists at least one $a\in\mathcal Y\setminus\operatorname{Ext}_{\mathcal Z}(z)$.
    Since $q$ is strictly positive, we have
    \[
        1-
        \sum_{a\in\operatorname{Ext}_{\mathcal Z}(z)}
        q(a\mid c)
        =
        \sum_{a\notin\operatorname{Ext}_{\mathcal Z}(z)}
        q(a\mid c)
        \ge \lambda_q
    \]
    for every context $c\in\mathcal Y^{w_s}$. Therefore the denominators appearing
    below are well-defined and bounded away from zero along valid greedy parses.
    For invalid token contexts, we define the predictor arbitrarily, for instance
    uniformly over $\mathcal Z$.

    We ignore the first \(w\) tokens and the corresponding initial source symbols.
    Since the vocabulary is finite, this initial part has bounded source length
    depending only on \(w\) and the maximum token length. Its contribution to the
    cumulative log-loss is therefore an \(O(1)\) boundary term.
    
    For \(i>w\), define
    \[
        R_i
        :=
        1
        -
        \sum_{a\in\operatorname{Ext}_{\mathcal Z}(z_i)}
        q\!\left(
            a
            \,\middle|\,
            \operatorname{suf}_{w_s}
            \!\left(
                \Gamma_{\mathcal Z}(z_{i-w+1}^{i})
            \right)
        \right).
    \]
    The quantity \(R_i\) is the \(q\)-probability that, after token \(z_i\) has been
    emitted, greedy parsing stops at \(z_i\) rather than extending it by one more
    source symbol.
    
    For \(i>w\), we have
    \[
        H_i\gamma(z_i)
        =
        \Gamma_{\mathcal Z}(z_{i-w}^{i}),
        \qquad
        \Gamma_{\mathcal Z}(z_{i-w+1}^{i})
        \text{ is a suffix of }
        \Gamma_{\mathcal Z}(z_{i-w}^{i}).
    \]
    Since every \(w\)-token window spans at least \(w_s\) source symbols, the two
    histories
    \[
        \Gamma_{\mathcal Z}(z_{i-w}^{i})
        \qquad\text{and}\qquad
        \Gamma_{\mathcal Z}(z_{i-w+1}^{i})
    \]
    have the same length-\(w_s\) source suffix. Therefore, the no-extension factor
    in the numerator of \(\tilde q(z_i\mid z_{i-w}^{i-1})\) is exactly \(R_i\),
    while the normalizing factor in the denominator is exactly \(R_{i-1}\). Hence,
    along the valid greedy parse,
    \[
        \tilde q(z_i\mid z_{i-w}^{i-1})
        =
        q^{\mathrm{seq}}\!\left(\gamma(z_i)\mid H_i\right)
        \frac{R_i}{R_{i-1}} .
    \]
    
    Multiplying over \(i=w+1,\ldots,m\), we obtain
    \begin{align*}
    \prod_{i=w+1}^{m}
        \tilde q(z_i\mid z_{i-w}^{i-1})
    &=
    \prod_{i=w+1}^{m}
        q^{\mathrm{seq}}\!\left(\gamma(z_i)\mid H_i\right)
        \prod_{i=w+1}^{m}
        \frac{R_i}{R_{i-1}} \\
    &=
    \prod_{i=w+1}^{m}
        q^{\mathrm{seq}}\!\left(\gamma(z_i)\mid H_i\right)
        \frac{R_m}{R_w}.
    \end{align*}

    Because $q$ is strictly positive, each interior stop probability satisfies
    $\lambda_q\le R_i\le 1$. Hence the telescoping factor $R_m/R_w$ contributes at
    most a constant
    \[
        \left|-\log_2(R_m/R_w)\right|
        \le
        2\log_2(1/\lambda_q),
    \]
    up to the final finite-sequence boundary. Since the vocabulary is finite, the
    initial and terminal boundary regions contain at most $O(1)$
    source symbols, and their contribution is
    $O(1)$.

    It remains to identify the first product. Since
    \[
        \gamma(z_1)\gamma(z_2)\cdots\gamma(z_m)=y_1^n,
    \]
    and since each \(H_i\) contains the same \(w_s\)-symbol source suffix that the
    source-level predictor \(q\) would use immediately before predicting the
    symbols of \(\gamma(z_i)\), we have
    \[
        \prod_{i=w+1}^{m}
        q^{\mathrm{seq}}\!\left(\gamma(z_i)\mid H_i\right)
        =
        \prod_{t\in I_n}
        q\!\left(
            y_t
            \,\middle|\,
            y_{t-w_s}^{t-1}
        \right),
    \]
    where \(I_n\) is the set of source positions remaining after removing the
    initial boundary symbols. Hence this product is exactly the source-level
    probability assigned by \(q\) to \(y_1^n\), up to boundary terms.
    
    Taking negative logarithms gives
    \[
        \ell\!\left(\tilde q;\Pi_{\mathcal Z}(y_1^n)\right)
        =
        \ell(q;y_1^n)
        +
        O(1).
    \]
    This proves the predictor-transfer statement.
    \end{proof}

        
    \subsection{Proof of Corollary~\ref{cor:tok_no_loss}}
    \label{proof:tok_no_loss}
    
    \begin{proof}
    Let $P_{w_s}(\cdot\mid c)$ denote the optimal source-level predictor with
    context length $w_s$, so that
    \[
        \mathcal L_Y(w_s)
        =
        \mathbb E\left[
            -\log_2 P_{w_s}(Y_0\mid Y_{-w_s}^{-1})
        \right].
    \]
    For $\eta\in(0,1)$, define the smoothed predictor
    \[
        q_\eta(y\mid c)
        :=
        (1-\eta)P_{w_s}(y\mid c)
        +
        \eta\frac{1}{|\mathcal Y|}.
    \]
    Then $q_\eta$ is strictly positive, since
    \[
        q_\eta(y\mid c)\ge \frac{\eta}{|\mathcal Y|}.
    \]
    Therefore Theorem~\ref{thm:predictor_transfer} applies to $q_\eta$. Hence there
    exists $\tilde q_\eta\in\mathcal Q_w^{\mathcal Z}$ such that the asymptotic
    tokenized loss of $\tilde q_\eta$ equals the source-level loss of $q_\eta$.
    Consequently,
    \[
        \mathcal L_Z^{\mathrm{tok}}(w)
        \le
        L_Y(q_\eta).
    \]
    Finally, by dominated convergence on the finite alphabet,
    \[
        L_Y(q_\eta)\to \mathcal L_Y(w_s)
        \qquad
        \text{as }\eta\downarrow 0.
    \]
    Thus
    \[
        \mathcal L_Z^{\mathrm{tok}}(w)
        \le
        \mathcal L_Y(w_s).
    \]
    \end{proof}
    
    \subsection{Proof of Theorem~\ref{thm:tokenization_gain}}
    \label{proof:tokenization_gain}
    
    \begin{proof}
    Let $P_{w_s}(\cdot\mid c)$ denote the optimal source-level predictor with
    context length $w_s$, so that
    \[
        \mathcal L_Y(w_s)
        =
        \mathbb E\!\left[
            -\log_2 P_{w_s}(Y_0\mid Y_{-w_s}^{-1})
        \right].
    \]
    For $\eta\in(0,1)$, define the smoothed source predictor
    \[
        q_\eta(y\mid c)
        :=
        (1-\eta)P_{w_s}(y\mid c)
        +
        \eta\frac{1}{|\mathcal Y|},
        \qquad
        y\in\mathcal Y,\; c\in\mathcal Y^{w_s}.
    \]
    Then $q_\eta$ is strictly positive, since
    \[
        q_\eta(y\mid c)\ge \frac{\eta}{|\mathcal Y|}
        \qquad
        \forall y\in\mathcal Y,\; c\in\mathcal Y^{w_s}.
    \]
    Therefore, by Theorem~\ref{thm:predictor_transfer}, applied to $q_\eta$, there
    exists a transferred token-level predictor
    \[
        \tilde q_{\mathrm{tr},\eta}
        \in
        \mathcal Q_{w_s}^{\mathcal Z}
    \]
    whose asymptotic per-source-symbol loss equals the source-level loss of
    $q_\eta$:
    \[
        L_Z^{\mathrm{tok}}(\tilde q_{\mathrm{tr},\eta})
        =
        L_Y(q_\eta).
    \]
    Note that here we intentionally chose the window of the transferred  token-level predictor to be $w_s$ so that it surely spans $w_s$ source symbols to be able to apply Theorem\ref{thm:predictor_transfer}.
    
    We use the suffix-dependence property of the construction in
    Theorem~\ref{thm:predictor_transfer}: the transferred predictor only needs the
    last $w_s$ source symbols contained in the expanded token history. Hence, if a
    $w$-token context $z_{i-w}^{i-1}$ satisfies
    \[
        S(z_{i-w}^{i-1})\ge w_s,
    \]
    then this $w$-token context contains enough source history to evaluate the same
    prediction rule as $\tilde q_{\mathrm{tr},\eta}$.
    
    Define a $w$-context token predictor
    $\tilde q_{\mathrm{typ},\eta}\in\mathcal Q_w^{\mathcal Z}$ by
    \[
    \tilde q_{\mathrm{typ},\eta}(z_i\mid z_{i-w}^{i-1})
    :=
    \begin{cases}
    \tilde q_{\mathrm{tr},\eta}(z_i\mid z_{i-w}^{i-1}),
    &
    \text{if } S(z_{i-w}^{i-1})\ge w_s, \\[1ex]
    \dfrac{1}{|\mathcal Z|},
    &
    \text{if } S(z_{i-w}^{i-1})<w_s.
    \end{cases}
    \]
    In the first case, the notation
    $\tilde q_{\mathrm{tr},\eta}(z_i\mid z_{i-w}^{i-1})$ means that we evaluate the
    transferred predictor using the length-$w_s$ source suffix contained in the
    expanded history $\Gamma_{\mathcal Z}(z_{i-w}^{i-1})$.
    
    Let
    \[
        \alpha_{\mathcal Z}
        :=
        \mathbb E\!\left[
            |\Gamma_{\mathcal Z}(Z_0)|
        \right],
    \]
    and define the bad-span event
    \[
        B
        :=
        \left\{
            S(Z_{-w}^{-1})<w_s
        \right\}.
    \]
    Since the tokenizer is $(\epsilon,w,w_s)$-typical,
    \[
        \Pr(B)\le \epsilon.
    \]
    
    Define the pointwise token losses
    \[
        \ell_{\mathrm{typ},\eta}
        :=
        -\log_2
        \tilde q_{\mathrm{typ},\eta}(Z_0\mid Z_{-w}^{-1}),
    \]
    and
    \[
        \ell_{\mathrm{tr},\eta}
        :=
        -\log_2
        \tilde q_{\mathrm{tr},\eta}(Z_0\mid Z_{-w_s}^{-1}).
    \]
    On the good-span event $B^c$, the $w$-token context contains the required
    $w_s$ source-symbol suffix, so by construction
    \[
        \ell_{\mathrm{typ},\eta}
        =
        \ell_{\mathrm{tr},\eta}
        \qquad
        \text{on } B^c.
    \]
    On the bad-span event $B$, the predictor
    $\tilde q_{\mathrm{typ},\eta}$ is uniform over $\mathcal Z$, and therefore
    \[
        \ell_{\mathrm{typ},\eta}
        =
        \log_2|\mathcal Z|
        \qquad
        \text{on } B.
    \]
    
    Thus,
    \begin{align*}
    L_Z^{\mathrm{tok}}(\tilde q_{\mathrm{typ},\eta})
    &=
    \frac{1}{\alpha_{\mathcal Z}}
    \mathbb E\!\left[
        \ell_{\mathrm{typ},\eta}
    \right] \\
    &=
    \frac{1}{\alpha_{\mathcal Z}}
    \mathbb E\!\left[
        \ell_{\mathrm{typ},\eta}\mathbf 1_{B^c}
    \right]
    +
    \frac{1}{\alpha_{\mathcal Z}}
    \mathbb E\!\left[
        \ell_{\mathrm{typ},\eta}\mathbf 1_B
    \right] \\
    &=
    \frac{1}{\alpha_{\mathcal Z}}
    \mathbb E\!\left[
        \ell_{\mathrm{tr},\eta}\mathbf 1_{B^c}
    \right]
    +
    \frac{\Pr(B)\log_2|\mathcal Z|}{\alpha_{\mathcal Z}} \\
    &\le
    \frac{1}{\alpha_{\mathcal Z}}
    \mathbb E\!\left[
        \ell_{\mathrm{tr},\eta}
    \right]
    +
    \frac{\Pr(B)\log_2|\mathcal Z|}{\alpha_{\mathcal Z}} \\
    &=
    L_Z^{\mathrm{tok}}(\tilde q_{\mathrm{tr},\eta})
    +
    \frac{\Pr(B)\log_2|\mathcal Z|}{\alpha_{\mathcal Z}} \\
    &\le
    L_Y(q_\eta)
    +
    \epsilon\frac{\log_2|\mathcal Z|}{\alpha_{\mathcal Z}}.
    \end{align*}
    Since $\mathcal L_Z^{\mathrm{tok}}(w)$ is the minimum loss over
    $\mathcal Q_w^{\mathcal Z}$, we obtain
    \[
        \mathcal L_Z^{\mathrm{tok}}(w)
        \le
        L_Y(q_\eta)
        +
        \epsilon\frac{\log_2|\mathcal Z|}{\alpha_{\mathcal Z}}.
    \]
    
    It remains to let $\eta\downarrow 0$. For every context $c$ and symbol $y$,
    \[
        q_\eta(y\mid c)
        \to
        P_{w_s}(y\mid c).
    \]
    Moreover, on the support of $P_{w_s}(\cdot\mid c)$, for $\eta\le 1/2$,
    \[
        q_\eta(y\mid c)
        \ge
        (1-\eta)P_{w_s}(y\mid c)
        \ge
        \frac12 P_{w_s}(y\mid c).
    \]
    Hence
    \[
        -\log_2 q_\eta(y\mid c)
        \le
        -\log_2 P_{w_s}(y\mid c)+1
    \]
    on the support of $P_{w_s}$. Since the alphabets are finite, dominated
    convergence gives
    \[
        L_Y(q_\eta)
        \to
        \mathcal L_Y(w_s)
        \qquad
        \text{as }\eta\downarrow 0.
    \]
    Therefore,
    \[
        \mathcal L_Z^{\mathrm{tok}}(w)
        \le
        \mathcal L_Y(w_s)
        +
        \epsilon\frac{\log_2|\mathcal Z|}{\alpha_{\mathcal Z}}.
    \]
    Finally, using
    \[
        R_{\mathcal Z}
        =
        \frac{\log_2|\mathcal Z|}
        {\alpha_{\mathcal Z}\log_2|\mathcal Y|},
    \]
    we get
    \[
        \mathcal L_Z^{\mathrm{tok}}(w)
        \le
        \mathcal L_Y(w_s)
        +
        \epsilon\,R_{\mathcal Z}\log_2|\mathcal Y|.
    \]
    This proves the theorem.
    \end{proof}
    
    \newpage
    \section{Existence of Typical Source-Span Tokenizers}
    \label{app:typical-heavy-hitting-tokenizers}
    
    In this appendix, we show that the typical source-span condition in
    Definition~\ref{def:context_extending} is satisfied by a class of learned
    greedy tokenizers under a typical version of the heavy-hitting condition studied
    by \citet{rajaramanAnalysisTokenizationTransformers2024}. The purpose of this
    appendix is not to reprove their LZW learning result, but to translate
    high-probability substring coverage into our source-span language.
    
    The conclusion is that, under the same Markov positivity assumption used by
    \citet{rajaramanAnalysisTokenizationTransformers2024}, a learned
    budget-$d$ greedy tokenizer satisfying a typical high-probability-continuation
    coverage condition is
    \[
        \left(4\eta,\; w,\; w_d\right)\text{-typical},
        \qquad
        w_d
        :=
        \left\lfloor \frac{3}{4}w\ell_d \right\rfloor,
    \]
    where
    \[
        \ell_d
        :=
        \frac{\beta\log_2 d}{\log_2(1/\delta)}.
    \]
    Combining this with Theorem~\ref{thm:tokenization_gain} gives a finite-context
    loss guarantee for the learned tokenizer.
    
    \subsection{Setup}
    
    Let $\{Y_i\}_{i\in\mathbb Z}$ be a stationary ergodic $k$-th order Markov source
    over a finite alphabet $\mathcal Y$. We write its transition kernel as
    \[
        P(y\mid c),
        \qquad
        c\in\mathcal Y^k,\ y\in\mathcal Y,
    \]
    where $c$ denotes the previous $k$ source symbols. We assume the following
    positivity condition.
    
    \begin{assumption}[$\delta$-positive $k$-th order Markov source]
    \label{assump:hh-delta-positive}
    There exists $\delta>0$ such that
    \[
        \min_{c\in\mathcal Y^k,\ y\in\mathcal Y}P(y\mid c)\ge \delta .
    \]
    \end{assumption}
    
    \begin{remark}
    The first-order Markov setting considered by
    \citet{rajaramanAnalysisTokenizationTransformers2024} extends to the
    $k$-th order setting by treating the length-$k$ context $Y_{i-k}^{i-1}$ as the
    Markov state. We write the proof directly in $k$-th order notation to match the
    rest of the paper.
    \end{remark}
    
    For a string $u\in\mathcal Y^*$ with $|u|\ge k$, let
    $\operatorname{suf}_k(u)$ denote the length-$k$ suffix of $u$. For a string
    $t=t_1\cdots t_m\in\mathcal Y^m$ and an initial context
    $c\in\mathcal Y^k$, define
    \[
        P(t\mid c)
        :=
        \prod_{j=1}^{m}
        P\!\left(
            t_j
            \,\middle|\,
            \operatorname{suf}_k(c\,t_1^{j-1})
        \right),
    \]
    where $c\,t_1^{j-1}$ denotes concatenation of the initial context $c$ and the
    prefix $t_1^{j-1}$. Also define
    \[
        p_{\max}(t)
        :=
        \max_{c\in\mathcal Y^k}P(t\mid c).
    \]
    By Assumption~\ref{assump:hh-delta-positive}, every nonempty string $t$
    satisfies
    \begin{equation}
        p_{\max}(t)\ge \delta^{|t|}.
        \label{eq:pmax-lower-delta}
    \end{equation}
    Indeed, for every initial context $c$, the product defining $P(t\mid c)$
    contains $|t|$ transition probabilities, each at least $\delta$. This is the
    only source-probability estimate we need below.
    
    Let $\mathcal Z\subseteq\mathcal Y^*$ be a finite greedy vocabulary containing
    all single-symbol strings. As in the main text, $\Pi_{\mathcal Z}$ denotes the
    greedy parser and $\Gamma_{\mathcal Z}$ denotes the expansion map. We consider
    the stationary token process $\{Z_j\}$ induced by applying
    $\Pi_{\mathcal Z}$ to the stationary source. Recall that
    \[
        S(Z_{i-w}^{i-1})
        :=
        \left|\Gamma_{\mathcal Z}(Z_{i-w}^{i-1})\right|
        =
        \sum_{j=i-w}^{i-1}|\Gamma_{\mathcal Z}(Z_j)|
    \]
    is the source span of a $w$-token context.
    
    \subsection{High-probability substrings and typical coverage}
    
    For $\beta\in(0,1)$, define the high-probability substring set
    \[
        M_\beta
        :=
        \left\{
            t\in\mathcal Y^*:
            p_{\max}(t)\ge \frac{\delta}{d^\beta}
        \right\}.
    \]
    This is the high-probability region used in the LZW analysis of
    \citet{rajaramanAnalysisTokenizationTransformers2024}. In particular, their
    Lemma~A.10 shows that, under the corresponding Markov positivity assumption,
    LZW learns all substrings in $M_\beta$ with high probability for any fixed
    $\beta<1$. We use a weaker, typical version of this coverage property.
    
    Let $T=Z_0$ be a token drawn from the stationary emitted-token distribution.
    Let $A_0\in\mathcal Y$ denote the next source symbol immediately following
    the source string $\Gamma_{\mathcal Z}(T)$ in the underlying source sequence.
    Thus $\Gamma_{\mathcal Z}(T)A_0$ is the one-symbol continuation of the emitted
    token along the actual source realization.
    
    \begin{definition}[Typical coverage of high-probability continuations]
    \label{def:typical-Mbeta-coverage}
    A greedy vocabulary $\mathcal Z$ satisfies $(\beta,\eta)$-typical
    $M_\beta$-coverage if
    \[
        \Pr\!\left[
            \Gamma_{\mathcal Z}(T)A_0
            \in
            M_\beta\setminus \mathcal Z
        \right]
        \le
        \eta .
    \]
    \end{definition}
    
    This condition says that, at a stationary token boundary, the learned vocabulary
    usually contains the high-probability one-symbol continuation that would have
    allowed greedy parsing to extend the current token. It is the typical analogue
    of the deterministic event $M_\beta\subseteq\mathcal Z$ used in the LZW analysis
    of \citet{rajaramanAnalysisTokenizationTransformers2024}.
    
    \begin{assumption}[Typical LZW coverage]
    \label{assump:typical-lzw-coverage}
    Fix $\beta\in(0,1)$ and $\eta\ge 0$. Let $\mathcal Z_n$ be the dictionary
    learned from tokenizer-training data, with vocabulary budget at most $d$.
    For sufficiently large tokenizer-training data, there exists a failure
    probability $\xi_n(d,\beta,\eta,\delta)$, with
    \[
        \xi_n(d,\beta,\eta,\delta)\to 0
    \]
    as the tokenizer-training data grows, such that with probability at least
    $1-\xi_n(d,\beta,\eta,\delta)$, the learned dictionary $\mathcal Z_n$ satisfies
    $(\beta,\eta)$-typical $M_\beta$-coverage.
    \end{assumption}
    
    \begin{remark}
    Assumption~\ref{assump:typical-lzw-coverage} is weaker than the deterministic
    coverage guarantee proved by \citet{rajaramanAnalysisTokenizationTransformers2024}.
    Their Lemma~A.10 shows that LZW learns all substrings in $M_\beta$ with high
    probability. That deterministic event implies $(\beta,0)$-typical
    $M_\beta$-coverage. Here we state the weaker typical version because our theorem
    only needs the missed-continuation probability to be small, not zero.
    \end{remark}
    
    \subsection{Typical heavy-hitting}
    
    \begin{definition}[Typical heavy-hitting]
    \label{def:hh-typical-heavy-hitting}
    A greedy tokenizer $\mathcal Z$ is $(\beta,\eta)$-typically heavy-hitting if
    \[
        \Pr_T\!\left[
            p_{\max}\!\left(\Gamma_{\mathcal Z}(T)\right)>d^{-\beta}
        \right]
        \le
        \eta .
    \]
    \end{definition}
    
    \begin{lemma}[Typical coverage implies typical heavy-hitting]
    \label{lem:typical-coverage-to-typical-heavy}
    If $\mathcal Z$ satisfies $(\beta,\eta)$-typical $M_\beta$-coverage, then
    $\mathcal Z$ is $(\beta,\eta)$-typically heavy-hitting.
    \end{lemma}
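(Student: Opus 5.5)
The plan is to show the event inclusion
\[
    \left\{p_{\max}\!\left(\Gamma_{\mathcal Z}(T)\right)>d^{-\beta}\right\}
    \subseteq
    \left\{\Gamma_{\mathcal Z}(T)A_0\in M_\beta\setminus\mathcal Z\right\}
\]
up to a null event. The lemma then follows by monotonicity of probability together with Definition~\ref{def:typical-Mbeta-coverage}. The argument is pointwise on realizations: fix a stationary token boundary, let $t:=\Gamma_{\mathcal Z}(T)$ be the emitted token string and $a:=A_0$ the next source symbol, and assume $p_{\max}(t)>d^{-\beta}$.

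\emph{Step 1: $ta\in M_\beta$.} Let $c^\star$ attain $p_{\max}(t)=P(t\mid c^\star)$. By the definition of $P(\cdot\mid c)$ as a product of transitions,
\[
    P(ta\mid c^\star)=P(t\mid c^\star)\,P\!\left(a\,\middle|\,\operatorname{suf}_k(c^\star t)\right)\ge P(t\mid c^\star)\,\delta,
\]
using Assumption~\ref{assump:hh-delta-positive}. Hence
\[
    p_{\max}(ta)\ge \delta\, p_{\max}(t)>\delta d^{-\beta},
\]
which is exactly the membership condition defining $M_\beta$. This holds for every $a\in\mathcal Y$, not only the realized one.

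\emph{Step 2: $ta\notin\mathcal Z$.} This is where greedy parsing enters. The parser emitted $t$ at this boundary, and the next source symbol is $a$. If $ta$ were in $\mathcal Z$, it would be a longer vocabulary string matching the remaining source, so the longest-match rule would not stop at $t$. Therefore $ta\notin\mathcal Z$ on every realization where $T$ is the emitted token. Combining the two steps gives the inclusion, and therefore
\[
    \Pr_T\!\left[p_{\max}(\Gamma_{\mathcal Z}(T))>d^{-\beta}\right]\le \Pr\!\left[\Gamma_{\mathcal Z}(T)A_0\in M_\beta\setminus\mathcal Z\right]\le\eta .
\]

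The main obstacle is making Step 2 rigorous in the stationary token process. I will handle it by reading $T$ and $A_0$ through the same coupled greedy parse of the two-sided source (the convention in the footnote defining $\{Z_j\}$). Then $A_0$ is, by definition, the symbol the parser inspected and rejected when it stopped at $T$. One also needs $A_0$ to exist, i.e.\ no terminal boundary, which holds in the bi-infinite stationary setting. Prefix-closedness is not needed here, since a one-symbol extension is checked directly.
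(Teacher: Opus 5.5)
Your proposal is correct and follows the paper's proof essentially step for step. The chain is the same: $\delta$-positivity gives $p_{\max}(\Gamma_{\mathcal Z}(T)A_0)>\delta d^{-\beta}$, hence $\Gamma_{\mathcal Z}(T)A_0\in M_\beta$; the longest-match rule gives $\Gamma_{\mathcal Z}(T)A_0\notin\mathcal Z$; and the resulting event inclusion yields the bound $\eta$. Your extra care in reading $T$ and $A_0$ from the same two-sided parse only makes explicit what the paper leaves implicit.
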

    
    \begin{proof}
    Let $T=Z_0$ be a stationary emitted token, and let $A_0$ be the next source
    symbol after $\Gamma_{\mathcal Z}(T)$ in the underlying source sequence.
    
    Suppose that
    \[
        p_{\max}\!\left(\Gamma_{\mathcal Z}(T)\right)>d^{-\beta}.
    \]
    Then there exists a length-$k$ context $c^\star\in\mathcal Y^k$ such that
    \[
        P(\Gamma_{\mathcal Z}(T)\mid c^\star)>d^{-\beta}.
    \]
    Let
    \[
        c_T
        :=
        \operatorname{suf}_k\!\left(c^\star\Gamma_{\mathcal Z}(T)\right)
    \]
    be the length-$k$ context that would precede the next source symbol after
    generating $\Gamma_{\mathcal Z}(T)$ from the initial context $c^\star$. By
    Assumption~\ref{assump:hh-delta-positive},
    \[
        P(A_0\mid c_T)\ge \delta.
    \]
    Therefore,
    \[
        P(\Gamma_{\mathcal Z}(T)A_0\mid c^\star)
        =
        P(\Gamma_{\mathcal Z}(T)\mid c^\star)
        P(A_0\mid c_T)
        >
        \frac{\delta}{d^\beta}.
    \]
    Hence
    \[
        p_{\max}\!\left(\Gamma_{\mathcal Z}(T)A_0\right)
        >
        \frac{\delta}{d^\beta},
    \]
    and therefore
    \[
        \Gamma_{\mathcal Z}(T)A_0\in M_\beta.
    \]
    
    On the other hand, greedy parsing emitted $T$ rather than a longer token. Since
    $\Gamma_{\mathcal Z}(T)A_0$ is a longer string matching the current source
    position, it cannot belong to $\mathcal Z$; otherwise greedy parsing would have
    continued beyond $T$. Thus
    \[
        \Gamma_{\mathcal Z}(T)A_0\in M_\beta\setminus\mathcal Z.
    \]
    We have shown the event inclusion
    \[
        \left\{
            p_{\max}\!\left(\Gamma_{\mathcal Z}(T)\right)>d^{-\beta}
        \right\}
        \subseteq
        \left\{
            \Gamma_{\mathcal Z}(T)A_0\in M_\beta\setminus\mathcal Z
        \right\}.
    \]
    Taking probabilities and using Definition~\ref{def:typical-Mbeta-coverage}
    gives
    \[
        \Pr_T\!\left[
            p_{\max}\!\left(\Gamma_{\mathcal Z}(T)\right)>d^{-\beta}
        \right]
        \le
        \eta.
    \]
    Thus $\mathcal Z$ is $(\beta,\eta)$-typically heavy-hitting.
    \end{proof}
    
    \begin{corollary}[Existence of typically heavy-hitting learned tokenizers]
    \label{cor:exists-typical-heavy-hitting}
    Under Assumptions~\ref{assump:hh-delta-positive} and
    \ref{assump:typical-lzw-coverage}, with probability at least
    $1-\xi_n(d,\beta,\eta,\delta)$ over the tokenizer-training data, the learned
    greedy tokenizer $\mathcal Z_n$ is $(\beta,\eta)$-typically heavy-hitting.
    \end{corollary}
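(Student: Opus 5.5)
This is a direct chaining argument. Fix the budget $d$, the parameters $\beta\in(0,1)$ and $\eta\ge 0$, and the training-data size $n$. Let $E_n$ denote the event, over the tokenizer-training data, that the learned dictionary $\mathcal Z_n$ satisfies $(\beta,\eta)$-typical $M_\beta$-coverage in the sense of Definition~\ref{def:typical-Mbeta-coverage}. Assumption~\ref{assump:typical-lzw-coverage} says that, for sufficiently large training data, $\Pr(E_n)\ge 1-\xi_n(d,\beta,\eta,\delta)$. It therefore suffices to show the deterministic inclusion: on $E_n$, the tokenizer $\mathcal Z_n$ is $(\beta,\eta)$-typically heavy-hitting.

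For that inclusion, condition on a realization of the training data in $E_n$. This fixes $\mathcal Z_n$ as a deterministic finite greedy vocabulary containing $\mathcal Y$. Apply Lemma~\ref{lem:typical-coverage-to-typical-heavy} to this fixed vocabulary. Its hypotheses are $(\beta,\eta)$-typical $M_\beta$-coverage, which holds on $E_n$, and the $\delta$-positivity of Assumption~\ref{assump:hh-delta-positive}, which was used inside the lemma's proof and concerns only the source. The lemma gives
\[
\Pr_T\!\left[p_{\max}\!\left(\Gamma_{\mathcal Z_n}(T)\right)>d^{-\beta}\right]\le\eta .
\]
Here $T$ is drawn from the stationary emitted-token distribution of a fresh source realization, independent of the training data. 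Hence $E_n\subseteq\{\mathcal Z_n \text{ is } (\beta,\eta)\text{-typically heavy-hitting}\}$, and taking probabilities gives the claimed bound $1-\xi_n$.

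The only point that needs care is a measurability and independence issue. The probability $\Pr_T$ in Definition~\ref{def:hh-typical-heavy-hitting} must be read conditionally on the learned dictionary, over an independent test source. The lemma is then applied pointwise for each fixed dictionary in $E_n$. Since the lemma's proof uses only greedy-parsing maximality for that fixed vocabulary and the source kernel bound, no uniformity over dictionaries is needed. I expect no real obstacle beyond stating this conditioning cleanly.
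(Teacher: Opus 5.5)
Your proposal is correct and matches the paper's proof: the paper likewise takes the event from Assumption~\ref{assump:typical-lzw-coverage} (typical $M_\beta$-coverage, which holds with probability at least $1-\xi_n(d,\beta,\eta,\delta)$) and applies Lemma~\ref{lem:typical-coverage-to-typical-heavy} to the learned dictionary on that event. Your added remark, that $\Pr_T$ should be read conditionally on $\mathcal Z_n$ over an independent source realization, is a sensible clarification that the paper leaves implicit.
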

    
    \begin{proof}
    By Assumption~\ref{assump:typical-lzw-coverage}, with probability at least
    $1-\xi_n(d,\beta,\eta,\delta)$, the learned dictionary satisfies
    $(\beta,\eta)$-typical $M_\beta$-coverage. Lemma~\ref{lem:typical-coverage-to-typical-heavy}
    then implies that $\mathcal Z_n$ is $(\beta,\eta)$-typically heavy-hitting.
    \end{proof}
    
    \subsection{From typical heavy-hitting to typical source span}
    
    We now prove the part specific to our paper: typical heavy-hitting implies the
    typical source-span condition in Definition~\ref{def:context_extending}.
    
    \begin{lemma}[Typical heavy-hitting implies long tokens with high probability]
    \label{lem:hh-long-tokens}
    Assume $\mathcal Z$ is $(\beta,\eta)$-typically heavy-hitting. Define
    \[
        \ell_d
        :=
        \frac{\beta\log_2 d}{\log_2(1/\delta)}.
    \]
    Then
    \[
        \Pr_T\!\left[
            |\Gamma_{\mathcal Z}(T)|<\ell_d
        \right]
        \le \eta .
    \]
    \end{lemma}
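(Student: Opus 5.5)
The plan is to prove the event inclusion
\[
    \left\{|\Gamma_{\mathcal Z}(T)|<\ell_d\right\}
    \subseteq
    \left\{p_{\max}\!\left(\Gamma_{\mathcal Z}(T)\right)>d^{-\beta}\right\},
\]
and then take probabilities. The $(\beta,\eta)$-typical heavy-hitting property (Definition~\ref{def:hh-typical-heavy-hitting}) bounds the right-hand event by $\eta$, which gives the claim.

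For the inclusion, fix a realization with $m:=|\Gamma_{\mathcal Z}(T)|<\ell_d$. Since $T$ is an emitted token, $m\ge 1$, so \eqref{eq:pmax-lower-delta} applies and gives $p_{\max}(\Gamma_{\mathcal Z}(T))\ge\delta^{m}$. Because $\delta\le 1/|\mathcal Y|<1$ (when $|\mathcal Y|\ge 2$), the map $m\mapsto\delta^m$ is strictly decreasing. Therefore $m<\ell_d$ implies
\[
    \delta^{m}>\delta^{\ell_d}
    =2^{-\ell_d\log_2(1/\delta)}
    =2^{-\beta\log_2 d}
    =d^{-\beta}.
\]
Chaining the two inequalities gives $p_{\max}(\Gamma_{\mathcal Z}(T))>d^{-\beta}$, which establishes the inclusion.

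Most of the argument is routine. The only point requiring care is the strictness of the inequalities, which is what makes the inclusion match the strict ``$>$'' in Definition~\ref{def:hh-typical-heavy-hitting}. Strictness comes from the strict hypothesis $m<\ell_d$ together with $\delta<1$. The degenerate case $\delta=1$ (which forces $|\mathcal Y|=1$) makes $\ell_d$ undefined, so it is excluded implicitly by the statement. No use of the Markov structure beyond \eqref{eq:pmax-lower-delta} is needed.
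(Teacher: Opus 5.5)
Your proposal is correct and is essentially the paper's argument. The paper shows that a ``good'' token, one with $p_{\max}(\Gamma_{\mathcal Z}(T))\le d^{-\beta}$, must satisfy $|\Gamma_{\mathcal Z}(T)|\ge \ell_d$ by combining \eqref{eq:pmax-lower-delta} with taking logarithms, which is exactly the contrapositive of your inclusion; you also make explicit the fact $\delta\le 1/|\mathcal Y|<1$, which the paper's logarithm step uses without stating it.
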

    
    \begin{proof}
    Call a token $T$ good if
    \[
        p_{\max}\!\left(\Gamma_{\mathcal Z}(T)\right)\le d^{-\beta}.
    \]
    By Definition~\ref{def:hh-typical-heavy-hitting},
    \[
        \Pr_T[T\text{ is not good}]\le \eta.
    \]
    If $T$ is good, then by \eqref{eq:pmax-lower-delta},
    \[
        \delta^{|\Gamma_{\mathcal Z}(T)|}
        \le
        p_{\max}\!\left(\Gamma_{\mathcal Z}(T)\right)
        \le
        d^{-\beta}.
    \]
    Taking logarithms gives
    \[
        |\Gamma_{\mathcal Z}(T)|
        \ge
        \frac{\beta\log_2 d}{\log_2(1/\delta)}
        =
        \ell_d.
    \]
    Therefore the event
    \[
        |\Gamma_{\mathcal Z}(T)|<\ell_d
    \]
    can occur only when $T$ is not good. Hence
    \[
        \Pr_T\!\left[
            |\Gamma_{\mathcal Z}(T)|<\ell_d
        \right]
        \le \eta.
    \]
    \end{proof}
    
    \begin{lemma}[Typical heavy-hitting implies typical source span]
    \label{lem:hh-typical-source-span}
    Assume $\mathcal Z$ is $(\beta,\eta)$-typically heavy-hitting. Fix a
    token-window length $w$, and define
    \[
        \ell_d
        :=
        \frac{\beta\log_2 d}{\log_2(1/\delta)}.
    \]
    Let
    \[
        w_d
        :=
        \left\lfloor \frac{3}{4}w\ell_d \right\rfloor .
    \]
    Then
    \[
        \Pr\!\left[
            S(Z_{i-w}^{i-1}) < w_d
        \right]
        \le
        4\eta.
    \]
    Equivalently, $\Pi_{\mathcal Z}$ is
    \[
        \left(4\eta,\; w,\; w_d\right)\text{-typical}.
    \]
    \end{lemma}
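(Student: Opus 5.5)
The plan is to reduce the window-level statement to the single-token bound of Lemma~\ref{lem:hh-long-tokens} by a counting argument and Markov's inequality. Fix the window $Z_{i-w}^{i-1}$ and call a token $Z_j$ \emph{short} if $|\Gamma_{\mathcal Z}(Z_j)|<\ell_d$. Let
\[
    N := \sum_{j=i-w}^{i-1}\mathbf 1\!\left\{|\Gamma_{\mathcal Z}(Z_j)|<\ell_d\right\}
\]
be the number of short tokens in the window.

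First, control $\mathbb E[N]$. Since $\{Z_j\}$ is the stationary token process, each $Z_j$ has the same marginal law as the stationary emitted token $T=Z_0$. Lemma~\ref{lem:hh-long-tokens} then gives $\Pr[|\Gamma_{\mathcal Z}(Z_j)|<\ell_d]\le\eta$ for every $j$. By linearity of expectation, $\mathbb E[N]\le w\eta$; no independence between tokens is needed.

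Second, establish a deterministic inclusion of events. Every token has length at least $1\ge 0$, and every non-short token has length at least $\ell_d$. Hence
\[
    S(Z_{i-w}^{i-1})=\sum_{j}|\Gamma_{\mathcal Z}(Z_j)| \ge (w-N)\,\ell_d .
\]
On the event $\{N\le w/4\}$ this yields $S\ge \tfrac34 w\ell_d\ge \lfloor \tfrac34 w\ell_d\rfloor = w_d$. Consequently $\{S<w_d\}\subseteq\{N>w/4\}$.

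Third, apply Markov's inequality:
\[
    \Pr[S<w_d]\le\Pr[N>w/4]\le \frac{\mathbb E[N]}{w/4}\le 4\eta .
\]
This is exactly $(4\eta,w,w_d)$-typicality in the sense of Definition~\ref{def:context_extending}.

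The main point requiring care is the first step. I would need to justify that the marginal of each window position coincides with the law of $T$ used in Lemma~\ref{lem:hh-long-tokens}. This relies on the stationary version of the parse described in the footnote, not on a one-sided parse. The remaining steps are routine. Only the strict/non-strict inequality in $\{N>w/4\}$ and the floor in $w_d$ need checking, and both work in our favor.
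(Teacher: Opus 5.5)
Your proposal is correct and follows the paper's proof essentially step for step. You count the short tokens in the window, bound their expected number by $w\eta$ via Lemma~\ref{lem:hh-long-tokens}, stationarity and linearity, derive the inclusion $\{S<w_d\}\subseteq\{N>w/4\}$, and finish with Markov's inequality. The only cosmetic difference is that you drop the unit length of short tokens, whereas the paper keeps it in the bound $(w-B)\ell_d+B$ but never uses it.
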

    
    \begin{proof}
    For each token $Z_j$, define the bad-token indicator
    \[
        B_j
        :=
        \mathbf 1\left\{
            |\Gamma_{\mathcal Z}(Z_j)|<\ell_d
        \right\}.
    \]
    By Lemma~\ref{lem:hh-long-tokens},
    \[
        \mathbb E[B_j]=\Pr(B_j=1)\le \eta.
    \]
    For the window $Z_{i-w}^{i-1}$, let
    \[
        B
        :=
        \sum_{j=i-w}^{i-1}B_j
    \]
    be the number of bad tokens in the window. By linearity of expectation,
    \[
        \mathbb E[B]\le w\eta.
    \]
    
    Every good token has source length at least $\ell_d$, and every bad token has
    source length at least $1$. Therefore,
    \[
        S(Z_{i-w}^{i-1})
        =
        \sum_{j=i-w}^{i-1}|\Gamma_{\mathcal Z}(Z_j)|
        \ge
        (w-B)\ell_d + B.
    \]
    In particular, if $B\le w/4$, then
    \[
        S(Z_{i-w}^{i-1})
        \ge
        \left(w-\frac{w}{4}\right)\ell_d
        =
        \frac{3}{4}w\ell_d
        \ge
        w_d.
    \]
    Thus,
    \[
        \left\{
            S(Z_{i-w}^{i-1}) < w_d
        \right\}
        \subseteq
        \left\{
            B>\frac{w}{4}
        \right\}.
    \]
    By Markov's inequality,
    \[
        \Pr\!\left[
            B>\frac{w}{4}
        \right]
        \le
        \frac{\mathbb E[B]}{w/4}
        \le
        4\eta.
    \]
    Therefore,
    \[
        \Pr\!\left[
            S(Z_{i-w}^{i-1}) < w_d
        \right]
        \le
        4\eta.
    \]
    This is exactly the $\left(4\eta,w,w_d\right)$-typicality condition.
    \end{proof}
    
    \subsection{Tokenizer rate bound}
    
    We also need to control the rate term in Theorem~\ref{thm:tokenization_gain}.
    Recall that
    \[
        \alpha_{\mathcal Z}
        :=
        \mathbb E\!\left[
            |\Gamma_{\mathcal Z}(Z_0)|
        \right]
    \]
    and
    \[
        R_{\mathcal Z}
        :=
        \frac{\log_2|\mathcal Z|}
             {\alpha_{\mathcal Z}\log_2|\mathcal Y|}.
    \]
    
    \begin{lemma}[Rate bound for typically heavy-hitting tokenizers]
    \label{lem:hh-rate-bound}
    Assume $\mathcal Z$ is $(\beta,\eta)$-typically heavy-hitting and
    $|\mathcal Z|\le d$. Then
    \[
        \alpha_{\mathcal Z}
        \ge
        (1-\eta)\ell_d+\eta,
    \]
    where
    \[
        \ell_d
        :=
        \frac{\beta\log_2 d}{\log_2(1/\delta)}.
    \]
    Consequently,
    \[
        R_{\mathcal Z}
        \le
        \frac{\log_2 d}
             {\big((1-\eta)\ell_d+\eta\big)\log_2|\mathcal Y|}.
    \]
    \end{lemma}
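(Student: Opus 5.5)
The plan is to lower-bound the expected token length $\alpha_{\mathcal Z}=\mathbb E[|\Gamma_{\mathcal Z}(Z_0)|]$ pointwise, using the long-token guarantee of Lemma~\ref{lem:hh-long-tokens}. The rate bound will then follow directly from the definition of $R_{\mathcal Z}$.

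First, split according to the event $G:=\{|\Gamma_{\mathcal Z}(T)|\ge \ell_d\}$, where $T=Z_0$. Every token has source length at least $1$, because $\mathcal Z\subset\mathcal Y^*$ consists of nonempty strings. Hence pointwise
\[
|\Gamma_{\mathcal Z}(T)| \;\ge\; \ell_d\,\mathbf 1_G + \mathbf 1_{G^c} \;=\; 1 + (\ell_d-1)\mathbf 1_G .
\]
Taking expectations gives $\alpha_{\mathcal Z}\ge 1+(\ell_d-1)\Pr(G)$.

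Next, Lemma~\ref{lem:hh-long-tokens} gives $\Pr(G)\ge 1-\eta$. In the case $\ell_d\ge 1$, the coefficient $\ell_d-1$ is nonnegative, so
\[
\alpha_{\mathcal Z}\ge 1+(\ell_d-1)(1-\eta)=(1-\eta)\ell_d+\eta .
\]
This is the claimed bound.

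The only subtle point is the case $\ell_d<1$, where the coefficient is negative and the substitution above fails. In that case I would argue directly: $\alpha_{\mathcal Z}\ge 1$ holds trivially. Moreover, $(1-\eta)\ell_d+\eta$ is a convex combination of $\ell_d$ and $1$, so it is at most $1$ when $\ell_d<1$, and the bound again holds. This covers both cases.

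Finally, use $|\mathcal Z|\le d$ to get $\log_2|\mathcal Z|\le\log_2 d$. Since the lower bound on $\alpha_{\mathcal Z}$ is positive (it is at least $\min(1,\ell_d)>0$ when $d>1$), we can divide the numerator bound by it in the definition $R_{\mathcal Z}=\log_2|\mathcal Z|/(\alpha_{\mathcal Z}\log_2|\mathcal Y|)$, which yields the stated inequality on $R_{\mathcal Z}$. I expect no real obstacle beyond handling the sign of $\ell_d-1$, and stationarity of $T$ is already built into Lemma~\ref{lem:hh-long-tokens}.
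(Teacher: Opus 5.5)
Your proof is correct and follows the paper's argument: split on the long-token event from Lemma~\ref{lem:hh-long-tokens}, use source length at least $1$ on the complement, then bound $\log_2|\mathcal Z|\le\log_2 d$ in the definition of $R_{\mathcal Z}$. Your separate treatment of the case $\ell_d<1$, where the coefficient $\ell_d-1$ is negative and the substitution $\Pr(G)\ge 1-\eta$ would go the wrong way, is extra care that the paper's one-line argument leaves implicit.
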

    
    \begin{proof}
    By Lemma~\ref{lem:hh-long-tokens}, with probability at least $1-\eta$, an
    emitted token has source length at least $\ell_d$. On the remaining event, the
    token has source length at least $1$, since all tokens are nonempty source
    strings. Hence
    \[
        \alpha_{\mathcal Z}
        =
        \mathbb E\!\left[
            |\Gamma_{\mathcal Z}(Z_0)|
        \right]
        \ge
        (1-\eta)\ell_d+\eta.
    \]
    Since $|\mathcal Z|\le d$,
    \[
        R_{\mathcal Z}
        =
        \frac{\log_2|\mathcal Z|}
             {\alpha_{\mathcal Z}\log_2|\mathcal Y|}
        \le
        \frac{\log_2 d}
             {\big((1-\eta)\ell_d+\eta\big)\log_2|\mathcal Y|}.
    \]
    \end{proof}
    
    \subsection{Main result}
    
    We now combine the existence of a typically heavy-hitting dictionary with the
    source-span and rate bounds above.
    
    \begin{theorem}[Existence of tokenizers with typical effective context]
    \label{thm:exists-typical-effective-context}
    Assume $\{Y_i\}$ satisfies Assumption~\ref{assump:hh-delta-positive}. Fix
    $\beta\in(0,1)$, vocabulary budget $d$, token-window length $w$, and
    $\eta\in[0,1/4]$. Define
    \[
        \ell_d
        :=
        \frac{\beta\log_2 d}{\log_2(1/\delta)}
    \]
    and
    \[
        w_d
        :=
        \left\lfloor \frac{3}{4}w\ell_d \right\rfloor .
    \]
    For sufficiently large tokenizer-training data, with probability at least
    $1-\xi_n(d,\beta,\eta,\delta)$ over the training sequence, the learned LZW
    tokenizer $\Pi_{\mathcal Z_n}$ is
    \[
        \left(4\eta,\; w,\; w_d\right)\text{-typical}.
    \]
    Moreover,
    \[
        R_{\mathcal Z_n}
        \le
        \frac{\log_2 d}
             {\big((1-\eta)\ell_d+\eta\big)\log_2|\mathcal Y|}.
    \]
    Consequently,
    \[
        \mathcal L_Z^{\mathrm{tok}}(w)
        \le
        \mathcal L_Y(w_d)
        +
        4\eta\,R_{\mathcal Z_n}\log_2|\mathcal Y|.
    \]
    In particular,
    \[
        \mathcal L_Z^{\mathrm{tok}}(w)
        \le
        \mathcal L_Y(w_d)
        +
        4\eta
        \frac{\log_2 d}
             {(1-\eta)\ell_d+\eta}.
    \]
    \end{theorem}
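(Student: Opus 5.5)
The plan is to assemble Theorem~\ref{thm:exists-typical-effective-context} from the lemmas of this appendix and Theorem~\ref{thm:tokenization_gain}. The first step is to place ourselves on the good training event. By Corollary~\ref{cor:exists-typical-heavy-hitting}, which rests on Assumption~\ref{assump:typical-lzw-coverage} together with Lemma~\ref{lem:typical-coverage-to-typical-heavy}, this event has probability at least $1-\xi_n(d,\beta,\eta,\delta)$ over the tokenizer-training data. On it, the learned dictionary $\mathcal Z_n$ is $(\beta,\eta)$-typically heavy-hitting. Everything that follows is deterministic given this event, so the probability qualifier carries through unchanged to each conclusion.

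On this event I apply Lemma~\ref{lem:hh-typical-source-span} with the fixed window length $w$, which gives $\Pr[S(Z_{i-w}^{i-1})<w_d]\le 4\eta$. That is exactly $(4\eta,w,w_d)$-typicality in the sense of Definition~\ref{def:context_extending}, with the window $Z_1^w$ identified with $Z_{i-w}^{i-1}$ by stationarity of the token process. The restriction $\eta\le 1/4$ keeps $4\eta\le 1$, so the statement is not vacuous. It also ensures $(1-\eta)\ell_d+\eta>0$ in the rate bound.

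Next, the learned LZW dictionary has budget at most $d$, so $|\mathcal Z_n|\le d$. Lemma~\ref{lem:hh-rate-bound} then yields the stated bound on $R_{\mathcal Z_n}$. Feeding the typicality conclusion into Theorem~\ref{thm:tokenization_gain} with $\epsilon=4\eta$ and $w_s=w_d$ gives
\[
\mathcal L_Z^{\mathrm{tok}}(w)\le \mathcal L_Y(w_d)+4\eta R_{\mathcal Z_n}\log_2|\mathcal Y|.
\]
Substituting the rate bound cancels the factor $\log_2|\mathcal Y|$ and produces the final explicit inequality.

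The main obstacle is checking that the hypotheses actually match, since the algebra itself is routine. Two points need care. First, Theorem~\ref{thm:tokenization_gain} is stated for a stationary $k$th-order Markov source. Assumption~\ref{assump:hh-delta-positive} makes the chain irreducible, which gives ergodicity of the source and hence stationarity and ergodicity of the tokenized process. Second, the hypothesis $|\mathcal Z_n|\le d$ is supplied by the budget clause of Assumption~\ref{assump:typical-lzw-coverage} rather than proved. I would also remark that $w_d$ may be $0$ for small $d$, in which case the bound still holds trivially with $\mathcal L_Y(0)=H(Y_0)$.
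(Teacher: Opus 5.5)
Your proposal is correct and matches the paper's proof step for step. You restrict to the probability-$1-\xi_n(d,\beta,\eta,\delta)$ event of Corollary~\ref{cor:exists-typical-heavy-hitting}, get $(4\eta,w,w_d)$-typicality from Lemma~\ref{lem:hh-typical-source-span} and the rate bound from Lemma~\ref{lem:hh-rate-bound}, then apply Theorem~\ref{thm:tokenization_gain} with $\epsilon=4\eta$, $w_s=w_d$ and substitute; your explicit checks that $|\mathcal Z_n|\le d$ comes from the budget clause and that the source is stationary ergodic are hypotheses the paper uses without stating.
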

    
    \begin{proof}
    By Corollary~\ref{cor:exists-typical-heavy-hitting}, with probability at least
    $1-\xi_n(d,\beta,\eta,\delta)$, the learned LZW dictionary $\mathcal Z_n$ is
    $(\beta,\eta)$-typically heavy-hitting. On this event,
    Lemma~\ref{lem:hh-typical-source-span} gives
    \[
        \Pr\!\left[
            S(Z_{i-w}^{i-1}) < w_d
        \right]
        \le
        4\eta.
    \]
    Therefore $\Pi_{\mathcal Z_n}$ is
    \[
        \left(4\eta,\; w,\; w_d\right)\text{-typical}.
    \]
    
    Also, by Lemma~\ref{lem:hh-rate-bound},
    \[
        R_{\mathcal Z_n}
        \le
        \frac{\log_2 d}
             {\big((1-\eta)\ell_d+\eta\big)\log_2|\mathcal Y|}.
    \]
    
    Now apply Theorem~\ref{thm:tokenization_gain} with
    \[
        \epsilon=4\eta
        \qquad\text{and}\qquad
        w_s=w_d.
    \]
    This gives
    \[
        \mathcal L_Z^{\mathrm{tok}}(w)
        \le
        \mathcal L_Y(w_d)
        +
        4\eta\,R_{\mathcal Z_n}\log_2|\mathcal Y|.
    \]
    Substituting the rate bound yields
    \[
        \mathcal L_Z^{\mathrm{tok}}(w)
        \le
        \mathcal L_Y(w_d)
        +
        4\eta
        \frac{\log_2 d}
             {(1-\eta)\ell_d+\eta}.
    \]
    \end{proof}
    
    \begin{remark}
    The LZW guarantee of \citet{rajaramanAnalysisTokenizationTransformers2024}
    gives a stronger deterministic sufficient condition: with high probability,
    LZW learns all substrings in $M_\beta$, which implies $(\beta,0)$-typical
    $M_\beta$-coverage. We state the result using the weaker typical-coverage
    condition because Theorem~\ref{thm:tokenization_gain} is naturally a
    typical-span theorem: it only requires the window-span failure probability to be
    small, not zero.
    \end{remark}

    \newpage

    \section{Implementation Details}
\label{sec:implementation}

All experiments are implemented in PyTorch and run
on a single NVIDIA GPU (CUDA~12.8).
The full code, configs, and scripts needed to reproduce every figure in the
paper are available at the following anonymous repository:
\begin{center}
  \texttt{https://github.com/Amiroo23jf/bytes-are-not-enough}
\end{center}

\subsection{Transformer Architecture}
\label{app:impl:arch}

Both the fragmentation and tokenization experiments use the same causal
transformer architecture, differing only in embedding dimension and
feedforward width.
The model is a standard pre-norm transformer with
sinusoidal positional encodings, GELU
activations, and no weight tying between the
input embedding and the output projection.
All linear layers are bias-free; attention is implemented with
\texttt{scaled\_dot\_product\_attention}.
Weights are initialised from $\mathcal{N}(0,\,0.02^{2})$.

\begin{table}[h]
\centering
\caption{Transformer hyperparameters for each experiment family.}
\label{tab:arch}
\small
\begin{tabular}{lcc}
\toprule
\textbf{Hyperparameter} & \textbf{Fragmentation Transformer} & \textbf{Tokenization Transformer} \\
\midrule
Layers               & 4       & 4       \\
Embedding dim $d$    & 256     & 128     \\
FFN width            & 1024    & 512     \\
Attention heads      & 4       & 4       \\
Head dim             & 64      & 32      \\
Positional encoding  & Sinusoidal & Sinusoidal \\
Context window $W$   & $K{+}1$ & see \S\ref{app:impl:tok} \\
Vocab size $V$       & $2^M$   & $V \in \{2,4,6,8,10,15,20\}$ \\
Parameters (approx.) & 3.15M   & 789–794K \\
\bottomrule
\end{tabular}
\end{table}

\subsection{Fragmentation Transformer Experiment}
\label{app:impl:frag}

\paragraph{Source.}
We generate a binary Markov source of order $K=2$ with fragment alphabet
size $2^M = 8$ ($M=3$).
The transition matrix $P$ is drawn from a symmetric Dirichlet prior with
concentration $\alpha=0.5$.
The source sequence length is $n=2\times10^6$ symbols per run.

\paragraph{Context window.}
We set $W = K+1 = 3$.
With this choice the oracle-residual term in Theorem~1 vanishes, so the
gap between the fragment-level and source-level Bayes-optimal losses equals
exactly the phase mutual-information term.

\paragraph{Training.}
We train both the $Y$-model (fragment-level) and the $X$-model
(source-level) independently.
Each model is trained for 40\,000 gradient steps with the AdamW
optimiser~\citep{loshchilov2019decoupled} using the hyperparameters in
Table~\ref{tab:train_frag}.
Loss curves are smoothed with a trailing window of 500 steps for plotting.
All results are averaged over 3 independent seeds; shaded bands show
$\pm 1$ standard deviation.

\begin{table}[h]
\centering
\caption{Training hyperparameters for the fragmentation transformer experiment.}
\label{tab:train_frag}
\small
\begin{tabular}{lc}
\toprule
\textbf{Hyperparameter} & \textbf{Value} \\
\midrule
Optimiser            & AdamW \\
Learning rate        & $3 \times 10^{-4}$ \\
$(\beta_1, \beta_2)$ & $(0.9,\; 0.95)$ \\
Weight decay         & $0.01$ \\
Gradient clipping    & $1.0$ \\
Warmup steps         & $1$ \\
Batch size           & $128$ \\
Iterations           & $40\,000$ \\
Seeds                & 3 \;($\{0,1,2\}$) \\
\bottomrule
\end{tabular}
\end{table}

\subsection{Tokenization Transformer Experiment}
\label{app:impl:tok}

\paragraph{Source.}
We generate a binary Markov source of order $K=12$ with Dirichlet
concentration $\alpha=0.4$ and source length $n=2.5\times10^7$ bits.
One sequence is generated per seed and shared across all $(V,W)$
combinations within that seed.

\paragraph{BPE tokenizer.}
For each vocabulary size $V>2$ we train a greedy BPE tokenizer on the
first $5\times10^5$ bits of the source sequence.
For $V=2$ the source is used directly without tokenisation (ratio~$=1$).
The compression ratios achieved for the full source sequence are reported
in Table~\ref{tab:ratios}.

\begin{table}[h]
\centering
\caption{BPE compression ratios $|Y|/|Z|$ (seed 0, $K=12$, $\alpha=0.4$,
$n=2.5\times10^7$).}
\label{tab:ratios}
\small
\begin{tabular}{ccccccc}
\toprule
$V=4$ & $V=6$ & $V=8$ & $V=10$ & $V=15$ & $V=20$ \\
\midrule
1.590 & 2.071 & 2.546 & 2.944 & 3.614 & 4.172 \\
\bottomrule
\end{tabular}
\end{table}

\paragraph{Sweep.}
We sweep vocabulary sizes $V \in \{2,4,6,8,10,15,20\}$ and context window
$W \in \{4,6,12\}$, training one model per $(V,W,\text{seed})$ triple.
All models share the architecture in Table~\ref{tab:arch}.
Loss is reported in bits per source symbol: the raw cross-entropy is
divided by the token-level compression ratio so that all vocabulary sizes
are on a common scale.

\paragraph{Training.}
Training hyperparameters are given in Table~\ref{tab:train_tok}.
Results are averaged over 3 seeds; shaded bands show $\pm 1$ standard
deviation.

\begin{table}[h]
\centering
\caption{Training hyperparameters for the tokenization transformer experiment.}
\label{tab:train_tok}
\small
\begin{tabular}{lc}
\toprule
\textbf{Hyperparameter} & \textbf{Value} \\
\midrule
Optimiser            & AdamW \\
Learning rate        & $3 \times 10^{-4}$ \\
$(\beta_1, \beta_2)$ & $(0.9,\; 0.95)$ \\
Weight decay         & $0.01$ \\
Gradient clipping    & $1.0$ \\
Warmup steps         & $500$ \\
Batch size           & $128$ \\
Iterations           & $40\,000$ \\
Seeds                & 3 \;($\{0,1,2\}$) \\
\bottomrule
\end{tabular}
\end{table}

\subsection{Fragmentation N-gram Experiment}
\label{app:impl:ngram}

We verify Theorem~3.1 empirically with a $k$-th order $n$-gram model on a
binary Markov source.
For each pair $(k, M)$ in
$\{(1,2),(1,3),(1,4),(2,2),(2,3),(3,2)\}$
we sample 8 transition matrices independently from a symmetric Dirichlet
prior ($\alpha=0.5$) and generate a source sequence of $n=5\times10^5$
symbols per matrix.
We fit two $n$-gram models: one with context length $K=k$ (exact match)
and one with $K=k+1$ (one extra token of context).
Counts are smoothed with Laplace smoothing ($\alpha = 0.5$).
The theoretical excess loss from Theorem~3.1 is computed analytically
from the sampled transition matrix and compared to the empirical $n$-gram
loss.

\subsection{Span-Distribution Analysis}
\label{app:impl:span}

\paragraph{WikiText-103.}
We measure $P(\mathrm{span}(Z^W) = w_s)$—the distribution of source
characters covered by $W$ consecutive tokens—for the following
tokenizers trained on WikiText-103~\citep{merity2017pointer}: character,
BPE with $|V| \in \{1\mathrm{k},4\mathrm{k},16\mathrm{k},32\mathrm{k}\}$,
WordPiece (32k), SentencePiece (32k), BERT~\citep{devlinBERTPretrainingDeep2019}, and
GPT-2~\citep{radford2019language}.
We sweep $W \in \{1,2,4,8,16,32,64,128,256\}$.

\paragraph{Markov source.}
We repeat the same analysis on the synthetic binary Markov source used in
the tokenization transformer experiment ($K=12$, $\alpha=0.4$,
$n=2.5\times10^7$, seed~0), using the BPE tokenizers trained for that
experiment ($V \in \{2,4,6,8,10,15,20\}$).
Here the span $w_s$ is measured in source bits rather than characters, and
we sweep $W \in \{1,\ldots,12\}$.
The plotted quantity is the scaled CDF
$\varepsilon(w_s) \cdot \log_2 V \cdot \mathbb{E}[|Z|/|Y|]$,
which provides a normalised upper bound comparable across tokenizers.
    
     \section{Additional Experiments} \label{sec:additionalsynth}

   \subsection{Fragmentation Decomposition}
\label{app:fragmentation_decomposition}

We empirically verify the decomposition in Theorem~\ref{thm:fragmentation}
using finite-order Markov sources under several parameter settings. For each
configuration, we compare the optimal source-level loss with the normalized
loss after fragmentation, and then compare the observed fragmentation penalty
with the theoretical decomposition into phase ambiguity and context deficit.

Figure~\ref{fig:fragmentation_decomposition_all} shows the results across four
source regimes. In each row, the left panel reports the per-source-symbol loss:
the theoretical source entropy, the empirical $n$-gram loss on the original
source sequence, and the normalized $n$-gram loss on the fragmented sequence.
The right panel reports the penalty decomposition. The empirical penalty
matches the predicted total penalty, showing that the excess loss observed after
fragmentation is explained by the information-theoretic terms in
Theorem~\ref{thm:fragmentation}.

\begin{figure}[htbp]
    \centering

    \begin{subfigure}[t]{0.49\linewidth}
        \centering
        \includegraphics[width=\linewidth]{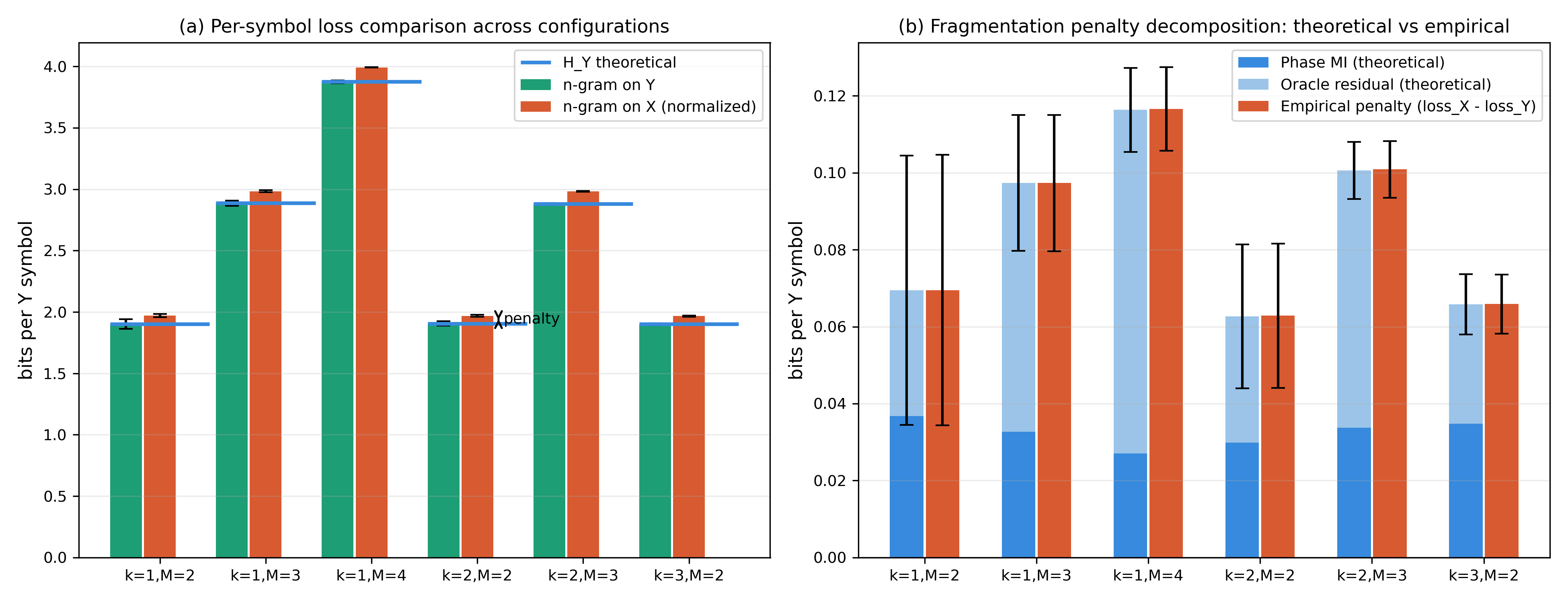}
        \caption{$K=0$, $\alpha=5$.}
        \label{fig:frag_decomp_k0_a5}
    \end{subfigure}
    \hfill
    \begin{subfigure}[t]{0.49\linewidth}
        \centering
        \includegraphics[width=\linewidth]{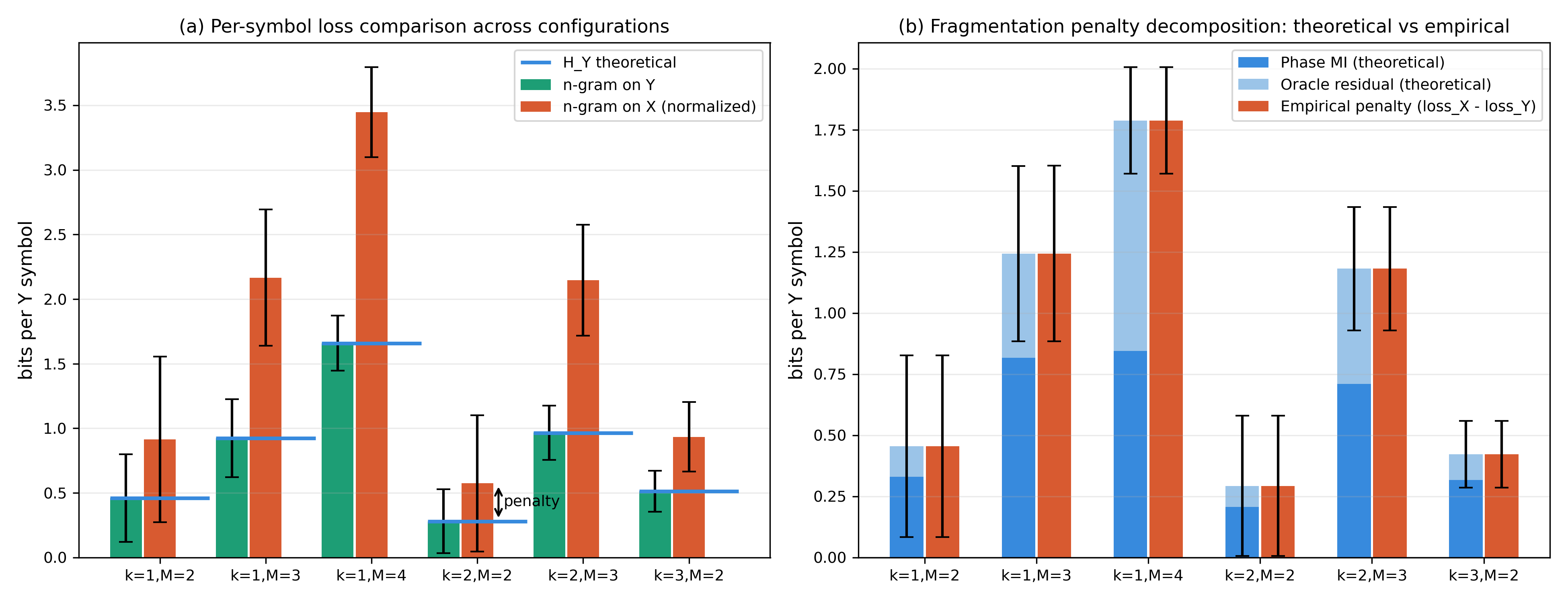}
        \caption{$K=0$, $\alpha=0.1$.}
        \label{fig:frag_decomp_k0_a0p1}
    \end{subfigure}

    \vspace{0.8em}

    \begin{subfigure}[t]{0.49\linewidth}
        \centering
        \includegraphics[width=\linewidth]{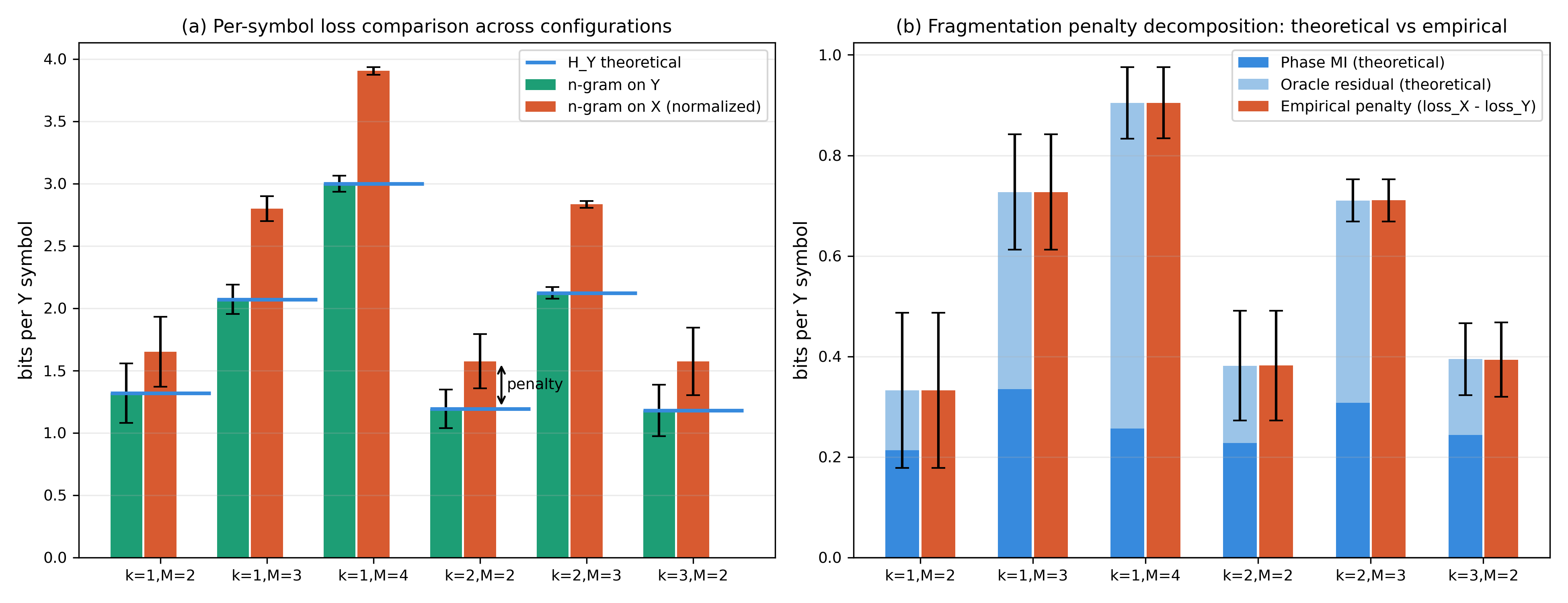}
        \caption{$K=0$, $\alpha=0.5$.}
        \label{fig:frag_decomp_k0_a0p5}
    \end{subfigure}
    \hfill
    \begin{subfigure}[t]{0.49\linewidth}
        \centering
        \includegraphics[width=\linewidth]{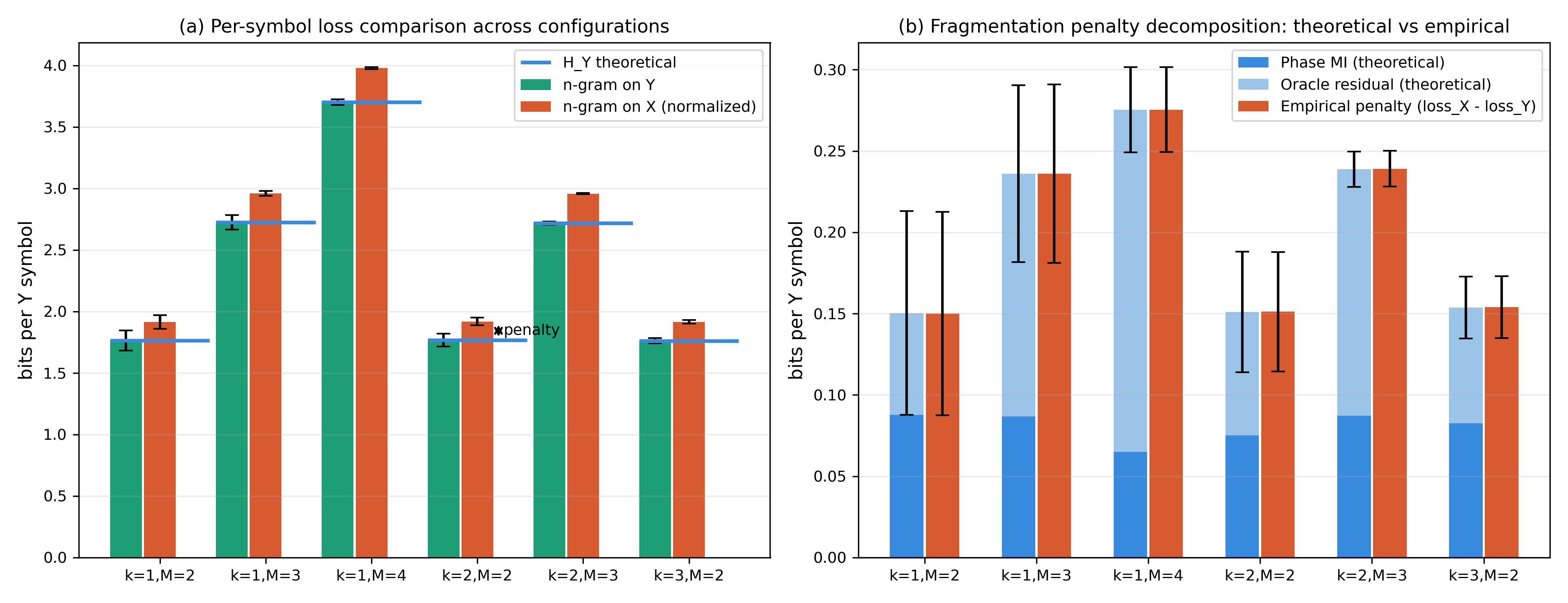}
        \caption{$K=0$, $\alpha=2$.}
        \label{fig:frag_decomp_k0_a2}
    \end{subfigure}

    \caption{
    Empirical verification of the fragmentation decomposition in
    Theorem~\ref{thm:fragmentation}. Each subfigure contains two panels. Left:
    per-source-symbol loss comparison between the theoretical source entropy,
    the $n$-gram loss on the original source sequence, and the normalized
    $n$-gram loss on the fragmented sequence. Right: comparison between the
    theoretical fragmentation penalty decomposition and the empirical penalty
    $\mathcal L_X^{\mathrm{frag}}-\mathcal L_Y$. Across settings, the empirical
    penalty closely follows the theoretical sum of phase ambiguity and context
    deficit.
    }
    \label{fig:fragmentation_decomposition_all}
\end{figure}


\subsection{Matched-Plateau Comparisons}
\label{app:matched_plateau}

Figure~\ref{fig:matched_plateau_curves} compares vocabulary/window pairs that
reach approximately the same plateau loss. The tokenized configurations achieve
this plateau with much smaller token windows than the binary baseline, and they
also reach it in fewer training iterations. This suggests that, in this
synthetic setting, tokenization improves both effective context and convergence
speed.

\begin{figure}[t]
    \centering
    \includegraphics[width=0.72\linewidth]{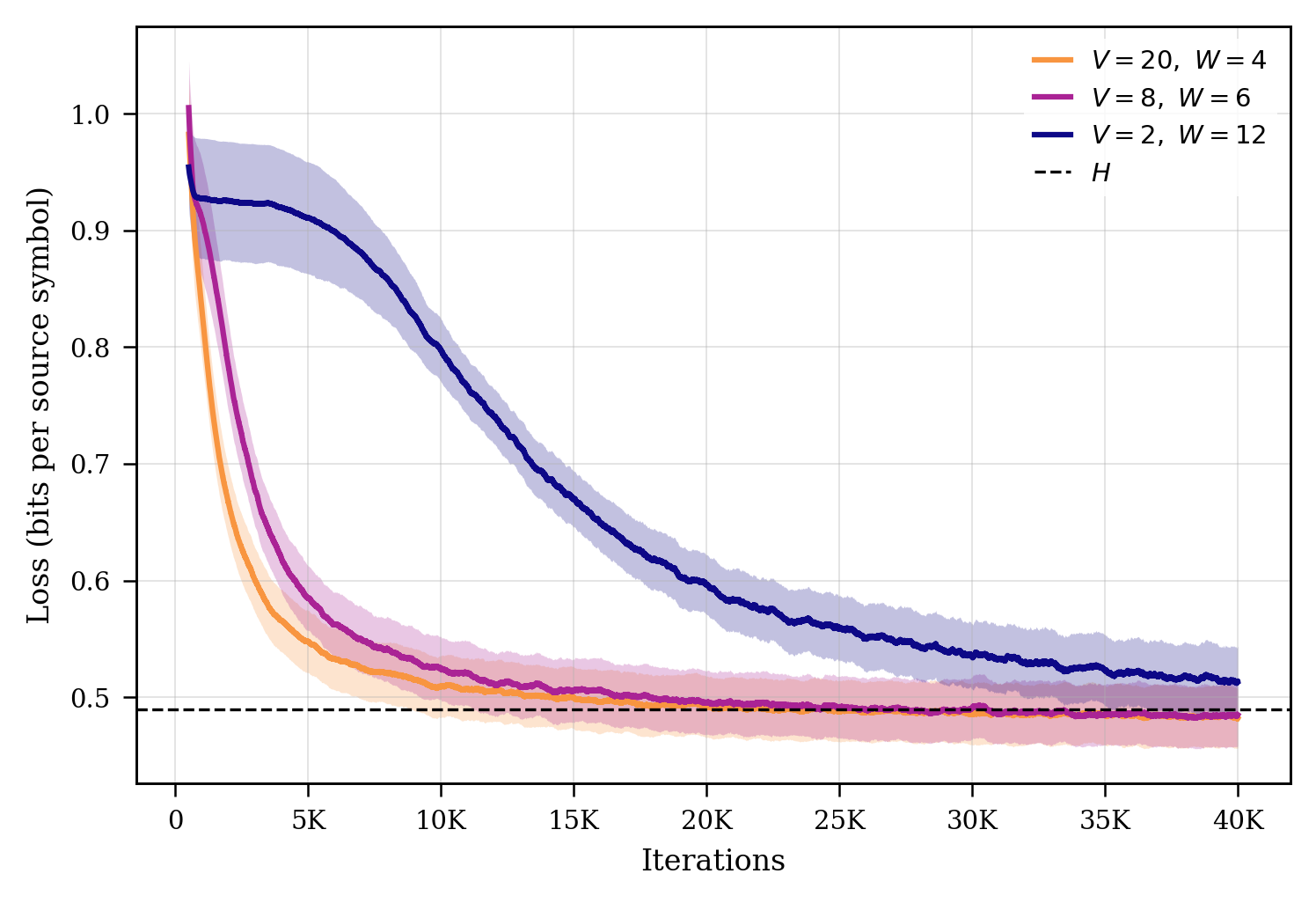}
    \caption{
    Matched-plateau comparison across vocabulary/window pairs on the synthetic
    Markov source. Larger-vocabulary tokenized models reach approximately the
    same entropy-rate plateau as the binary baseline using smaller token
    windows, and converge faster in training iterations. Shaded regions show
    variability across runs.
    }
    \label{fig:matched_plateau_curves}
\end{figure}

\subsection{Tokenizer Source-Span Distributions Across Window Sizes}
\label{app:cdf_extra}

We also evaluate the source-span behavior of real tokenizers on WikiText across
several token-window lengths. For each tokenizer and window size $w$, we
estimate the distribution of the source span $S(Z_1^w)$ and the corresponding
slack term from Theorem~\ref{thm:tokenization_gain},
\[
    \epsilon(w,w_s)\frac{\log_2|\mathcal Z|}{\alpha_{\mathcal Z}},
    \qquad
    \epsilon(w,w_s)
    =
    \Pr[S(Z_1^w)<w_s].
\]
Figures~\ref{fig:cdf_scaled_all_windows} and~\ref{fig:cdf_all_windows} show that
the same qualitative behavior persists across window sizes: tokenized windows
reliably span substantially more source characters than the raw character
window, and the slack remains small until the requested source span grows well
beyond $w$.

\begin{figure}[t]
    \centering

    \begin{subfigure}[t]{0.245\linewidth}
        \centering
        \includegraphics[width=\linewidth]{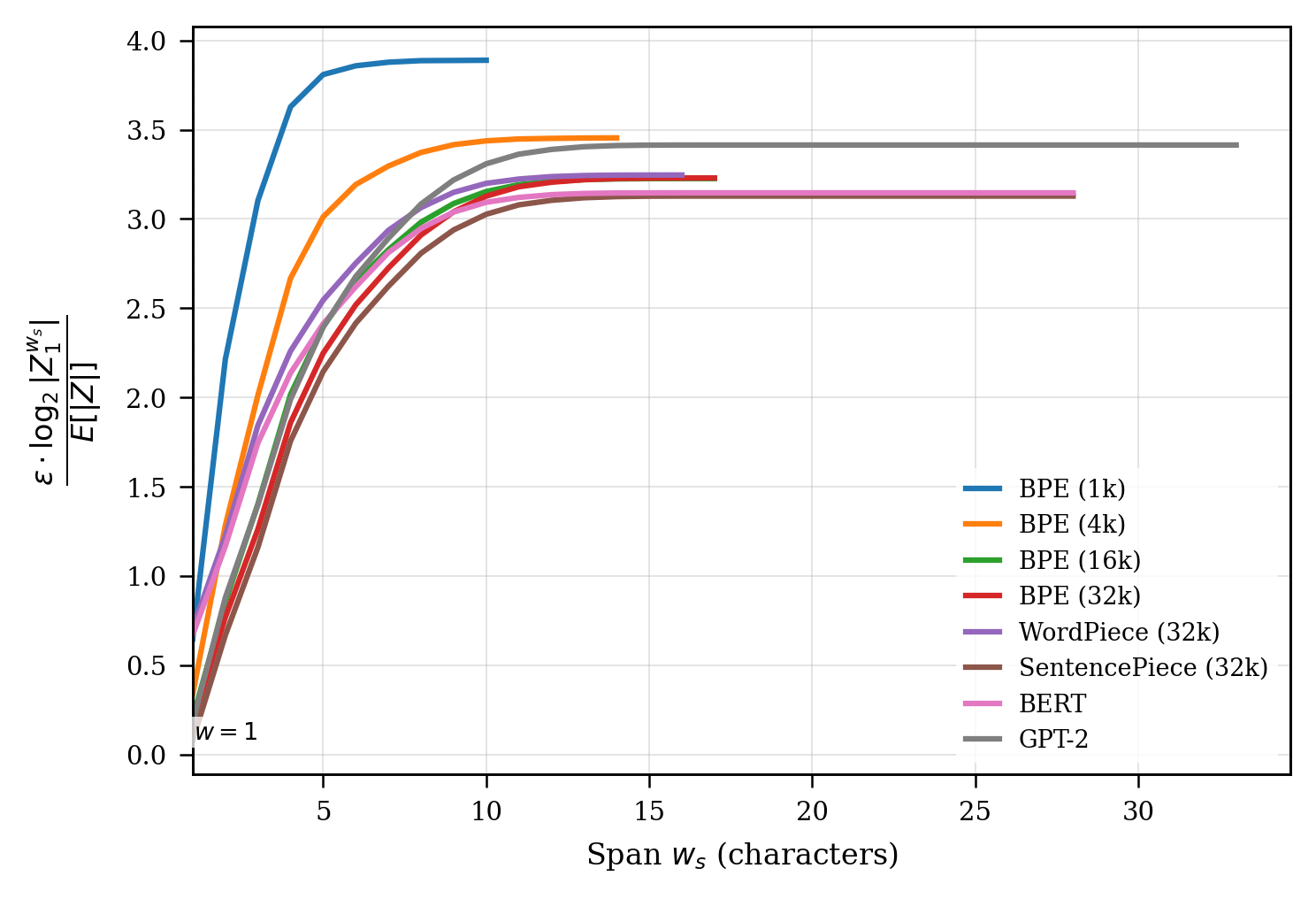}
        \caption{$w=1$}
    \end{subfigure}
    \begin{subfigure}[t]{0.245\linewidth}
        \centering
        \includegraphics[width=\linewidth]{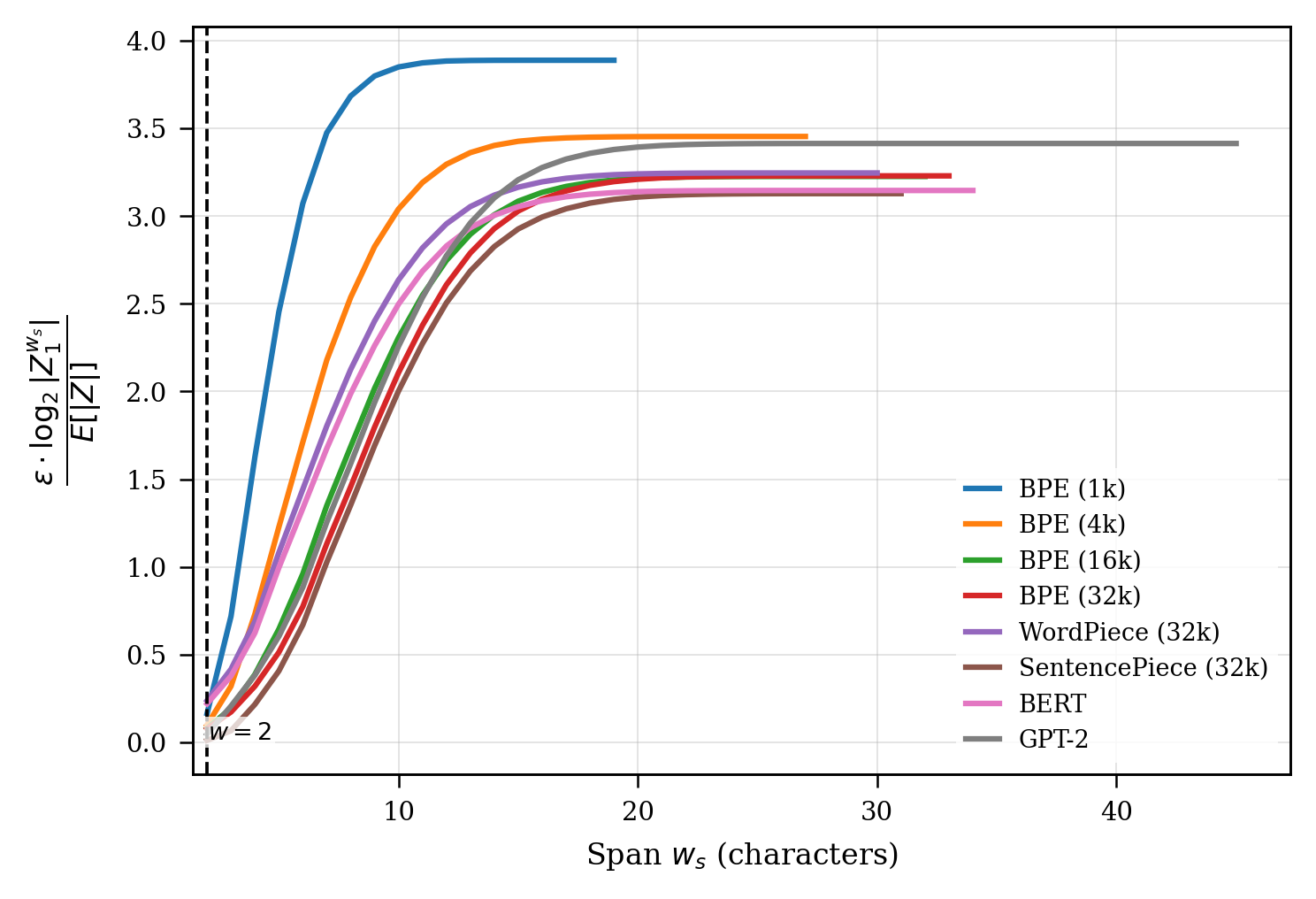}
        \caption{$w=2$}
    \end{subfigure}
    \begin{subfigure}[t]{0.245\linewidth}
        \centering
        \includegraphics[width=\linewidth]{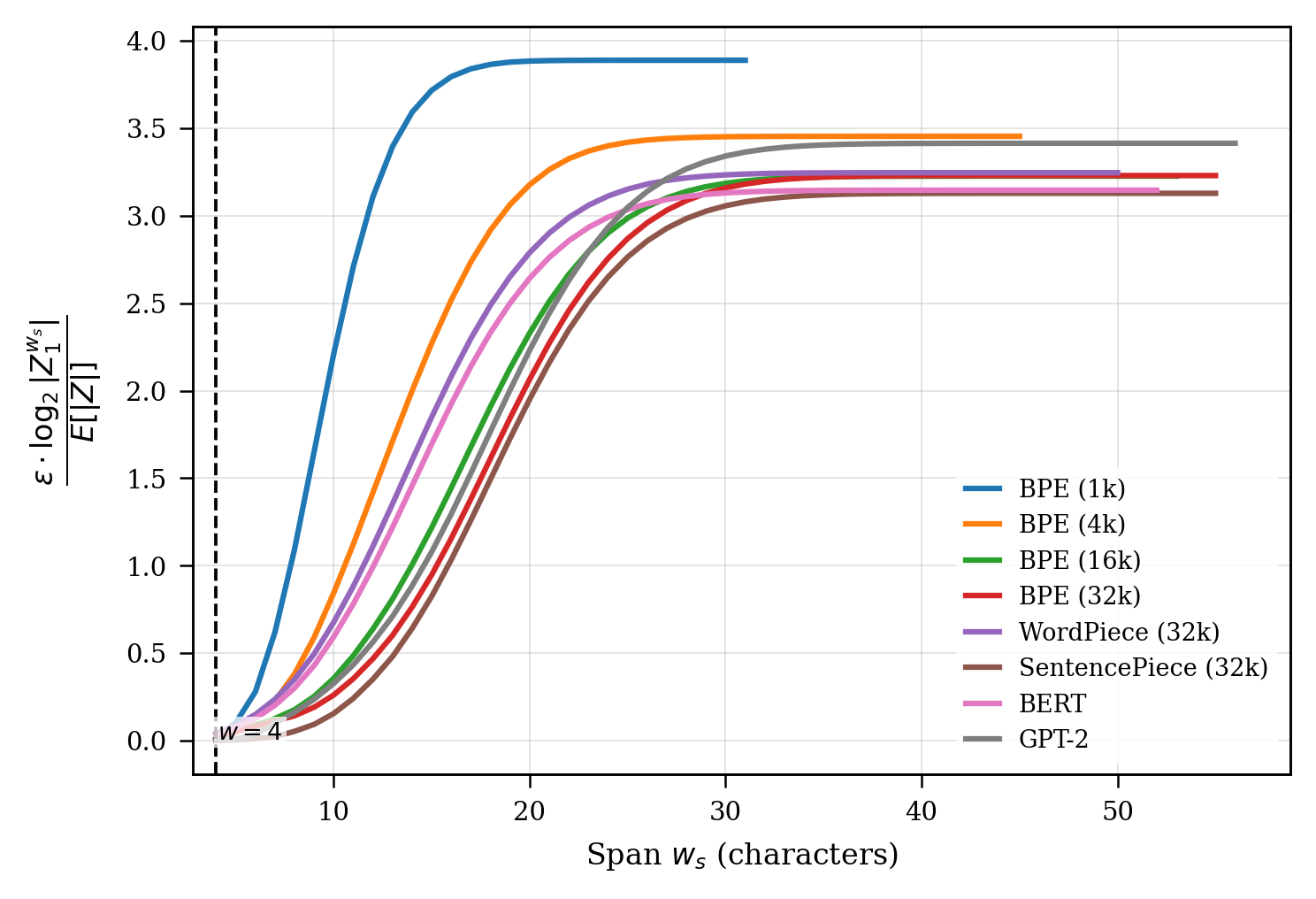}
        \caption{$w=4$}
    \end{subfigure}
    \begin{subfigure}[t]{0.245\linewidth}
        \centering
        \includegraphics[width=\linewidth]{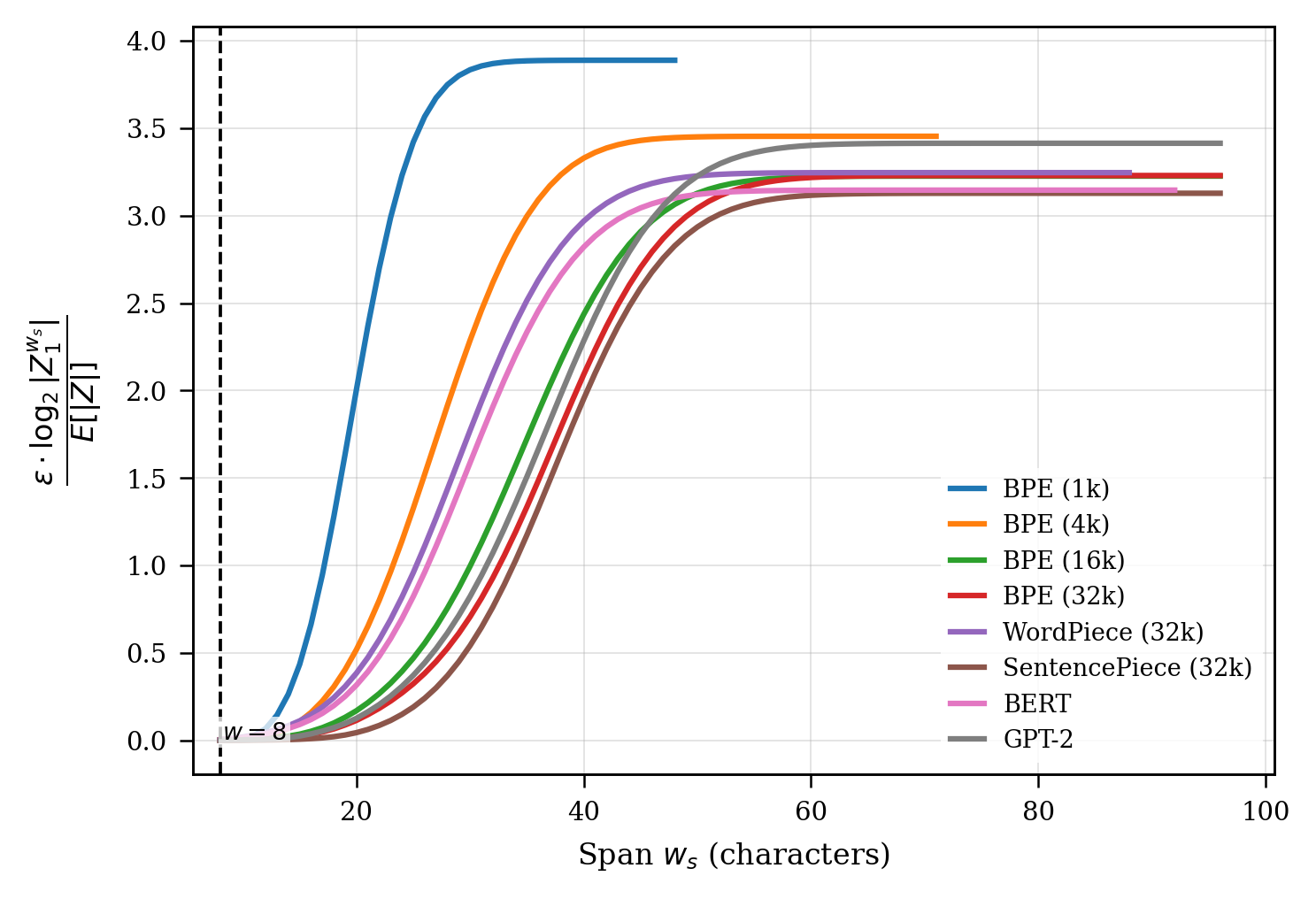}
        \caption{$w=8$}
    \end{subfigure}

    \vspace{0.4em}

    \begin{subfigure}[t]{0.245\linewidth}
        \centering
        \includegraphics[width=\linewidth]{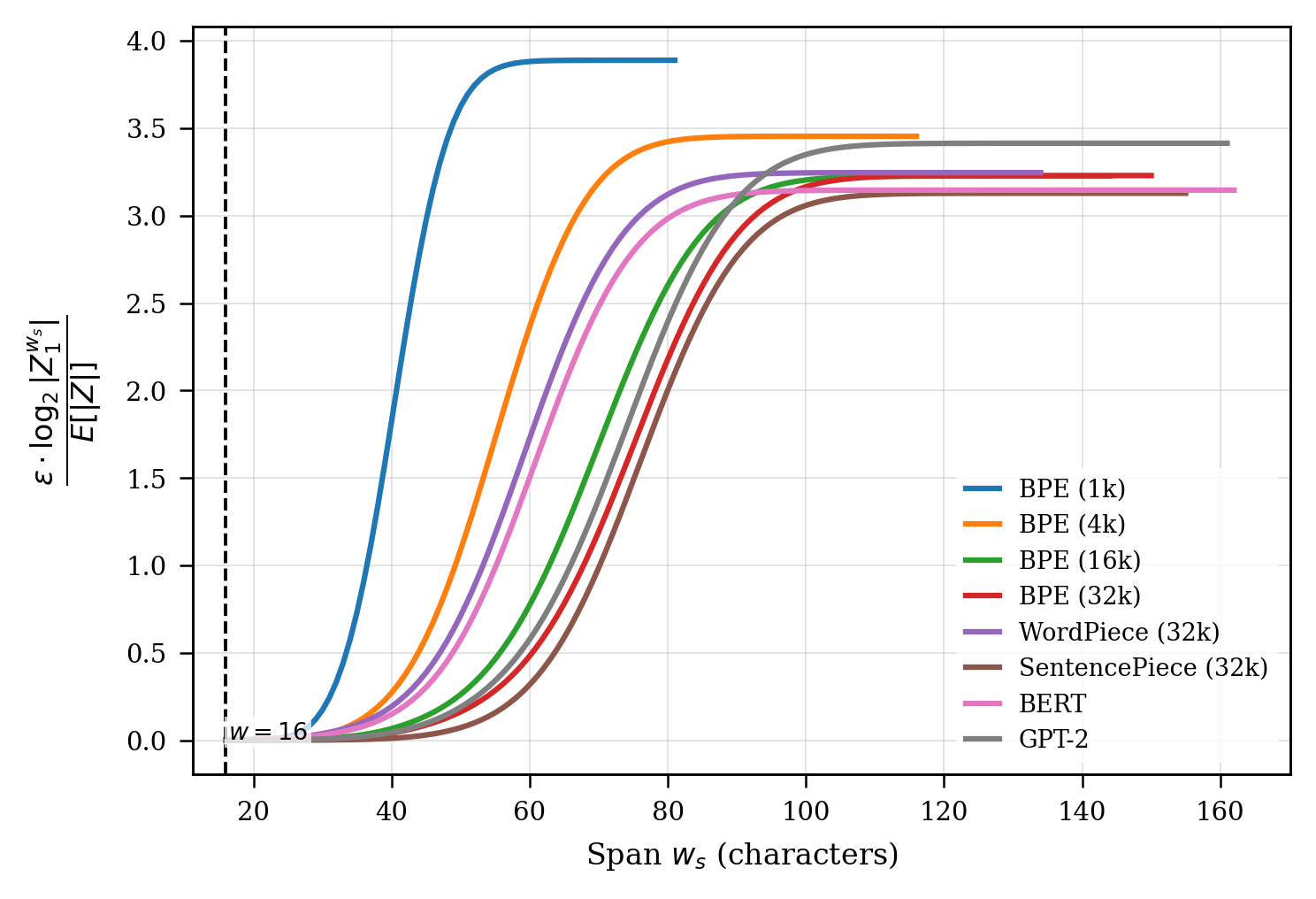}
        \caption{$w=16$}
    \end{subfigure}
    \begin{subfigure}[t]{0.245\linewidth}
        \centering
        \includegraphics[width=\linewidth]{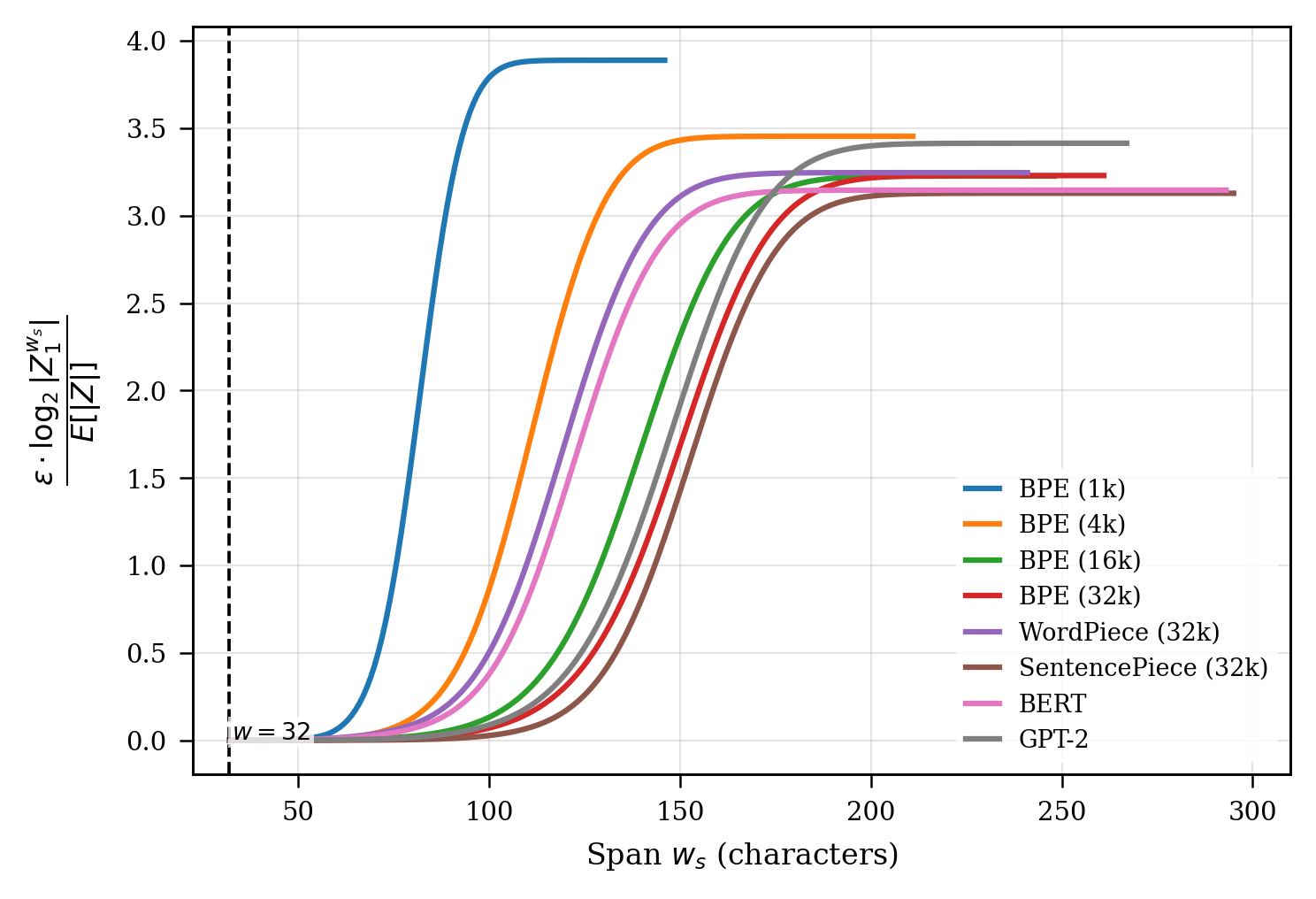}
        \caption{$w=32$}
    \end{subfigure}
    \begin{subfigure}[t]{0.245\linewidth}
        \centering
        \includegraphics[width=\linewidth]{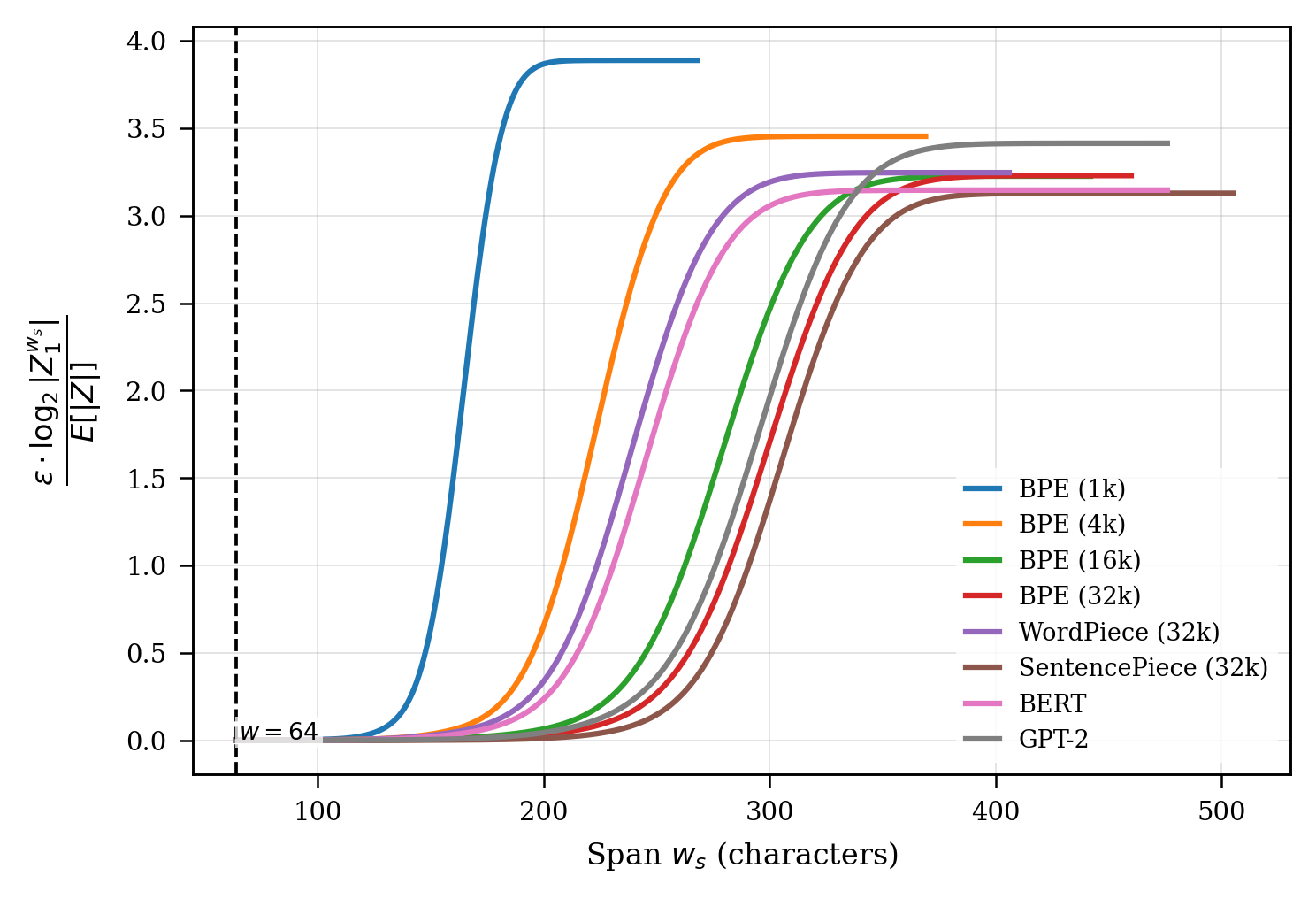}
        \caption{$w=64$}
    \end{subfigure}
    \begin{subfigure}[t]{0.245\linewidth}
        \centering
        \includegraphics[width=\linewidth]{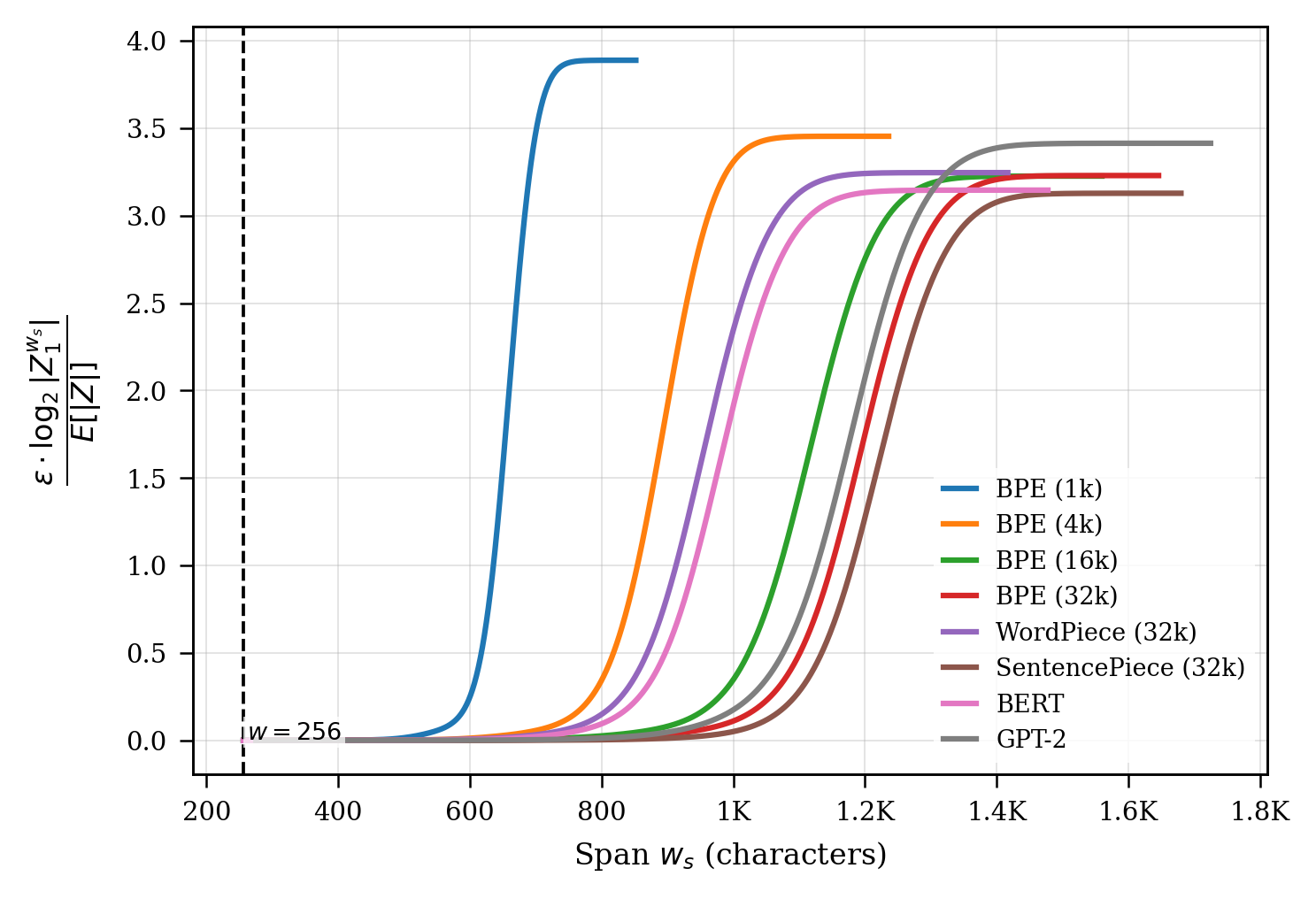}
        \caption{$w=256$}
    \end{subfigure}

    \caption{
    Slack term
    $\epsilon(w,w_s)\log_2|\mathcal Z|/\alpha_{\mathcal Z}$ from
    Theorem~\ref{thm:tokenization_gain} as a function of target source span
    $w_s$ on WikiText. Across window sizes, tokenizers allow the target source
    span to grow well beyond the token-window length before the slack becomes
    large.
    }
    \label{fig:cdf_scaled_all_windows}
\end{figure}
\newpage
\begin{figure}[t]
    \centering

    \begin{subfigure}[t]{0.245\linewidth}
        \centering
        \includegraphics[width=\linewidth]{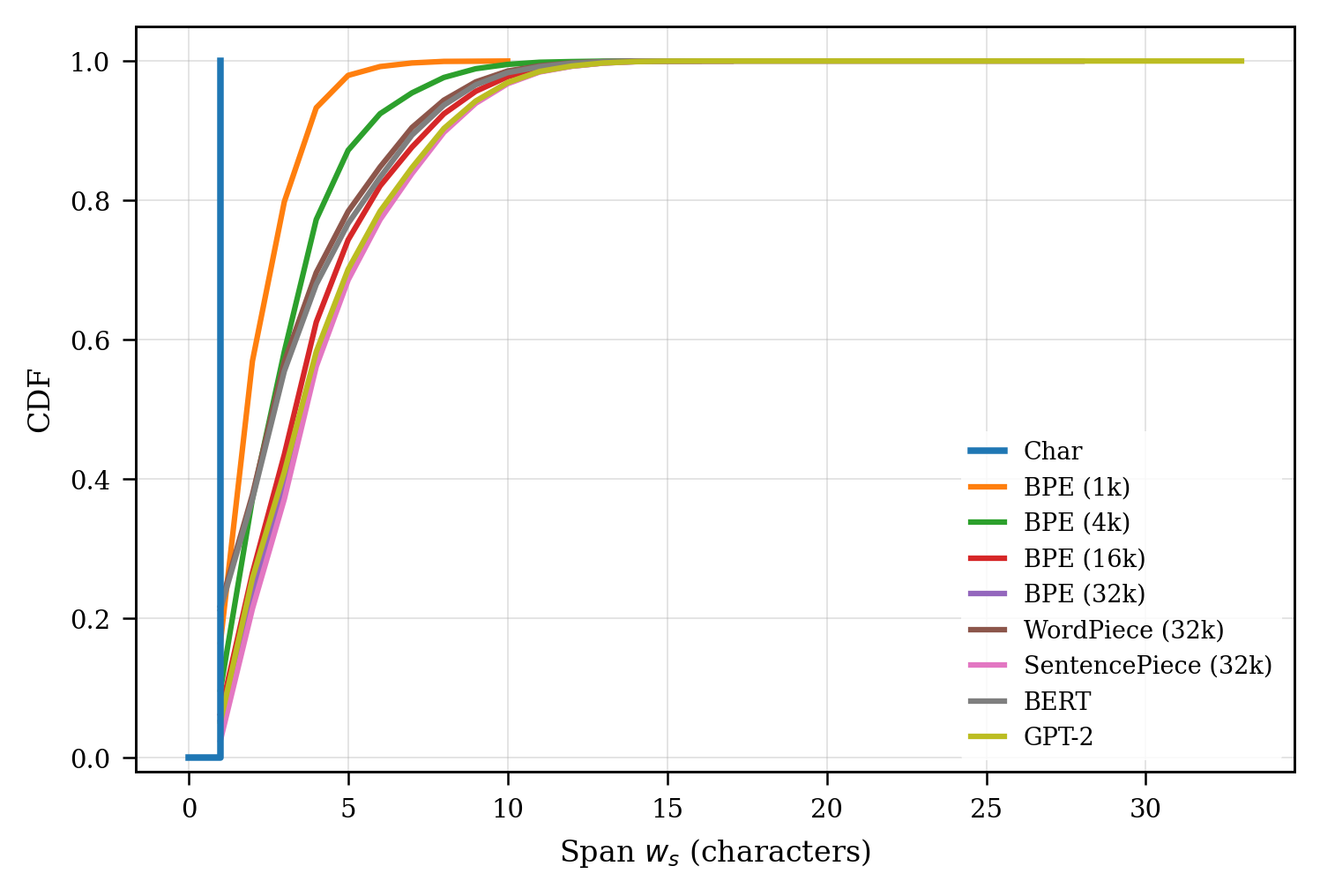}
        \caption{$w=1$}
    \end{subfigure}
    \begin{subfigure}[t]{0.245\linewidth}
        \centering
        \includegraphics[width=\linewidth]{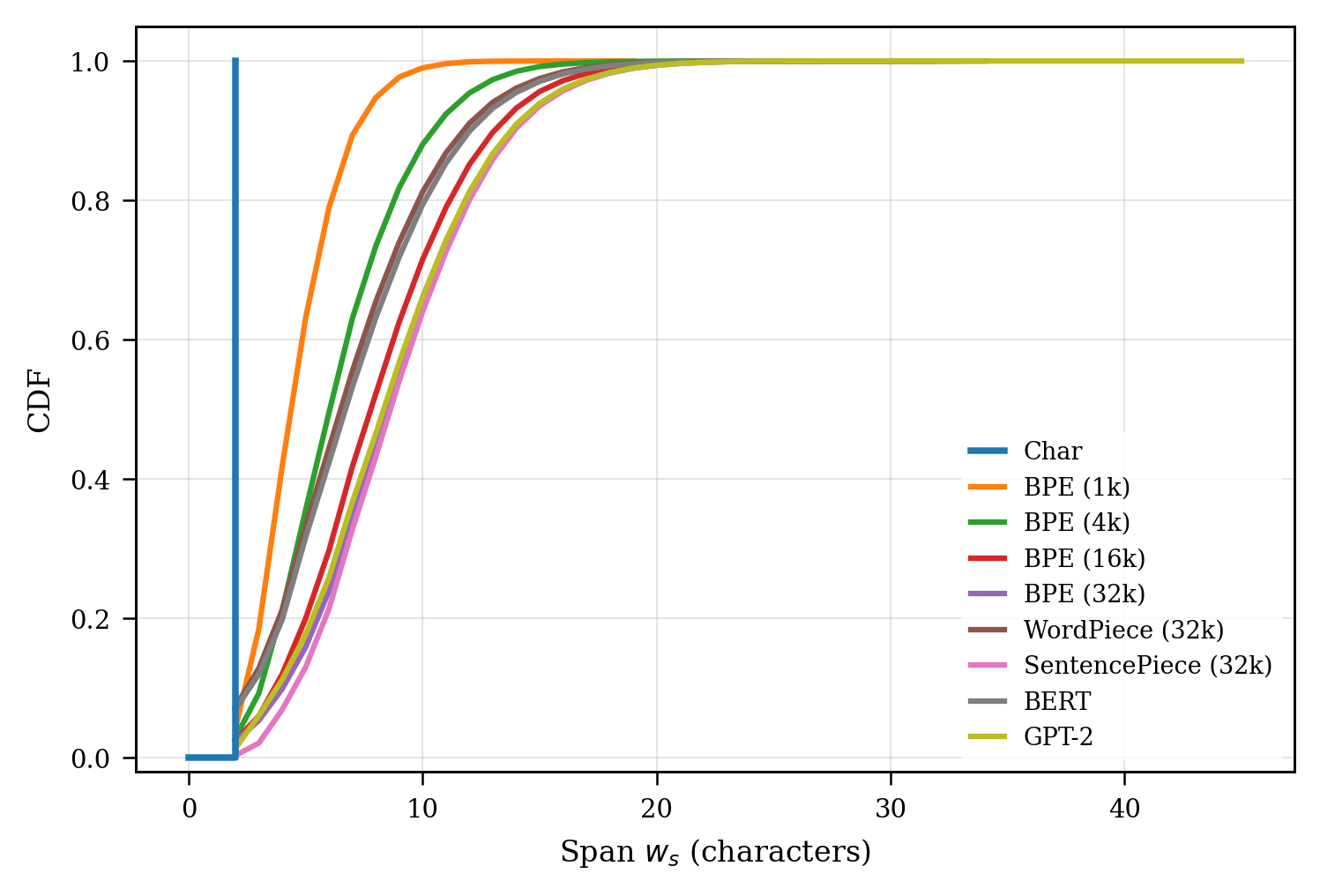}
        \caption{$w=2$}
    \end{subfigure}
    \begin{subfigure}[t]{0.245\linewidth}
        \centering
        \includegraphics[width=\linewidth]{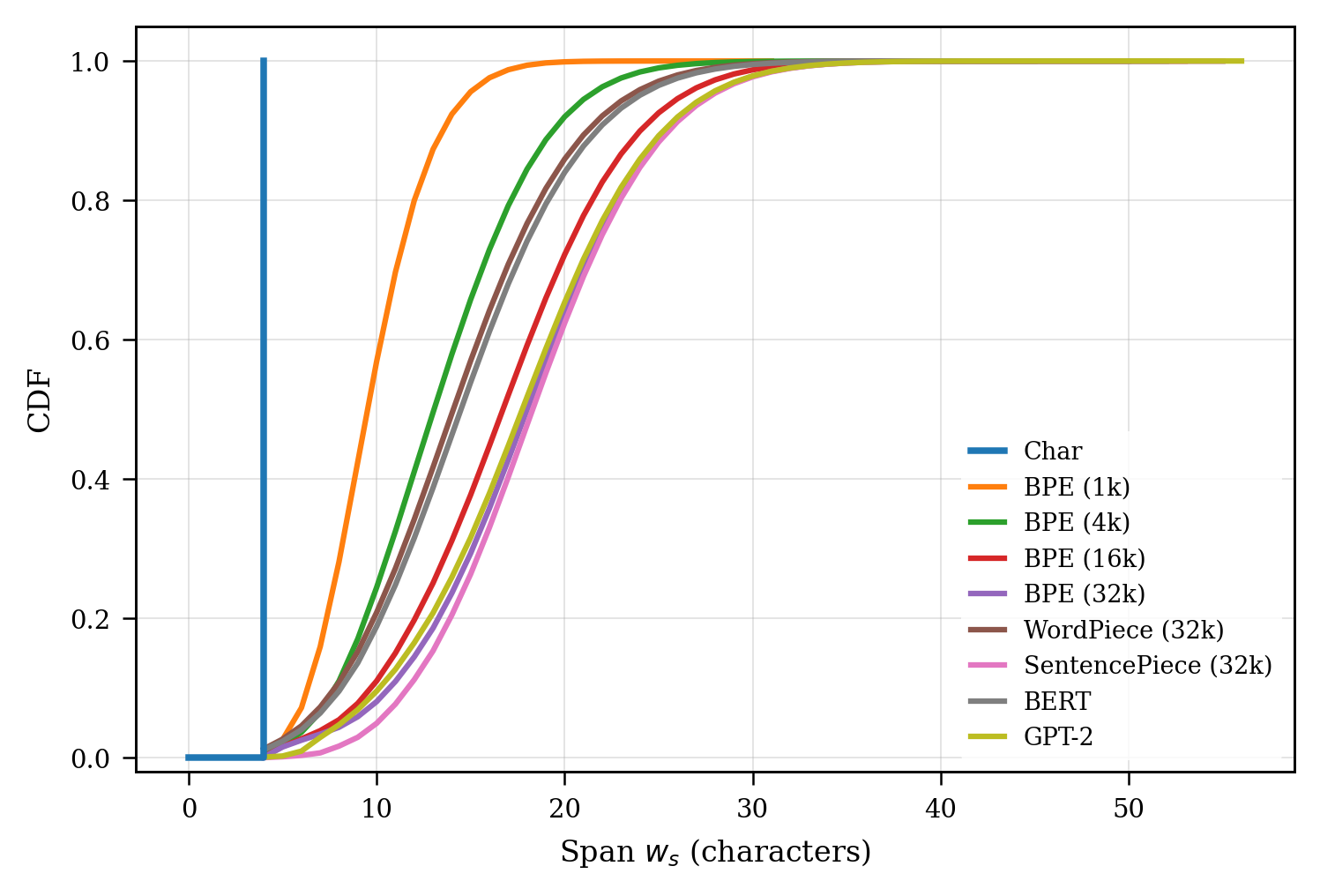}
        \caption{$w=4$}
    \end{subfigure}
    \begin{subfigure}[t]{0.245\linewidth}
        \centering
        \includegraphics[width=\linewidth]{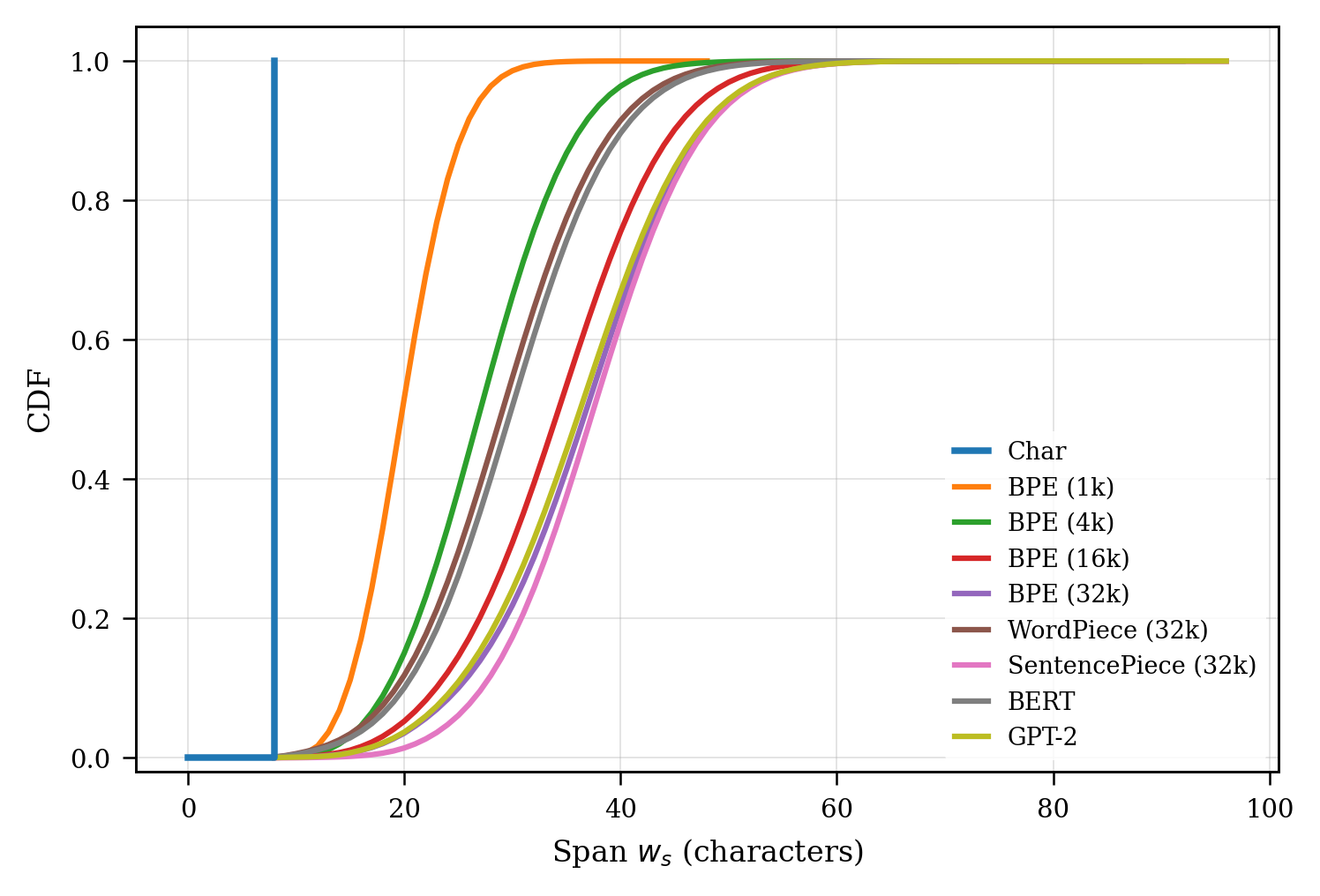}
        \caption{$w=8$}
    \end{subfigure}

    \vspace{0.4em}

    \begin{subfigure}[t]{0.245\linewidth}
        \centering
        \includegraphics[width=\linewidth]{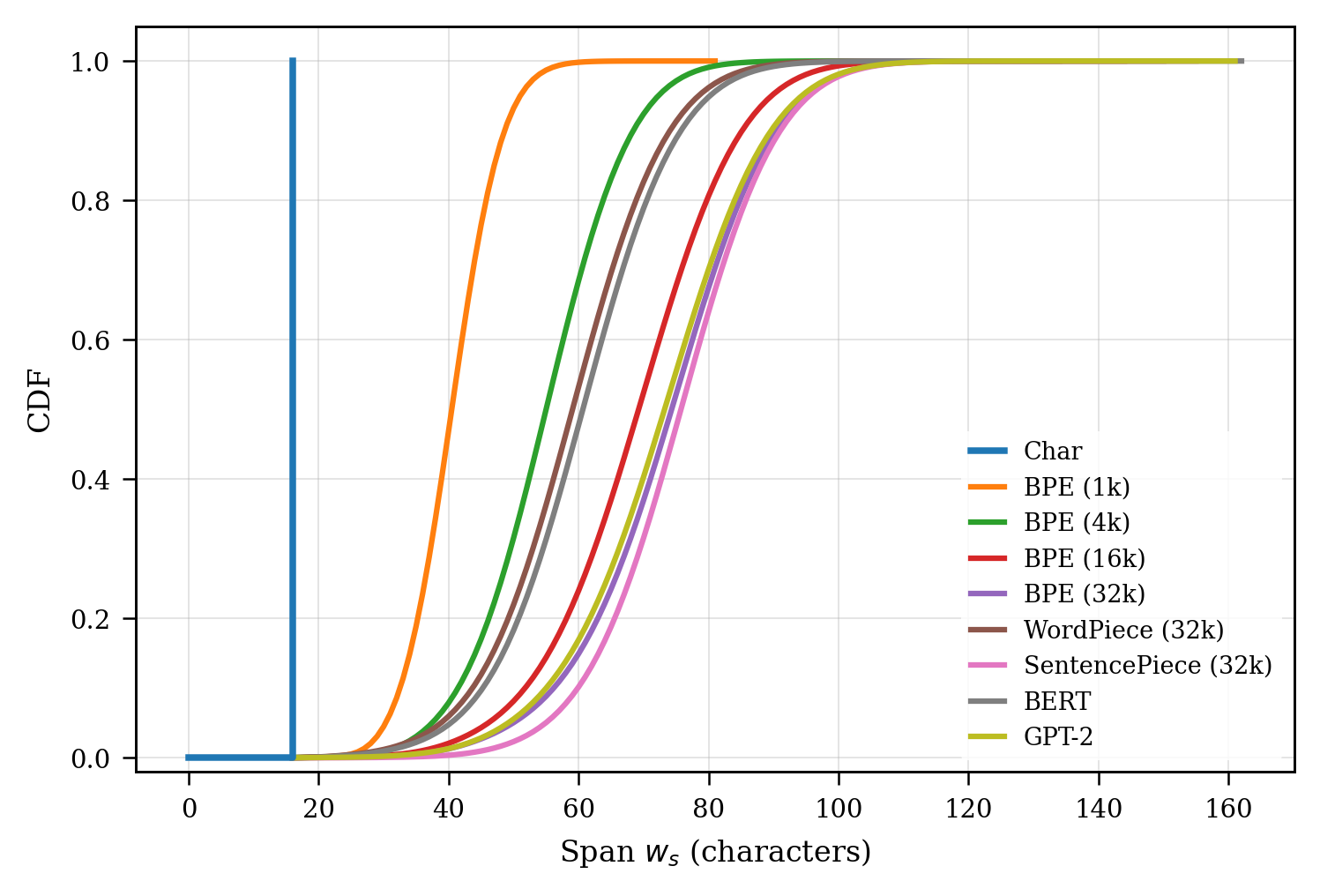}
        \caption{$w=16$}
    \end{subfigure}
    \begin{subfigure}[t]{0.245\linewidth}
        \centering
        \includegraphics[width=\linewidth]{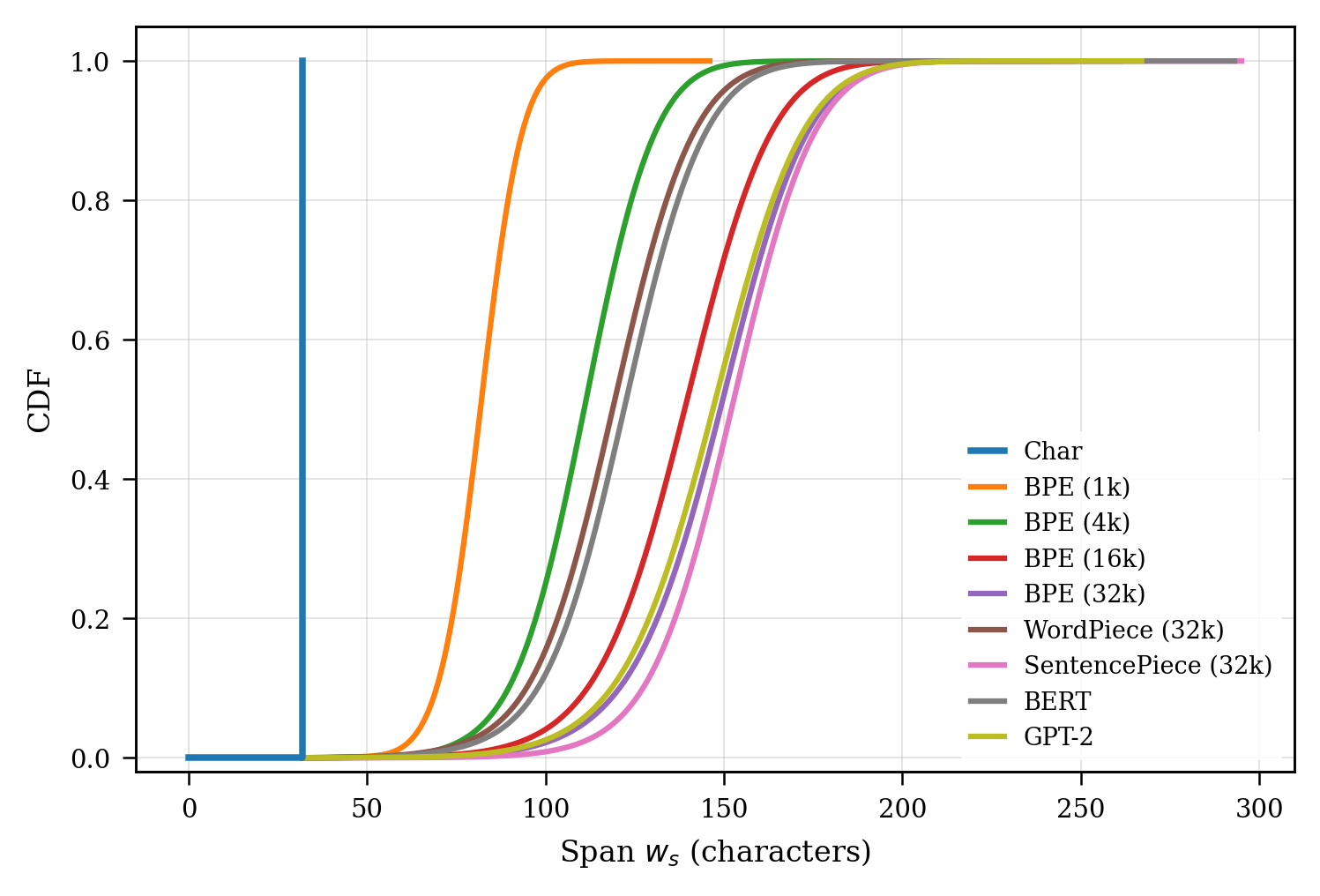}
        \caption{$w=32$}
    \end{subfigure}
    \begin{subfigure}[t]{0.245\linewidth}
        \centering
        \includegraphics[width=\linewidth]{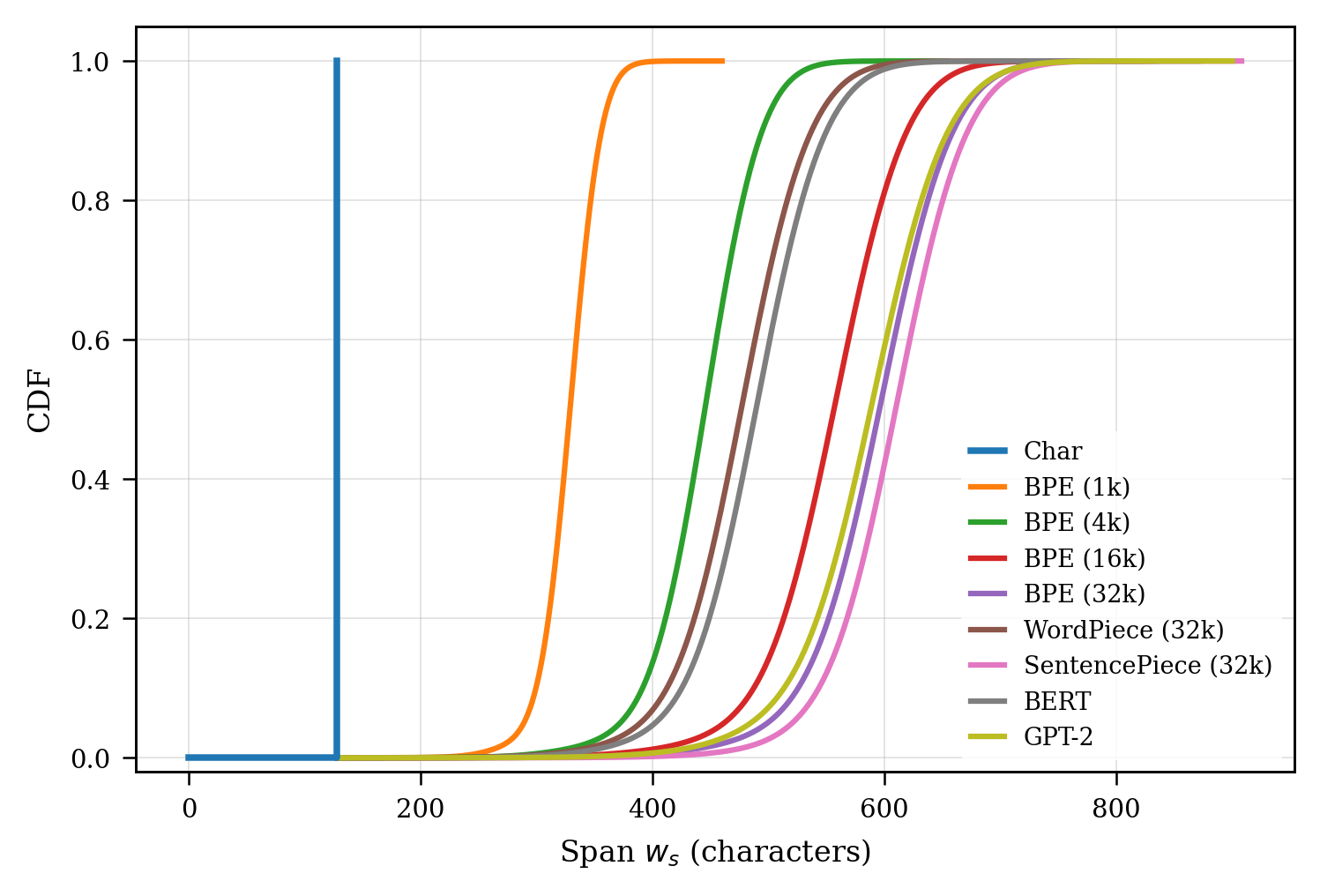}
        \caption{$w=128$}
    \end{subfigure}
    \begin{subfigure}[t]{0.245\linewidth}
        \centering
        \includegraphics[width=\linewidth]{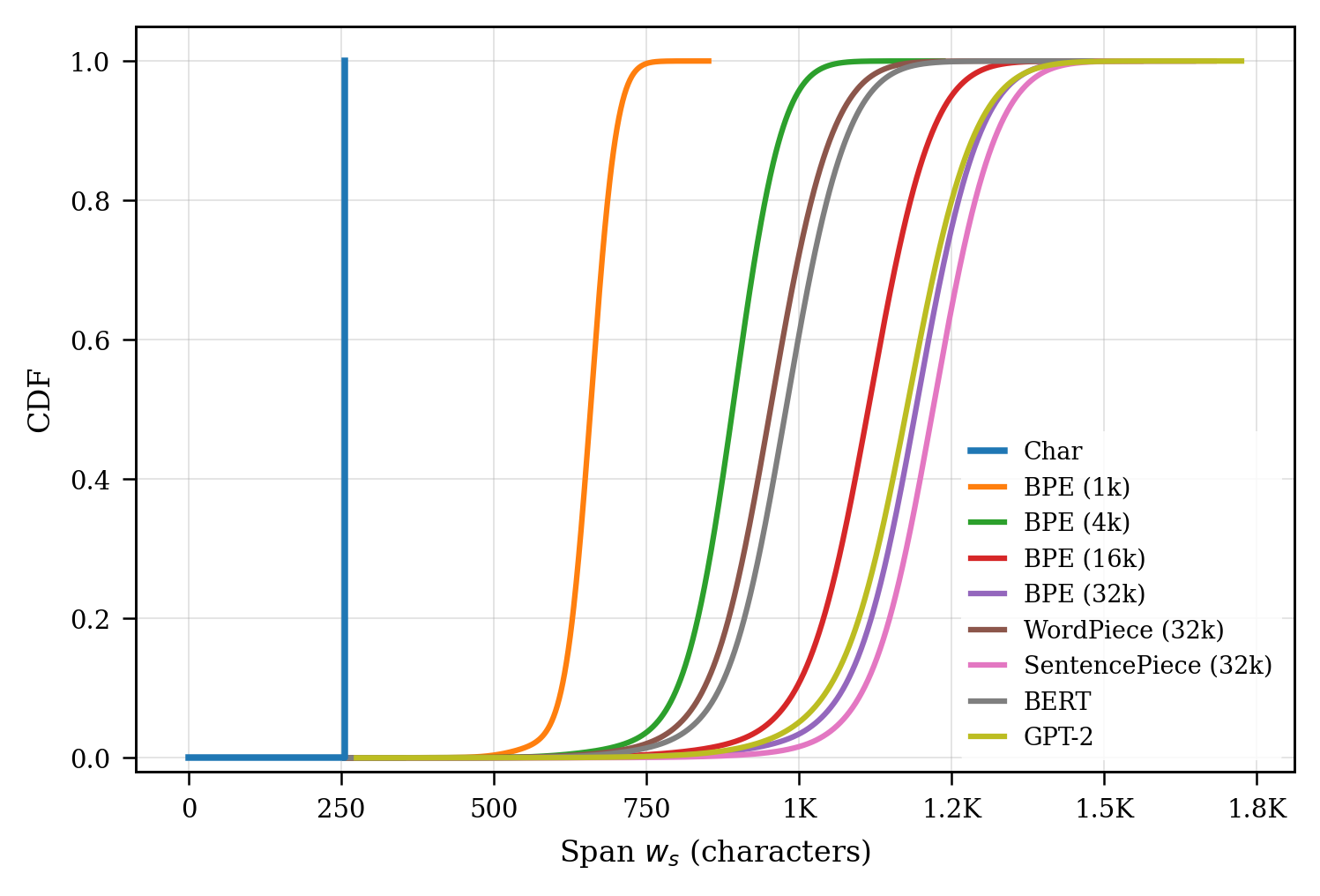}
        \caption{$w=256$}
    \end{subfigure}

    \caption{
    Empirical CDFs of the source span $S(Z_1^w)$ on WikiText for different
    token-window lengths. The character-level representation has deterministic
    span $w$, while tokenized representations shift the distribution to much
    larger source spans. This shows that the effective-context behavior observed
    in the main text is stable across window sizes.
    }
    \label{fig:cdf_all_windows}
\end{figure}

    \newpage

    \end{document}